\documentclass{article}

\usepackage[main, final]{neurips_2026}
\usepackage[utf8]{inputenc}
\usepackage[T1]{fontenc}
\usepackage{amsmath,amssymb,amsthm,mathtools,amsfonts}
\usepackage{booktabs}
\usepackage{microtype}
\usepackage{xcolor}
\usepackage{enumitem}
\usepackage{array}
\usepackage{nicefrac}
\usepackage{url}
\usepackage{hyperref}
\usepackage{algorithm}
\usepackage{algpseudocode}
\usepackage{booktabs}
\usepackage{multirow}
\hypersetup{colorlinks=true,linkcolor=blue!55!black,citecolor=blue!55!black,urlcolor=blue!55!black}
\setcitestyle{numbers,square,comma,sort&compress}
\newtheorem{theorem}{Theorem}[section]
\newtheorem{proposition}[theorem]{Proposition}
\newtheorem{lemma}[theorem]{Lemma}
\newtheorem{corollary}[theorem]{Corollary}
\newtheorem{assumption}[theorem]{Assumption}
\theoremstyle{definition}
\newtheorem{definition}[theorem]{Definition}
\theoremstyle{remark}

\newcommand{\E}{\mathbb{E}}
\newcommand{\R}{\mathbb{R}}

\newcommand{\calS}{\mathcal{S}}
\newcommand{\calZ}{\mathcal{Z}}
\newcommand{\calV}{\mathcal{V}}

\newcommand{\Reg}{\mathrm{Reg}}

\newcommand{\argmin}{\operatorname*{arg\,min}}

\title{Decision-Weighted Flow Matching for Contextual Stochastic Optimization}

\author{
Jize Xie\thanks{Equal contribution.} \\
  Department of Industrial Engineering and Decision Analytics\\
  Hong Kong University of Science and Technology \\
  \texttt{jxiebj@connect.ust.hk} \\
\And
Haomiao Wu\footnotemark[1] \\
  Big Data Institute\\
  Central South University \\
  \texttt{wuhaomiao@csu.edu.cn}\\
\AND
Qiang Chen\\
  Department of Industrial Engineering and Decision Analytics\\
  Hong Kong University of Science and Technology \\
  \texttt{qchencw@connect.ust.hk}
\And
Xiu Su\thanks{Corresponding authors.} \\
  Big Data Institute\\
  Central South University \\
  \texttt{xiusu1994@csu.edu.cn}
\AND
Yi Chen\footnotemark[2] \\
  Department of Industrial Engineering and Decision Analytics\\
  Hong Kong University of Science and Technology \\
  \texttt{yichen@ust.hk}
}

\date{}
\makeatletter
\renewcommand{\@notice}{}
\makeatother

\begin{document}
\maketitle
\thispagestyle{empty}
\pagestyle{empty}
\begin{abstract}
Conditional generative models are increasingly used as scenario generators for
stochastic optimization, but standard training objectives emphasize uniform
distributional fit rather than the downstream decisions induced by generated
scenarios. This creates an objective mismatch: errors in statistically common
regions may have little effect on regret, whereas errors in decision-sensitive
regions can substantially change the optimal action. We propose
Decision-Weighted Flow Matching (DW-FM), a regret-aligned training framework
that preserves the simplicity of standard flow matching while reweighting its
velocity-regression objective using decision-sensitive endpoint information.
Theoretically, we connect downstream regret to pathwise velocity mismatch
through a loss-induced decision discrepancy and an adjoint transport argument,
yielding an ideal regret-aligned surrogate and practical endpoint-weighted
objectives with regret guarantees. Empirically, we demonstrate the effectiveness of DW-FM on three CVaR-based contextual stochastic
optimization benchmarks spanning synthetic portfolio, semi-real financial, and
traffic-CVaR tasks, where DW-FM improves downstream regret over standard baselines.
\end{abstract}

\section{Introduction}

Conditional generative models are a natural interface between prediction and
stochastic optimization. Given a context, a model generates
scenarios for uncertain outcomes, and a downstream solver chooses a decision
from the induced distribution. This distributional view is important in
applications such as inventory control, portfolio allocation, and risk-sensitive
planning, where decisions may depend on tail events or multimodal uncertainty
rather than on mean prediction alone~\citep{shapiro2009lectures,
rockafellar2000optimization,wang2025gendfl,zhao2025diffusiondfl}. However, in such pipelines the learned conditional law is only an intermediate object. The final objective is not distribution matching itself, but low downstream regret. 

This creates an objective mismatch. Standard generative objectives allocate
training effort according to distributional fit: errors are weighted by how often
the corresponding regions are sampled by the training objective. Stochastic
optimization weights errors differently. An error in a decision-sensitive tail
region can substantially change the optimizer, while an error in a common but
decision-irrelevant region may have little effect on the final decision. Thus, a
scenario generator can fit the conditional distribution well on average but still
make the errors that matter most for regret.

While decision-focused learning addresses a related mismatch for predictive models by
training them through the decisions they induce rather than through prediction
loss alone~\citep{donti2017task,elmachtoub2022smart,wilder2019melding,
ferber2020mipaal}, much of this literature treats the learned object as a point prediction,
deterministic surrogate, or differentiable optimization layer. Recent generative decision-learning methods move to generative or diffusion-based scenario generators, often
combined with end-to-end decision-gradient training
~\citep{wang2025gendfl,zhao2025diffusiondfl}. Our focus is more targeted:
\textit{given a conditional generator, can its training loss be
made regret-aligned so that fitting effort is concentrated on decision-relevant
regions?}

We study this question in the context of flow matching. Conditional flow matching trains a
scenario generator by local velocity regression along interpolation
paths~\citep{lipman2023flow}. This structure is attractive because the
regression loss can be reweighted without changing the architecture, velocity
labels, sampling procedure, or downstream solver. The key challenge is to choose weights that are
principled for regret: downstream regret is a terminal, optimizer-level quantity,
whereas flow matching is trained through local pathwise velocity errors.

We propose \textbf{Decision-Weighted Flow Matching} (DW-FM), a regret-aligned
training framework for contextual stochastic optimization. DW-FM first connects
regret to an ideal adjoint-weighted pathwise velocity error, and then replaces
this ideal but generally intractable weight with a computable endpoint
sensitivity score. The resulting algorithm is a plug-in modification of
standard conditional flow matching: it simply multiplies each per-sample FM
regression loss by a decision-sensitive endpoint weight.
Our theory justifies this reweighting as a regret-aligned surrogate. We show
that downstream regret is controlled by a loss-induced decision discrepancy, and
then use an adjoint transport argument to relate this discrepancy to weighted
pathwise velocity error. Finally, we prove
a regret bound that separates the trainable weighted excess risk from the bias introduced by reweighting, with a finite-sample extension. Empirically, DW-FM reduces downstream regret on synthetic portfolio, semi-real
financial, and PEMS-BAY traffic-CVaR benchmarks, outperforming standard
baselines in the main comparisons and improving downside-tail diagnostics.

To sum up, our contributions are as follows:
\begin{itemize}
    \item We formulate decision-aligned conditional scenario generation for
    contextual stochastic optimization through a loss-induced decision
    discrepancy that directly bounds regret.

    \item We propose {Decision-Weighted Flow Matching} (DW-FM), a
    plug-in endpoint-weighted objective for conditional flow matching that
    changes only the per-sample regression weights in standard FM regression.

    \item 
    We provide theoretical guarantees for DW-FM. We derive an ideal
    adjoint-weighted pathwise surrogate, characterize the population target
    induced by endpoint weighting, and prove a regret bound in terms of
    weighted excess risk and tilting bias, with a finite-sample
    extension.

    \item Extensive experiments on three CVaR-based benchmarks show that DW-FM
reduces downstream regret relative to Uniform FM, improves over standard
predict-then-optimize and decision-learning baselines in the main comparisons,
and yields better downside-tail diagnostics.
\end{itemize}

\section{Related Work}

\noindent\textbf{Decision-Focused Learning and Stochastic Optimization.}
Decision-focused learning and stochastic optimization argue that predictive models should be evaluated by the decisions they induce, not only by prediction error. Classical stochastic programming provides the optimization foundation for decision-making under uncertainty~\citep{shapiro2009lectures, bertsimas2020predictive, kallus2023stochastic}, while task-based end-to-end learning and SPO/SPO+ explicitly train predictors for downstream decision quality~\citep{donti2017task,elmachtoub2022smart}. Subsequent work extends this principle to combinatorial and differentiable optimization layers~\citep{amos2017optnet,wilder2019melding,ferber2020mipaal,agrawal2019differentiable,berthet2020learning,mandi2024decision}. However, much of this line focuses on point predictions or deterministic surrogates passed to a solver. Our setting is different: the learned object is a full conditional law used as a scenario generator. We therefore ask how regret should reshape the surrogate used to train the conditional generator itself.

\noindent\textbf{Conditional Generative Models for Decision Making.}
Generative models are increasingly used to represent conditional uncertainty
through samples, including probabilistic forecasting and diffusion-based
time-series models~\citep{rasul2021autoregressive,yan2021scoregrad}.
Recent
generative decision-learning methods bring this idea into downstream optimization:
Gen-DFL uses generative modeling to capture uncertainty and improve robust
decision quality, while Diffusion-DFL trains diffusion predictors for stochastic
optimization using reparameterization or score-function estimators
~\citep{wang2025gendfl,zhao2025diffusiondfl}. These works establish the value of
distributional scenario generation for decision making. Our focus is
complementary: we keep the flow-matching backbone fixed and redesign the
generative surrogate so that fitting effort is allocated according to downstream
regret sensitivity.

\noindent\textbf{Flow Matching and Weighted Surrogate Design.}
Flow matching provides a simulation-free framework for continuous-time
generative modeling by reducing training to velocity-field regression along
probability paths~\citep{lipman2023flow}. Related interpolation-based and
straight-path generative modeling frameworks include rectified flow and
stochastic interpolants~\citep{liu2022flow,albergo2022building, albergo2025stochastic}, while conditional and optimal-transport flow matching
further improve training stability and path design for continuous normalizing
flows~\citep{tong2024improving}. This pathwise regression form makes flow
matching a natural place to introduce decision-sensitive weighting.
Related cost-sensitive and importance-weighted learning methods also reweight
samples according to task relevance, class imbalance, or distribution shift
~\citep{elkan2001foundations,zadrozny2003cost,shimodaira2000improving,
sugiyama2007covariate,byrd2019effect}. In contrast to generic reweighting,
DW-FM derives its weight from regret control in conditional stochastic
optimization, yielding an adjoint-weighted ideal surrogate and a practical
endpoint-weighted plug-in objective that changes only the per-sample FM loss.

\section{Problem Formulation}
\label{sec:problem_formulation}
We study a predict-then-optimize pipeline for contextual stochastic optimization. In the prediction stage, given an observed context
\(\mathbf{x}\in\mathcal{X}\), the learning system produces a conditional
scenario generator for the uncertain quantity
\(\mathbf{S}\in\mathcal{S}\subseteq\mathbb{R}^d\). This generator induces a
conditional distribution and is accessed through generated scenarios. In the
optimization stage, the induced conditional
distribution is passed to a downstream stochastic optimization solver, which
chooses a decision \(\mathbf{z}\in\mathcal{Z}\subseteq\mathbb{R}^m\) to minimize
expected scenario loss.
Let $q_{\mathbf{x}}^\star$ denote the true conditional law of $\mathbf{S}$ given
$\mathbf{X}=\mathbf{x}$. For any candidate probability law $q$ on $\mathcal{S}$, define the
context-specific risk
$R_{\mathbf{x}}(\mathbf{z};q)
:=
\mathbb{E}_{\mathbf{S}\sim q}
\!\left[
\ell_{\mathbf{x}}(\mathbf{z},\mathbf{S})
\right].$
If the true law were known, the optimal decision would be
$\mathbf{z}_{\mathbf{x}}^\star
\in
\argmin_{\mathbf{z}\in\mathcal{Z}}
R_{\mathbf{x}}(\mathbf{z};q_{\mathbf{x}}^\star).$

In practice, $q_{\mathbf{x}}^\star$ is usually unknown, but we observe an i.i.d.\ training dataset $\mathcal{D} = \{(\mathbf{x}_i, \mathbf{s}_i)\}_{i=1}^n$ to train a prediction model, where the pairs are independent realizations of \((\mathbf X,\mathbf S)\). Conditional generative models offer a natural modeling framework for this task. By learning an expressive conditional law $q_{\theta,\mathbf{x}}$, parameterized by trainable
parameters $\theta$, the model serves as a data-driven scenario generator. 
Given the learned generative surrogate, the downstream solver returns the plug-in decision $\mathbf{z}_{\theta}(\mathbf{x}) \in \argmin_{\mathbf{z} \in \mathcal{Z}} R_{\mathbf{x}}(\mathbf{z}; q_{\theta,\mathbf{x}})$. We evaluate the utility of this model by the {decision regret} it incurs under the true conditional law:
\[
\mathrm{Reg}_{\mathbf{x}}(\theta)
:=
R_{\mathbf{x}}\!\left(\mathbf{z}_{\theta}(\mathbf{x}); q_{\mathbf{x}}^\star\right)
-
R_{\mathbf{x}}\!\left(\mathbf{z}_{\mathbf{x}}^\star; q_{\mathbf{x}}^\star\right).
\]
Our objective is to minimize the expected regret, $\mathbb E\!\left[\mathrm{Reg}_{\mathbf{X}}(\theta)\right]$. 
This criterion highlights the mismatch between ordinary distribution matching
and decision quality. A generative model may approximate the true conditional
distribution well in a global statistical sense while still making errors in
regions that strongly affect the optimizer. We
therefore need a metric that measures distributional error through the
downstream loss class rather than through a generic distributional distance.

\begin{definition}[Decision discrepancy]
For any probability laws $q$ and $q'$ on $\mathcal{S}$, define
\[
d_{\mathrm{dec},\mathbf{x}}(q,q')
:=
\sup_{\mathbf{z} \in \mathcal{Z}}
\left|
R_{\mathbf{x}}(\mathbf{z}; q) - R_{\mathbf{x}}(\mathbf{z}; q')
\right|.
\]
\end{definition}
 Unlike generic distributional distances, $d_{\mathrm{dec},\mathbf{x}}$
 only measures discrepancies that can change downstream risks. Crucially, controlling this discrepancy is sufficient to upper-bound the regret:

\begin{proposition}
\label{prop:regret-discrepancy}
For every context $\mathbf{x}$ and every learned law $q_{\theta,\mathbf{x}}$ on $\mathcal{S}$, we have $\mathrm{Reg}_{\mathbf{x}}(\theta) \le 2\, d_{\mathrm{dec},\mathbf{x}}(q_{\mathbf{x}}^\star, q_{\theta,\mathbf{x}})$. Consequently, the expected regret satisfies
$\mathbb E\!\left[\mathrm{Reg}_{\mathbf{X}}(\theta)\right]
\le
2\,\mathbb E\!\left[
d_{\mathrm{dec},\mathbf{X}}(q_{\mathbf{X}}^\star, q_{\theta,\mathbf{X}})
\right].$
\end{proposition}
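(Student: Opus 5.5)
The argument is the standard three-term telescoping bound for plug-in optimization. Fix a context $\mathbf{x}$ and abbreviate $q^\star=q_{\mathbf{x}}^\star$, $q_\theta=q_{\theta,\mathbf{x}}$, $\hat{\mathbf{z}}=\mathbf{z}_\theta(\mathbf{x})$, and $\mathbf{z}^\star=\mathbf{z}_{\mathbf{x}}^\star$. I will decompose the regret by adding and subtracting the learned-law risks at both decisions:
\[
\mathrm{Reg}_{\mathbf{x}}(\theta)
=\bigl[R_{\mathbf{x}}(\hat{\mathbf{z}};q^\star)-R_{\mathbf{x}}(\hat{\mathbf{z}};q_\theta)\bigr]
+\bigl[R_{\mathbf{x}}(\hat{\mathbf{z}};q_\theta)-R_{\mathbf{x}}(\mathbf{z}^\star;q_\theta)\bigr]
+\bigl[R_{\mathbf{x}}(\mathbf{z}^\star;q_\theta)-R_{\mathbf{x}}(\mathbf{z}^\star;q^\star)\bigr].
\]

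The terms are then handled one at a time. The middle bracket is $\le 0$, because $\hat{\mathbf{z}}$ minimizes $R_{\mathbf{x}}(\cdot;q_\theta)$ over $\mathcal{Z}$ and $\mathbf{z}^\star\in\mathcal{Z}$. The first and third brackets are each at most $|R_{\mathbf{x}}(\mathbf{z};q^\star)-R_{\mathbf{x}}(\mathbf{z};q_\theta)|$ for some $\mathbf{z}\in\mathcal{Z}$, so each is bounded by the supremum $d_{\mathrm{dec},\mathbf{x}}(q^\star,q_\theta)$. Summing the three bounds gives $\mathrm{Reg}_{\mathbf{x}}(\theta)\le 2\,d_{\mathrm{dec},\mathbf{x}}(q^\star,q_\theta)$.

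For the expected-regret statement, I will note that regret is nonnegative by optimality of $\mathbf{z}^\star$ and then take expectations over $\mathbf{X}$ on both sides, using monotonicity of expectation. This requires the two sides to be measurable in $\mathbf{x}$, which I will assume implicitly, as the statement does.

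The only point needing care is well-definedness. I will assume the risks are finite and the argmins are attained, as the formulation implicitly does. If a risk were infinite, the bound holds trivially whenever $d_{\mathrm{dec},\mathbf{x}}=\infty$; otherwise all differences are finite. If $\mathbf{z}_\theta(\mathbf{x})$ were only an $\epsilon$-minimizer, the same argument would give an extra additive $\epsilon$. There is no genuine obstacle: the step deserving the most attention is the sign of the middle term, which uses the optimality of $\hat{\mathbf{z}}$ under the learned law rather than the true one.
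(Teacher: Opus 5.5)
Your proposal is correct and follows the paper's proof: the same three-term add-and-subtract decomposition, with the middle bracket nonpositive by optimality of $\mathbf{z}_\theta(\mathbf{x})$ under $q_{\theta,\mathbf{x}}$. The two outer brackets are each bounded by $d_{\mathrm{dec},\mathbf{x}}(q_{\mathbf{x}}^\star,q_{\theta,\mathbf{x}})$, and the expected-regret bound then follows by evaluating at the random context $\mathbf{X}$ and taking expectations.
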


Proposition~\ref{prop:regret-discrepancy} reduces the original decision problem to a surrogate-design task: construct a trainable generative objective that minimizes the decision discrepancy between $q_{\theta,\mathbf{x}}$ and $q_{\mathbf{x}}^\star$. 
\section{Decision-Weighted Flow Matching}
\label{sec:method}
\subsection{Flow matching for conditional generation}
We use flow matching as the conditional generative backbone. The model is a parameterized vector field
$\mathbf{v}_\theta:[0,1]\times\mathcal{S}\times\mathcal{X}\to\mathbb{R}^d,$
where $\theta$ collects the trainable parameters. For each context $\mathbf{x}$, let
$q_{0,\mathbf{x}}$ be a simple base distribution on $\mathcal{S}$. Starting
from $\mathbf{S}_0\sim q_{0,\mathbf{x}}$, the learned flow evolves according to
$\frac{d\mathbf{S}_t}{dt}
=
\mathbf{v}_\theta(t,\mathbf{S}_t,\mathbf{x}),$ for
$t\in[0,1].$
We write $q_{\mathbf{x},t}^{\theta}$ for the marginal law of $\mathbf{S}_t$, and
define the learned terminal law by
$q_{\theta,\mathbf{x}}:=q_{\mathbf{x},1}^{\theta}.$

Flow matching trains $\mathbf{v}_\theta$ by supervised regression. In population
form, draw a data endpoint \((\mathbf X,\mathbf S_1)\), so that conditionally on
\(\mathbf X=\mathbf x\), the endpoint \(\mathbf S_1\) follows the true
conditional distribution \(q_{\mathbf x}^{\star}\). Independently draw
$\mathbf S_0\sim q_{0,\mathbf X},
T\sim\mathrm{Unif}[0,1],$
and construct the linear interpolation
\[
\mathbf S_T=(1-T)\mathbf S_0+T\mathbf S_1,
\qquad
\boldsymbol\Delta:=\mathbf S_1-\mathbf S_0,
\qquad
\mathbf Y:=(T,\mathbf S_T,\mathbf X).
\]
Here \(\mathbf S_0\) is the base sample, \(\mathbf S_1\) is the data endpoint,
and \(\boldsymbol\Delta\) is the constant velocity along the straight-line path
connecting them. The tuple \(\mathbf Y\) is the regression input observed by the model.

The standard conditional flow-matching objective is
\[
L_{\mathrm{FM}}(\theta)
=
\mathbb{E}\!\left[
\left\|
\mathbf{v}_\theta(T,\mathbf{S}_T,\mathbf{X})
-
\boldsymbol{\Delta}
\right\|^2
\right].
\]
That is, at a random interpolation point $(T,\mathbf{S}_T)$ and context $\mathbf{X}$, the model is
trained to predict the velocity pointing from the base sample to the data endpoint. Thus,
FM reduces conditional generation to supervised vector-field regression. However, this objective is
agnostic to downstream decision. We aim to modify this loss so that it emphasizes the
parts that matter most for decision making.

\subsection{From Decision Discrepancy to Endpoint Weights}
\label{subsec:decision_sensitive_weighting}

FM penalizes all squared velocity errors uniformly. In a
predict-then-optimize pipeline, however, the relevant error is the decision discrepancy
\(d_{\mathrm{dec},\mathbf{x}}\). We
therefore seek a weighting mechanism that gives larger weight to velocity
errors that can induce larger changes in downstream risks.

Fix a context \(\mathbf{x}\), and let
\(\mathbf u_{\mathbf{x}}(t,\mathbf{s})\) be an ideal target velocity field
transporting \(q_{0,\mathbf{x}}\) to \(q_{\mathbf{x}}^\star\). For a fixed
decision \(\mathbf z\), the terminal risk error is driven by how local velocity
errors along the path perturb the terminal loss
\(\ell_{\mathbf{x}}(\mathbf z,\cdot)\). To express this sensitivity, define the
backward transported loss \(\phi_{\mathbf{x},\mathbf z}\) by
\begin{equation}
\partial_t\phi_{\mathbf{x},\mathbf z}(t,\mathbf{s})
+
\mathbf u_{\mathbf{x}}(t,\mathbf{s})^\top
\nabla_{\mathbf{s}}\phi_{\mathbf{x},\mathbf z}(t,\mathbf{s})
=
0,
\qquad
\phi_{\mathbf{x},\mathbf z}(1,\mathbf{s})
=
\ell_{\mathbf{x}}(\mathbf z,\mathbf{s}).
\label{eq:adjoint-def-method}
\end{equation}
Thus \(\phi_{\mathbf{x},\mathbf z}(t,\mathbf{s})\) is the terminal loss pulled
back to the path location \((t,\mathbf{s})\), and
\(\nabla_{\mathbf{s}}\phi_{\mathbf{x},\mathbf z}(t,\mathbf{s})\) measures how a
local perturbation at that location changes the terminal risk of decision
\(\mathbf z\). Therefore, a velocity error at \((t,\mathbf{s})\) should receive
larger training weight when this gradient is large, because the same local
transport error can then induce a larger error in the downstream risk.
Since \(d_{\mathrm{dec},\mathbf{x}}\) takes the worst case over downstream
decisions, the corresponding ideal pathwise sensitivity envelope is
$M_{\mathbf{x}}(t,\mathbf{s})
:=
\sup_{\mathbf z\in\mathcal Z}
\left\|
\nabla_{\mathbf{s}}\phi_{\mathbf{x},\mathbf z}(t,\mathbf{s})
\right\|^2 .$
This envelope is the pathwise weight induced by the decision discrepancy:
locations with large \(M_{\mathbf{x}}(t,\mathbf{s})\) are precisely those where
small velocity errors can produce large changes in some downstream risk. In
Section~\ref{sec:theory}, we show that the resulting \(M_{\mathbf{x}}\)-weighted
pathwise velocity error controls \(d_{\mathrm{dec},\mathbf{x}}\), and hence
regret.

However, \(M_{\mathbf{x}}\) is not directly trainable. We therefore propose a computable endpoint score that preserves
the main decision-sensitivity signal. At terminal time,
$\nabla_{\mathbf{s}}\phi_{\mathbf{x},\mathbf z}(1,\mathbf{s})
=
\nabla_{\mathbf{s}}\ell_{\mathbf{x}}(\mathbf z,\mathbf{s}),$
so terminal loss gradients provide a tractable proxy for adjoint sensitivity.
Moreover, rather than taking a global supremum over all decisions, we use the
decision around which regret is locally determined. If the oracle decision were
known, this gives the oracle endpoint score
$w_{\mathbf{x}}^\star(\mathbf{s})
=
1
+
\lambda
\left\|
\nabla_{\mathbf{s}}
\ell_{\mathbf{x}}(\mathbf z_{\mathbf{x}}^\star,\mathbf{s})
\right\|^2 .$
The constant term retains ordinary FM coverage, while \(\lambda\ge 0\) controls
the strength of decision-sensitive reweighting.
Since \(\mathbf z_{\mathbf{x}}^\star\) depends on the unknown true conditional
distribution, we replace it with a reference decision
\(\widehat{\mathbf z}_{\mathbf{x}}\) and use the plug-in endpoint score
\[
\widehat w_{\mathbf{x}}(\mathbf{s})
=
1
+
\lambda
\left\|
\nabla_{\mathbf{s}}
\ell_{\mathbf{x}}(\widehat{\mathbf z}_{\mathbf{x}},\mathbf{s})
\right\|^2 .
\]
$\widehat{\mathbf z}_{\mathbf{x}}$ can be obtained from a baseline predictor, or a sample-average approximation solution. The quality of this plug-in score depends on the accuracy of \(\widehat{\mathbf z}_{\mathbf{x}}\). We precisely quantify their relationship in Appendix~\ref{app:plugin-weight-stability}.

\subsection{Decision-Weighted Flow Matching Objective and Plug-in Decision}
\label{subsec:practical_DW-FM}
We now use the score \(\widehat w_{\mathbf{x}}(\mathbf{s})\) to define the
trainable method, which we call \emph{Decision-Weighted Flow Matching}
(DW-FM). Its population objective is
\[L_{\mathrm{DW-FM}}(\theta):=\mathbb E[\widehat w_{\mathbf X}(\mathbf S_1)
\|\mathbf v_\theta(T,\mathbf S_T,\mathbf X)-\boldsymbol\Delta\|^2].\] 

This objective preserves the standard FM regression label
\(\boldsymbol\Delta\); it only changes how much each endpoint contributes to
the squared velocity-regression loss. In empirical training, DW-FM is implemented by
multiplying the usual per-sample FM loss by the decision-sensitive weight of
the observed endpoint, with the reference decision
\(\widehat{\mathbf z}_{\mathbf x}\) treated as fixed. The minibatch procedure is summarized in Algorithm~\ref{alg:DW-FM} in
Appendix~\ref{app:A}.

At test time, for a new context \(\mathbf x\), we generate scenarios by drawing
\(\mathbf S_{0,k}\sim q_{0,\mathbf x}\) and integrating
\(d\mathbf S_t/dt=\mathbf v_\theta(t,\mathbf S_t,\mathbf x)\) to obtain
\(\widetilde{\mathbf s}_1,\ldots,\widetilde{\mathbf s}_K\sim
q_{\theta,\mathbf x}\). The downstream decision is then computed by the
sample-average plug-in problem
\(\widehat{\mathbf z}_\theta(\mathbf x)\in\argmin_{\mathbf z\in\mathcal Z}
K^{-1}\sum_{k=1}^K\ell_{\mathbf x}(\mathbf z,\widetilde{\mathbf s}_k)\).
\section{Theoretical Analysis}
\label{sec:theory}
\subsection{Ideal Pathwise Regret Control}
\label{subsec:ideal_pathwise_regret}
We first introduce an ideal pathwise surrogate induced by the decision
discrepancy. Fix a context \(\mathbf{x}\). At this stage,
\(\mathbf u_{\mathbf{x}}\) denotes a sufficiently regular target velocity field
transporting \(q_{0,\mathbf{x}}\) to \(q_{\mathbf{x}}^\star\); in
Sections~\ref{subsec:endpoint_population_target}--\ref{subsec:DW-FM_regret}, we
specialize this target path to the FM interpolation path.

\begin{assumption}[Path and adjoint regularity]
\label{ass:path-adjoint-regularity}
Fix a context \(\mathbf{x}\). There exists a target path
\(\{\mu_{\mathbf{x},t}\}_{t\in[0,1]}\) transporting \(q_{0,\mathbf{x}}\) to
\(q_{\mathbf{x}}^\star\), and the learned path
\(\{q_{\mathbf{x},t}^{\theta}\}_{t\in[0,1]}\) is induced by
\(\mathbf v_\theta(\cdot,\cdot,\mathbf{x})\) with endpoints
\(q_{0,\mathbf{x}}\) and \(q_{\theta,\mathbf{x}}\). Both paths admit densities,
$\mu_{\mathbf{x},t}(d\mathbf{s})
=
\rho_{\mathbf{x},t}(\mathbf{s})\,d\mathbf{s},
q_{\mathbf{x},t}^{\theta}(d\mathbf{s})
=
\rho_{\mathbf{x},t}^{\theta}(\mathbf{s})\,d\mathbf{s},$
which are sufficiently regular and satisfy the continuity equations
driven by \(\mathbf u_{\mathbf{x}}\) and
\(\mathbf v_\theta(\cdot,\cdot,\mathbf{x})\), respectively. Moreover, for every
\(\mathbf z\in\mathcal Z\), the loss
\(\ell_{\mathbf{x}}(\mathbf z,\cdot)\) is differentiable in \(\mathbf s\), the transport equation~\eqref{eq:adjoint-def-method} admits a solution
\(\phi_{\mathbf{x},\mathbf z}\) with integrable gradient, and all risks and
pathwise integrals used below are finite.
\end{assumption}
Assumption~\ref{ass:path-adjoint-regularity} collects the smoothness and
integrability conditions needed to differentiate transported risks and apply the
adjoint identity.
\begin{assumption}[No-boundary-flux condition]
\label{ass:no-boundary-flux}
Let \(\partial\mathcal S\) denote the boundary of \(\mathcal S\). For every
\(t\in[0,1]\) and \(\mathbf z\in\mathcal Z\), the boundary flux terms vanish:
\[
\begin{aligned}
\int_{\partial\mathcal S}
\phi_{\mathbf{x},\mathbf z}(t,\mathbf s)
\rho_{\mathbf{x},t}^{\theta}(\mathbf s)
\mathbf v_\theta(t,\mathbf s,\mathbf x)^\top
\mathbf n(\mathbf s)\,dA(\mathbf s)
=\int_{\partial\mathcal S}
\phi_{\mathbf{x},\mathbf z}(t,\mathbf s)
\rho_{\mathbf{x},t}(\mathbf s)
\mathbf u_{\mathbf{x}}(t,\mathbf s)^\top
\mathbf n(\mathbf s)\,dA(\mathbf s)=0,
\end{aligned}
\]
where \(\mathbf n(\mathbf s)\) is the outward unit normal and \(dA\) is surface
measure.
\end{assumption}
This condition rules out changes in transported expected loss caused by
probability mass entering or leaving \(\mathcal S\) through the boundary. It is
standard in transport and continuous-adjoint analyses
\citep{santambrogio2015optimal,cances2020variational}; for example, it holds
under periodic boundaries, zero normal probability flux on compact domains, or
sufficient decay on \(\mathbb R^d\).
Using the ideal sensitivity envelope \(M_{\mathbf{x}}\) from
Section~\ref{subsec:decision_sensitive_weighting}, define
\[
L_{\mathrm{ideal},\mathbf{x}}(\theta)
:=
\int_0^1
\!\int_{\mathcal S}
M_{\mathbf{x}}(t,\mathbf s)
\left\|
\mathbf v_\theta(t,\mathbf s,\mathbf x)
-
\mathbf u_{\mathbf{x}}(t,\mathbf s)
\right\|^2
\,dq_{\mathbf{x},t}^{\theta}(\mathbf s)\,dt .
\]

\begin{theorem}
\label{thm:ideal-controls-regret}
Under Assumptions~\ref{ass:path-adjoint-regularity}
and~\ref{ass:no-boundary-flux}, for every fixed context \(\mathbf{x}\),
$d_{\mathrm{dec},\mathbf{x}}(q_{\mathbf{x}}^\star,q_{\theta,\mathbf{x}})
\le
L_{\mathrm{ideal},\mathbf{x}}(\theta)^{1/2}.$
Consequently,
$\mathbb E
\!\left[
\mathrm{Reg}_{\mathbf X}(\theta)
\right]
\le
2\,
\mathbb E
\!\left[
L_{\mathrm{ideal},\mathbf X}(\theta)^{1/2}
\right].$
\end{theorem}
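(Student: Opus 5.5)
Fix a context \(\mathbf x\) and a decision \(\mathbf z\in\mathcal Z\). The plan is to compare the two terminal risks through the transported loss \(\phi:=\phi_{\mathbf x,\mathbf z}\) from \eqref{eq:adjoint-def-method}. Define
\[
F(t):=\int_{\mathcal S}\phi(t,\mathbf s)\,\rho^\theta_{\mathbf x,t}(\mathbf s)\,d\mathbf s,\qquad
G(t):=\int_{\mathcal S}\phi(t,\mathbf s)\,\rho_{\mathbf x,t}(\mathbf s)\,d\mathbf s .
\]
Since \(\phi(1,\cdot)=\ell_{\mathbf x}(\mathbf z,\cdot)\), we have \(F(1)=R_{\mathbf x}(\mathbf z;q_{\theta,\mathbf x})\) and \(G(1)=R_{\mathbf x}(\mathbf z;q^\star_{\mathbf x})\). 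Both paths start at \(q_{0,\mathbf x}\), so \(F(0)=G(0)\). Hence
\[
R_{\mathbf x}(\mathbf z;q_{\theta,\mathbf x})-R_{\mathbf x}(\mathbf z;q^\star_{\mathbf x})=\int_0^1 \bigl(F'(t)-G'(t)\bigr)\,dt .
\]

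The key step is to compute these derivatives. By Assumption~\ref{ass:path-adjoint-regularity} we may differentiate under the integral. Using the continuity equation \(\partial_t\rho^\theta=-\nabla\cdot(\rho^\theta\mathbf v_\theta)\), integrating by parts, and dropping the boundary term by Assumption~\ref{ass:no-boundary-flux}, we get
\[
F'(t)=\int\bigl(\partial_t\phi+\mathbf v_\theta^\top\nabla_{\mathbf s}\phi\bigr)\rho^\theta_{\mathbf x,t}\,d\mathbf s .
\]
Applying the transport equation \(\partial_t\phi=-\mathbf u_{\mathbf x}^\top\nabla_{\mathbf s}\phi\) then gives
\[
F'(t)=\int(\mathbf v_\theta-\mathbf u_{\mathbf x})^\top\nabla_{\mathbf s}\phi\;\rho^\theta_{\mathbf x,t}\,d\mathbf s .
\]
The same computation for \(G\), with \(\mathbf u_{\mathbf x}\) in place of \(\mathbf v_\theta\), gives \(G'(t)=0\): \(\phi\) is exactly conserved along the target flow. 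This yields the adjoint identity
\[
R_{\mathbf x}(\mathbf z;q_{\theta,\mathbf x})-R_{\mathbf x}(\mathbf z;q^\star_{\mathbf x})=\int_0^1\!\!\int_{\mathcal S}\nabla_{\mathbf s}\phi_{\mathbf x,\mathbf z}(t,\mathbf s)^\top\bigl(\mathbf v_\theta-\mathbf u_{\mathbf x}\bigr)(t,\mathbf s)\,dq^\theta_{\mathbf x,t}(\mathbf s)\,dt .
\]
The perturbation is evaluated on the \emph{learned} path, which is why \(L_{\mathrm{ideal},\mathbf x}\) integrates against \(q^\theta_{\mathbf x,t}\).

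Next comes the bound. Apply Cauchy--Schwarz pointwise in \((t,\mathbf s)\), then use \(\|\nabla_{\mathbf s}\phi_{\mathbf x,\mathbf z}\|\le M_{\mathbf x}^{1/2}\), and then apply Cauchy--Schwarz (Jensen) again over the probability measure \(dq^\theta_{\mathbf x,t}\,dt\) on \([0,1]\times\mathcal S\), which has total mass \(1\). This gives
\[
\bigl|R_{\mathbf x}(\mathbf z;q_{\theta,\mathbf x})-R_{\mathbf x}(\mathbf z;q^\star_{\mathbf x})\bigr|\le \int_0^1\!\!\int M_{\mathbf x}^{1/2}\|\mathbf v_\theta-\mathbf u_{\mathbf x}\|\,dq^\theta_{\mathbf x,t}\,dt\le L_{\mathrm{ideal},\mathbf x}(\theta)^{1/2}.
\]
The right-hand side does not depend on \(\mathbf z\), so taking the supremum over \(\mathbf z\in\mathcal Z\) gives the first claim. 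Combining this with Proposition~\ref{prop:regret-discrepancy} and taking expectations over \(\mathbf X\) gives the regret bound.

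The main obstacle is justifying the differentiation of \(F\) and the integration by parts, i.e., that \(t\mapsto F(t)\) is absolutely continuous with the stated derivative. I would handle this by invoking the regularity and integrability clauses of Assumption~\ref{ass:path-adjoint-regularity} (finite pathwise integrals, integrable \(\nabla\phi\)) together with Assumption~\ref{ass:no-boundary-flux}. If needed, I would argue in the weak form of the continuity equation with \(\phi\) as the test function, which avoids pointwise differentiability of the densities.
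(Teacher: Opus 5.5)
Your proposal is correct and follows essentially the same route as the paper. The paper likewise proves the adjoint identity by differentiating the transported loss along the learned path (your $F$, its $I_{\mathbf z}$) and showing it is conserved along the target path (your $G$, its $J_{\mathbf z}$); it then applies Cauchy--Schwarz against $dq^\theta_{\mathbf x,t}\,dt$, bounds by $M_{\mathbf x}$, takes the supremum over $\mathbf z$, and invokes Proposition~\ref{prop:regret-discrepancy}, where the only immaterial difference is that you insert the envelope bound before the second Cauchy--Schwarz step rather than after.
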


Theorem~\ref{thm:ideal-controls-regret} directly shows that pathwise velocity errors weighted by
\(M_{\mathbf{x}}(t,\mathbf s)\) control the decision discrepancy, and therefore
regret. Thus \(L_{\mathrm{ideal},\mathbf{x}}\) is the ideal benchmark
that motivates DW-FM. When the downstream risk is strongly convex, Appendix~\ref{app:strongly-convex-first-order}
gives a sharper first-order closure that replaces the square-root conversion in
Theorem~\ref{thm:ideal-controls-regret} by a linear regret conversion.
\subsection{Population Target of the Endpoint-Weighted Objective}
\label{subsec:endpoint_population_target}

The ideal bound in Section~\ref{subsec:ideal_pathwise_regret} controls regret
through the error between the learned field \(\mathbf v_\theta\) and a target
velocity \(\mathbf u_{\mathbf{x}}\). To connect this bound to DW-FM, we now
specialize the target path to the FM interpolation path. Under this choice,
\(\mathbf u_{\mathbf{x}}\) is the ordinary FM population target. DW-FM, however,
optimizes an endpoint-weighted regression objective. This subsection identifies
the population target selected by that objective and quantifies its deviation
from \(\mathbf u_{\mathbf{x}}\).

Fix a context \(\mathbf{x}\). Write
\(\mathbb E_{\mathbf{x}}[\cdot]:=\mathbb E[\cdot\mid \mathbf X=\mathbf{x}]\),
where \((\mathbf S_1,\mathbf S_0,T)\) follows the conditional interpolation law
from Section~\ref{subsec:practical_DW-FM}. The ordinary FM population target is
$\mathbf u_{\mathbf{x}}(\mathbf Y)
:=
\mathbb E_{\mathbf{x}}[\boldsymbol\Delta\mid \mathbf Y].$
Let \(w_{\mathbf{x}}:\mathcal S\to[0,\infty)\) be a fixed endpoint-weight
function. This includes the oracle endpoint score \(w_{\mathbf{x}}^\star\), and,
once the reference decision is fixed, the plug-in score
\(\widehat w_{\mathbf{x}}\). Define the fixed-context endpoint-weighted
population loss
\[
L_{w,\mathbf{x}}(\mathbf v)
:=
\mathbb E_{\mathbf{x}}
\!\left[
w_{\mathbf{x}}(\mathbf S_1)
\left\|
\mathbf v(\mathbf Y)-\boldsymbol\Delta
\right\|^2
\right].
\]
Thus \(L_{w,\mathbf{x}}\) is the conditional population version of the
trainable DW-FM objective; taking \(w_{\mathbf{x}}=\widehat w_{\mathbf{x}}\)
recovers the objective in Section~\ref{subsec:practical_DW-FM}.
Because \(w_{\mathbf{x}}(\mathbf S_1)\) remains random after conditioning on
the regression input \(\mathbf Y\), define the effective conditional weight
$m_{w,\mathbf{x}}(\mathbf Y)
:=
\mathbb E_{\mathbf{x}}
[
w_{\mathbf{x}}(\mathbf S_1)\mid \mathbf Y
],$
and, on \(\{m_{w,\mathbf{x}}(\mathbf Y)>0\}\), define
\[
\mathbf u_{w,\mathbf{x}}(\mathbf Y)
:=
\frac{
\mathbb E_{\mathbf{x}}
[
w_{\mathbf{x}}(\mathbf S_1)\boldsymbol\Delta\mid \mathbf Y
]
}{
m_{w,\mathbf{x}}(\mathbf Y)
}.
\]
\begin{theorem}
\label{thm:endpoint-population-target}
The function \(\mathbf u_{w,\mathbf{x}}\) is a population minimizer of
\(L_{w,\mathbf{x}}\). Moreover, for every measurable vector field \(\mathbf v\),
\[
L_{w,\mathbf{x}}(\mathbf v)-L_{w,\mathbf{x}}(\mathbf u_{w,\mathbf{x}})
=
\mathbb E_{\mathbf{x}}
\!\left[
m_{w,\mathbf{x}}(\mathbf Y)
\left\|
\mathbf v(\mathbf Y)-\mathbf u_{w,\mathbf{x}}(\mathbf Y)
\right\|^2
\right].
\]
In addition, on \(\{m_{w,\mathbf{x}}(\mathbf Y)>0\}\),
$\mathbf u_{w,\mathbf{x}}(\mathbf Y)
-
\mathbf u_{\mathbf{x}}(\mathbf Y)
=
{
\operatorname{Cov}_{\mathbf{x}}
\!\left(
w_{\mathbf{x}}(\mathbf S_1),
\boldsymbol\Delta
\mid
\mathbf Y
\right)
}/{
m_{w,\mathbf{x}}(\mathbf Y)
},$
where the scalar--vector covariance is understood componentwise.
\end{theorem}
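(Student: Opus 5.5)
The plan is the standard weighted least-squares bias--variance decomposition, carried out conditionally on the regression input \(\mathbf Y\). Fix \(\mathbf x\) and a measurable \(\mathbf v\). We may assume \(L_{w,\mathbf x}(\mathbf v)<\infty\) and \(\mathbb E_{\mathbf x}[w_{\mathbf x}(\mathbf S_1)\|\boldsymbol\Delta\|^2]<\infty\); otherwise the identity holds trivially as \(\infty=\infty\), or is excluded by the integrability implicit in the statement. On \(\{m_{w,\mathbf x}(\mathbf Y)>0\}\) we write
\[
\mathbf v(\mathbf Y)-\boldsymbol\Delta=\bigl(\mathbf v(\mathbf Y)-\mathbf u_{w,\mathbf x}(\mathbf Y)\bigr)+\bigl(\mathbf u_{w,\mathbf x}(\mathbf Y)-\boldsymbol\Delta\bigr),
\]
expand the square, multiply by \(w_{\mathbf x}(\mathbf S_1)\), and take \(\mathbb E_{\mathbf x}[\cdot\mid\mathbf Y]\).

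\textbf{First term.} Since \(\mathbf v(\mathbf Y)-\mathbf u_{w,\mathbf x}(\mathbf Y)\) is \(\mathbf Y\)-measurable, pulling it out gives \(m_{w,\mathbf x}(\mathbf Y)\|\mathbf v(\mathbf Y)-\mathbf u_{w,\mathbf x}(\mathbf Y)\|^2\).

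\textbf{Cross term.} It equals
\[
2\bigl(\mathbf v(\mathbf Y)-\mathbf u_{w,\mathbf x}(\mathbf Y)\bigr)^\top\Bigl(m_{w,\mathbf x}(\mathbf Y)\,\mathbf u_{w,\mathbf x}(\mathbf Y)-\mathbb E_{\mathbf x}[w_{\mathbf x}(\mathbf S_1)\boldsymbol\Delta\mid\mathbf Y]\Bigr),
\]
which vanishes identically by the definition of \(\mathbf u_{w,\mathbf x}\).

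\textbf{Remaining term.} This is \(\mathbb E_{\mathbf x}[w_{\mathbf x}(\mathbf S_1)\|\mathbf u_{w,\mathbf x}(\mathbf Y)-\boldsymbol\Delta\|^2\mid\mathbf Y]\), and it does not depend on \(\mathbf v\).

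\textbf{The null set.} On \(\{m_{w,\mathbf x}(\mathbf Y)=0\}\) we have \(w_{\mathbf x}(\mathbf S_1)=0\) almost surely, because \(w_{\mathbf x}\ge 0\). The integrand therefore vanishes whatever \(\mathbf v\) is, so we extend \(\mathbf u_{w,\mathbf x}\) arbitrarily there (say by \(\mathbf u_{\mathbf x}\)). Both sides of the identity get zero contribution from this set, since the right-hand side carries the factor \(m_{w,\mathbf x}\).

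\textbf{Conclusion of the excess-risk identity.} Taking outer expectations via the tower property gives
\[
L_{w,\mathbf x}(\mathbf v)=\mathbb E_{\mathbf x}\bigl[m_{w,\mathbf x}\|\mathbf v-\mathbf u_{w,\mathbf x}\|^2\bigr]+C,
\]
with \(C\) independent of \(\mathbf v\). Setting \(\mathbf v=\mathbf u_{w,\mathbf x}\) identifies \(C=L_{w,\mathbf x}(\mathbf u_{w,\mathbf x})\), which gives the displayed identity. Nonnegativity of the right-hand side then shows \(\mathbf u_{w,\mathbf x}\) is a population minimizer.

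\textbf{Main obstacle: integrability.} The one step needing care is justifying finiteness of each conditional term, so that the cross term can be split off and the conditional expectations pulled out. Cauchy--Schwarz with weight \(w_{\mathbf x}\) handles this:
\[
\bigl\|\mathbb E_{\mathbf x}[w_{\mathbf x}\boldsymbol\Delta\mid\mathbf Y]\bigr\|^2\le m_{w,\mathbf x}\,\mathbb E_{\mathbf x}[w_{\mathbf x}\|\boldsymbol\Delta\|^2\mid\mathbf Y].
\]
This gives
\[
m_{w,\mathbf x}\|\mathbf u_{w,\mathbf x}\|^2\le\mathbb E_{\mathbf x}[w_{\mathbf x}\|\boldsymbol\Delta\|^2\mid\mathbf Y],
\]
so every term is finite almost surely and integrable.

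\textbf{Covariance formula.} Using \(\mathbf u_{\mathbf x}(\mathbf Y)=\mathbb E_{\mathbf x}[\boldsymbol\Delta\mid\mathbf Y]\), we compute on \(\{m_{w,\mathbf x}>0\}\):
\[
\mathbf u_{w,\mathbf x}-\mathbf u_{\mathbf x}=\frac{\mathbb E_{\mathbf x}[w_{\mathbf x}\boldsymbol\Delta\mid\mathbf Y]-\mathbb E_{\mathbf x}[w_{\mathbf x}\mid\mathbf Y]\,\mathbb E_{\mathbf x}[\boldsymbol\Delta\mid\mathbf Y]}{m_{w,\mathbf x}}.
\]
The numerator is, componentwise, exactly \(\operatorname{Cov}_{\mathbf x}(w_{\mathbf x}(\mathbf S_1),\boldsymbol\Delta\mid\mathbf Y)\). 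Its existence follows from the same integrability, assuming additionally that \(\mathbb E_{\mathbf x}\|\boldsymbol\Delta\|^2<\infty\) so that \(\mathbf u_{\mathbf x}\) is defined.
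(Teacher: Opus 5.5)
Your proposal is correct and follows the paper's proof essentially step for step. Both use the conditional decomposition $\mathbf v-\boldsymbol\Delta=(\mathbf v-\mathbf u_{w,\mathbf x})+(\mathbf u_{w,\mathbf x}-\boldsymbol\Delta)$, a cross term that vanishes by the definition of $\mathbf u_{w,\mathbf x}$, the same handling of $\{m_{w,\mathbf x}(\mathbf Y)=0\}$ via $w_{\mathbf x}\ge 0$, and the direct covariance computation. Your conditional Cauchy--Schwarz bound $m_{w,\mathbf x}\|\mathbf u_{w,\mathbf x}\|^2\le\mathbb E_{\mathbf x}[w_{\mathbf x}\|\boldsymbol\Delta\|^2\mid\mathbf Y]$ only adds integrability bookkeeping that the paper leaves implicit.
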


Theorem~\ref{thm:endpoint-population-target} gives the key bridge from the
trainable objective to the regret analysis. The excess endpoint-weighted loss
controls the distance from \(\mathbf v_\theta\) to the weighted population target
\(\mathbf u_{w,\mathbf{x}}\), while the covariance identity quantifies the
target shift from the ordinary FM velocity \(\mathbf u_{\mathbf{x}}\). This
target shift is the tilting term that appears in the regret bound below.
\subsection{Population Regret for DW-FM}
\label{subsec:DW-FM_regret}

We now combine Sections~\ref{subsec:ideal_pathwise_regret}
and~\ref{subsec:endpoint_population_target} to obtain a population regret
closure for DW-FM. The ideal bound controls regret through the pathwise error
between \(\mathbf v_\theta\) and the ordinary FM target velocity
\(\mathbf u_{\mathbf{x}}\), while the trainable DW-FM objective controls the
error between \(\mathbf v_\theta\) and the endpoint-weighted population target
\(\mathbf u_{\widehat w,\mathbf{x}}\). Thus the key decomposition is
$\mathbf v_\theta-\mathbf u_{\mathbf{x}}
=
(\mathbf v_\theta-\mathbf u_{\widehat w,\mathbf{x}})
+
(\mathbf u_{\widehat w,\mathbf{x}}-\mathbf u_{\mathbf{x}}).$

Using the plug-in endpoint score \(\widehat w_{\mathbf{x}}\) from
Section~\ref{subsec:decision_sensitive_weighting}, define
\[
E_{\widehat w,\mathbf{x}}(\mathbf v_\theta)
:=
L_{\widehat w,\mathbf{x}}(\mathbf v_\theta)
-
L_{\widehat w,\mathbf{x}}(\mathbf u_{\widehat w,\mathbf{x}}),\quad
B_{\mathrm{tilt},\mathbf{x}}(\widehat w)
:=
\mathbb E_{\mathbf{x}}
\!\left[
m_{\widehat w,\mathbf{x}}(\mathbf Y)
\left\|
\mathbf u_{\widehat w,\mathbf{x}}(\mathbf Y)
-
\mathbf u_{\mathbf{x}}(\mathbf Y)
\right\|^2
\right].
\]
Here \(E_{\widehat w,\mathbf{x}}(\mathbf v_\theta)\) measures how well the
learned field fits the endpoint-weighted population target, while
\(B_{\mathrm{tilt},\mathbf{x}}(\widehat w)\) measures the target shift identified
in Section~\ref{subsec:endpoint_population_target}.
To transfer the ideal regret bound to this trainable objective, we require the
endpoint-weighted interpolation distribution to cover the path regions that are
important under the ideal regret weight.

\begin{assumption}[Path-overlap and sensitivity coverage]
\label{ass:DW-FM-coverage}
Fix the vector field \(\mathbf v_\theta\) and a context \(\mathbf{x}\). Let
\(p_{\mathbf{x}}(t,\mathbf{s})\) be the density of \((T,\mathbf S_T)\) under the
conditional FM interpolation law. There exist finite constants
\(A_{\mathbf{x}}\) and \(B_{\mathbf{x}}\) such that, for almost every
\((t,\mathbf{s})\in[0,1]\times\mathcal S\),
\[
\rho_{\mathbf{x},t}^{\theta}(\mathbf{s})
\le
A_{\mathbf{x}}p_{\mathbf{x}}(t,\mathbf{s}),
\qquad
M_{\mathbf{x}}(t,\mathbf{s})
\le
B_{\mathbf{x}}m_{\widehat w,\mathbf{x}}(t,\mathbf{s}).
\]
\end{assumption}

The first inequality is a path-overlap condition: the learned ODE path should
remain covered by the FM interpolation path. The second inequality is a
sensitivity-coverage condition: the endpoint-induced path weight should be
large on regions with high ideal decision sensitivity. Together they imply
\[
M_{\mathbf{x}}(t,\mathbf{s})\rho_{\mathbf{x},t}^{\theta}(\mathbf{s})
\le
C_{\widehat w,\mathbf{x}}
m_{\widehat w,\mathbf{x}}(t,\mathbf{s})
p_{\mathbf{x}}(t,\mathbf{s}),
\qquad
C_{\widehat w,\mathbf{x}}:=A_{\mathbf{x}}B_{\mathbf{x}}.
\]
\begin{theorem}
\label{thm:DW-FM-regret}
Under the conditions of Theorem~\ref{thm:ideal-controls-regret} and
Assumption~\ref{ass:DW-FM-coverage},
\[
\mathbb E
\!\left[
\mathrm{Reg}_{\mathbf X}(\theta)
\right]
\le
2
\mathbb E
\!\left[
\sqrt{
2C_{\widehat w,\mathbf X}
\left(
E_{\widehat w,\mathbf X}(\mathbf v_\theta)
+
B_{\mathrm{tilt},\mathbf X}(\widehat w)
\right)
}
\right].
\]
\end{theorem}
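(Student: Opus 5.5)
The plan is to bound $L_{\mathrm{ideal},\mathbf x}(\theta)$ pointwise in $\mathbf x$ by $2C_{\widehat w,\mathbf x}(E_{\widehat w,\mathbf x}+B_{\mathrm{tilt},\mathbf x})$. The claim then follows from Theorem~\ref{thm:ideal-controls-regret}: there $\mathbb E[\mathrm{Reg}_{\mathbf X}(\theta)]\le 2\,\mathbb E[L_{\mathrm{ideal},\mathbf X}(\theta)^{1/2}]$, and $\sqrt{\cdot}$ is monotone, so we may substitute the pointwise bound inside the expectation. Throughout, the target $\mathbf u_{\mathbf x}$ in Theorem~\ref{thm:ideal-controls-regret} is specialized to the FM interpolation target $\mathbf u_{\mathbf x}(t,\mathbf s)=\mathbb E_{\mathbf x}[\boldsymbol\Delta\mid T=t,\mathbf S_T=\mathbf s]$, as announced in Section~\ref{subsec:endpoint_population_target}. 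We also identify $m_{\widehat w,\mathbf x}(\mathbf Y)$ with $m_{\widehat w,\mathbf x}(t,\mathbf s)$ for fixed $\mathbf x$.

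\textbf{Step 1 (change of measure).} Write $L_{\mathrm{ideal},\mathbf x}(\theta)=\int_0^1\!\int M_{\mathbf x}\|\mathbf v_\theta-\mathbf u_{\mathbf x}\|^2\rho^\theta_{\mathbf x,t}\,d\mathbf s\,dt$. Apply the combined coverage inequality $M_{\mathbf x}\rho^\theta_{\mathbf x,t}\le C_{\widehat w,\mathbf x}\,m_{\widehat w,\mathbf x}\,p_{\mathbf x}$ from Assumption~\ref{ass:DW-FM-coverage}. Since $T\sim\mathrm{Unif}[0,1]$, $p_{\mathbf x}(t,\mathbf s)$ is the joint density of $(T,\mathbf S_T)$ with respect to $dt\,d\mathbf s$, so integrating against $p_{\mathbf x}$ is the same as taking $\mathbb E_{\mathbf x}$. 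This gives
\[
L_{\mathrm{ideal},\mathbf x}(\theta)\le C_{\widehat w,\mathbf x}\,\mathbb E_{\mathbf x}\!\left[m_{\widehat w,\mathbf x}(\mathbf Y)\|\mathbf v_\theta(\mathbf Y)-\mathbf u_{\mathbf x}(\mathbf Y)\|^2\right].
\]
The one subtlety is the set $\{m_{\widehat w,\mathbf x}=0\}$, where $\mathbf u_{\widehat w,\mathbf x}$ is undefined. The coverage inequality forces $M_{\mathbf x}\rho^\theta_{\mathbf x,t}=0$ there, so this set contributes nothing on either side and can be discarded. In fact, since $\widehat w\ge 1$, we have $m_{\widehat w,\mathbf x}\ge1$ everywhere and the issue does not arise.

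\textbf{Step 2 (split the error).} Use the decomposition $\mathbf v_\theta-\mathbf u_{\mathbf x}=(\mathbf v_\theta-\mathbf u_{\widehat w,\mathbf x})+(\mathbf u_{\widehat w,\mathbf x}-\mathbf u_{\mathbf x})$ together with $\|a+b\|^2\le 2\|a\|^2+2\|b\|^2$. The weighted expectation is then at most twice the sum of two terms. The first, $\mathbb E_{\mathbf x}[m_{\widehat w,\mathbf x}\|\mathbf v_\theta-\mathbf u_{\widehat w,\mathbf x}\|^2]$, equals $E_{\widehat w,\mathbf x}(\mathbf v_\theta)$ by the excess-risk identity in Theorem~\ref{thm:endpoint-population-target}, applied with $w=\widehat w$ and $\mathbf v=\mathbf v_\theta(\cdot,\cdot,\mathbf x)$. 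The second is exactly $B_{\mathrm{tilt},\mathbf x}(\widehat w)$ by definition. Hence $L_{\mathrm{ideal},\mathbf x}(\theta)\le 2C_{\widehat w,\mathbf x}(E_{\widehat w,\mathbf x}+B_{\mathrm{tilt},\mathbf x})$.

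\textbf{Step 3 (conclude).} Take square roots and multiply by $2$. Then take expectation over $\mathbf X$, using the per-context bound $\mathrm{Reg}_{\mathbf x}\le 2\,d_{\mathrm{dec},\mathbf x}\le 2L_{\mathrm{ideal},\mathbf x}^{1/2}$ from Proposition~\ref{prop:regret-discrepancy} and Theorem~\ref{thm:ideal-controls-regret}.

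The main obstacle is Step 1. We must make sure the $\mathbf u_{\mathbf x}$ used in Theorem~\ref{thm:ideal-controls-regret}, and hence in $M_{\mathbf x}$ through the adjoint equation, is the same FM conditional-mean field appearing in Theorem~\ref{thm:endpoint-population-target}. This requires that field to satisfy Assumptions~\ref{ass:path-adjoint-regularity}--\ref{ass:no-boundary-flux}, which is standard because the FM marginal path solves the continuity equation with velocity $\mathbb E[\boldsymbol\Delta\mid T,\mathbf S_T]$. We must also justify the change of variables from $dq^\theta_{\mathbf x,t}\,dt$ to the interpolation law. Everything else is elementary algebra.
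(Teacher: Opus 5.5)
Your proposal is correct and follows the paper's proof essentially step for step. You use the combined coverage inequality to turn $L_{\mathrm{ideal},\mathbf x}(\theta)$ into an $m_{\widehat w,\mathbf x}$-weighted error under the interpolation law. You then split around $\mathbf u_{\widehat w,\mathbf x}$ with $\|\mathbf a+\mathbf b\|^2\le 2\|\mathbf a\|^2+2\|\mathbf b\|^2$, which gives $E_{\widehat w,\mathbf x}$ via the identity in Theorem~\ref{thm:endpoint-population-target} plus $B_{\mathrm{tilt},\mathbf x}$ by definition, and finally average the pointwise bound $\mathrm{Reg}_{\mathbf x}\le 2L_{\mathrm{ideal},\mathbf x}^{1/2}$ over $\mathbf X$. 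Your side remarks are consistent with the paper's setup and only make explicit what it takes for granted: that $m_{\widehat w,\mathbf x}\ge 1$ because $\widehat w\ge 1$, and that $\mathbf u_{\mathbf x}$ must be the FM conditional-mean field.
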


Theorem~\ref{thm:DW-FM-regret} shows that DW-FM regret is governed by two
quantities: the endpoint-weighted excess risk and the tilting bias. The former
is the error controlled by the trainable objective; the latter measures the
population target shift caused by endpoint weighting. Thus, expected regret is small whenever the learned
field fits the endpoint-weighted population target well and the induced target
shift is controlled.
Setting \(\lambda=0\), DW-FM
reduces to ordinary FM and the tilting bias vanishes. For \(\lambda>0\),
decision weighting can redirect approximation capacity toward
decision-sensitive path regions, but it also introduces a tilting term. The
effect of decision weighting is therefore a bias--fit tradeoff rather than an
unconditional improvement. Appendix~\ref{app: misspecification} further illustrates the benefit of decision weighting. We also provide a corresponding finite-sample analysis in Appendix \ref{app:finite-sample}.

\section{Experiments}
\label{sec:experiment}

\subsection{Experimental Setup}
We evaluate DW-FM on three CVaR-based contextual stochastic optimization
benchmarks: a controlled synthetic portfolio sweep, a semi-real financial
portfolio task based on Ken French industry portfolios and Fama--French
factors~\citep{kenfrench,fama1993common}, and a PEMS-BAY traffic congestion
task~\citep{pemsbay}. These benchmarks share the same evaluation structure:
given a context $\mathbf{x}$, the learning method produces scenarios or
predictions for an uncertain outcome vector, and the downstream solver computes
a feasible decision under a mean-loss plus CVaR objective. We use downstream
regret as the primary metric.
The two portfolio benchmarks evaluate return-scenario generation for long-only
portfolio decisions. The synthetic sweep provides a controlled test under
increasing nonlinear context--return complexity, while the semi-real financial
task tests the method under chronological market data and realized downside-risk
evaluation. The PEMS-BAY task uses the same CVaR decision protocol for
congestion outcomes, providing an additional real-data test with different
outcome semantics and loss geometry. The detailed data generation process is described in Appendix \ref{app:exp detail}.

\paragraph{Baselines.} We compare DW-FM with four baselines. (i) {Uniform FM} uses the same conditional flow-matching architecture,
sampling procedure, and downstream solver as DW-FM, but trains with the standard
unweighted FM regression loss; this is the direct ablation for decision-sensitive
reweighting. (ii) {2Stage PTO} first trains a deterministic predictor using a
supervised prediction loss and then passes the prediction to the same downstream
optimizer~\citep{elmachtoub2022smart}; this represents the classical predict-then-optimize pipeline.
(iii) {SPO+} trains the deterministic predictor with the standard convex
decision-focused surrogate for predict-then-optimize learning~\citep{elmachtoub2022smart}. (iv) {Task-based
E2E} trains the predictor directly through the downstream task loss using the
same decision objective~\citep{donti2017task}. Within each benchmark, all methods are evaluated with the same frozen
task-specific CVaR regret evaluator.

\begin{table*}[htp]
\centering
\small
\caption{{Synthetic controlled portfolio sweep.}
Full-test and hardest-25\% regret across polynomial degrees. Lower is better.
Values are mean $\pm$ standard deviation over repeated runs.}
\label{tab:degree_sweep_combined}
\resizebox{\textwidth}{!}{
\begin{tabular}{llccccc}
\toprule
Split / Task & Degree & Uniform FM & 2Stage PTO & SPO+ & Task-based E2E & DW-FM \\
\midrule
\multirow{4}{*}{Full test (synthetic)}
& Deg-2 & $0.0745 \pm 0.0008$ & $0.0774 \pm 0.0008$ & $0.0912 \pm 0.0007$ & $0.0871 \pm 0.0022$ & $\mathbf{0.0726 \pm 0.0016}$ \\
& Deg-4 & $0.0773 \pm 0.0024$ & $0.0791 \pm 0.0016$ & $0.0913 \pm 0.0037$ & $0.0880 \pm 0.0046$ & $\mathbf{0.0739 \pm 0.0007}$ \\
& Deg-6 & $0.0779 \pm 0.0006$ & $0.0820 \pm 0.0034$ & $0.0925 \pm 0.0020$ & $0.0890 \pm 0.0010$ & $\mathbf{0.0756 \pm 0.0028}$ \\
& Deg-8 & $0.0766 \pm 0.0012$ & $0.0800 \pm 0.0021$ & $0.0918 \pm 0.0033$ & $0.0920 \pm 0.0038$ & $\mathbf{0.0746 \pm 0.0010}$ \\
\midrule
\multirow{4}{*}{Hardest 25\% (synthetic)}
& Deg-2 & $0.0649 \pm 0.0009$ & $0.0676 \pm 0.0008$ & $0.0768 \pm 0.0016$ & $0.0755 \pm 0.0015$ & $\mathbf{0.0633 \pm 0.0003}$ \\
& Deg-4 & $0.0662 \pm 0.0012$ & $0.0699 \pm 0.0012$ & $0.0789 \pm 0.0015$ & $0.0759 \pm 0.0040$ & $\mathbf{0.0650 \pm 0.0011}$ \\
& Deg-6 & $0.0647 \pm 0.0005$ & $0.0692 \pm 0.0018$ & $0.0764 \pm 0.0030$ & $0.0753 \pm 0.0023$ & $\mathbf{0.0638 \pm 0.0012}$ \\
& Deg-8 & $0.0657 \pm 0.0008$ & $0.0687 \pm 0.0012$ & $0.0775 \pm 0.0019$ & $0.0787 \pm 0.0014$ & $\mathbf{0.0641 \pm 0.0003}$ \\
\bottomrule
\end{tabular}}
\end{table*}

\begin{figure*}[htp]
    \centering
    \includegraphics[width=\textwidth]{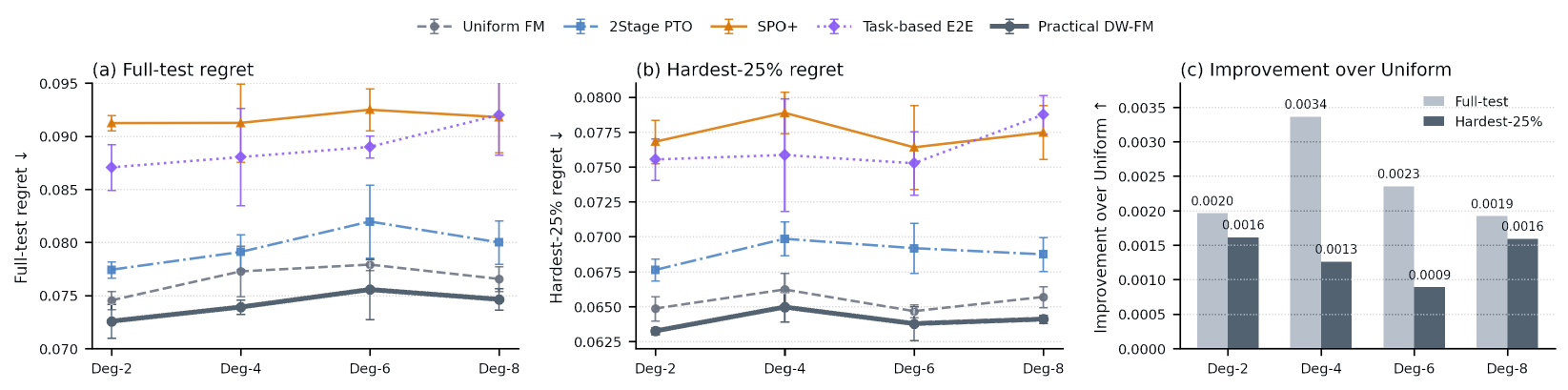}
    \caption{Synthetic controlled nonlinear portfolio degree sweep.
    {(a)} Full-test regret. 
    {(b)} Hardest-25\% regret. 
    {(c)} Improvement over Uniform FM. 
    Lower is better in (a,b); positive is better in (c).}
    \label{fig:degree_sweep}
\end{figure*}

\subsection{Experimental Results}

\paragraph{Synthetic controlled portfolio sweep benchmark.}
We first evaluate DW-FM on a fully synthetic portfolio-CVaR benchmark, where the
context-to-return map has polynomial degree in $\{2,4,6,8\}$. This benchmark
isolates decision-weighted training in a controlled setting: the downstream
portfolio-CVaR objective is fixed, while the conditional return model becomes
increasingly nonlinear. 
Table~\ref{tab:degree_sweep_combined} reports exact regret values with standard
deviations, while Figure~\ref{fig:degree_sweep} visualizes the comparison across
degrees and highlights improvement over Uniform FM. The hardest-25\% subset is defined by a
decision-sensitivity score that measures the estimated impact
on the downstream decision. Across all degrees and both
full-test and hardest-25\% splits, DW-FM attains the lowest regret among Uniform
FM, 2Stage PTO, SPO+, and Task-based E2E. The comparison with Uniform FM
isolates the effect of decision-sensitive reweighting, since both methods use
the same conditional flow-matching backbone and downstream evaluator. The
hardest-25\% results show that the improvement is preserved on
decision-sensitive contexts rather than being driven only by easy cases.

\paragraph{Semi-real portfolio-CVaR benchmark.} We next evaluate DW-FM on the semi-real financial portfolio-CVaR benchmark.
This task uses chronological market data and tests
whether decision-sensitive reweighting remains useful under realistic temporal
structure.
Table~\ref{tab:semireal_summary} reports two complementary views of the
semi-real financial experiment. The top block gives the primary full-test
decision-performance comparison under the frozen portfolio-CVaR evaluator.
DW-FM reduces mean regret from $0.00655$ to $0.00590$ and CVaR loss from
$0.01485$ to $0.01452$, while realized mean return remains essentially
unchanged. This suggests that the improvement comes mainly from better downside
risk control rather than from a return-seeking artifact.
The bottom block serves a different purpose: it is a tail-fit diagnostic on the
hardest 25\% contexts. We use
this subset to examine whether DW-FM improves the parts of the generated loss
distribution most relevant to the CVaR objective. DW-FM improves worst-10\% tail
Wasserstein, $q_{90}$ gap, and CVaR$_{90}$ gap, supporting the interpretation
that its regret gain is associated with better modeling of downside-tail regions.


\begin{table}[htp]
\centering
\small
\caption{{Semi-real portfolio-CVaR decision performance and tail diagnostics.}
The top block reports full-test decision metrics. The bottom block reports downside-tail fit diagnostics on the
predefined hardest 25\% contexts, used to assess whether DW-FM better captures
decision-relevant tail regions. Lower is better except mean return.}
\label{tab:semireal_summary}
\begin{tabular}{llccc}
\toprule
Group & Metric & Uniform FM & DW-FM & Improvement \\
\midrule
\multirow{4}{*}{Decision performance, full test}
& Mean regret & 0.00655 & \textbf{0.00590} & 0.00065 \\
& Std regret & 0.00048 & 0.00039 & -- \\
& Mean return & 0.00040 & 0.00039 & -- \\
& CVaR loss & 0.01485 & \textbf{0.01452} & 0.00033 \\
\midrule
\multirow{3}{*}{Tail diagnostics, hardest 25\%}
& Tail $W_1$ worst 10\% & 0.05195 & \textbf{0.04407} & 0.00788 \\
& $q_{90}$ gap & 0.03191 & \textbf{0.02561} & 0.00630 \\
& CVaR$_{90}$ gap & 0.05212 & \textbf{0.04377} & 0.00835 \\
\bottomrule
\end{tabular}

\vspace{1mm}
\footnotesize Context-level bridge on hardest 25\%: Spearman$(\Delta \text{tail-fit}, \Delta \text{regret})=0.3211$.
\end{table}

\begin{figure}[htp]
    \centering
    \includegraphics[width=\linewidth]{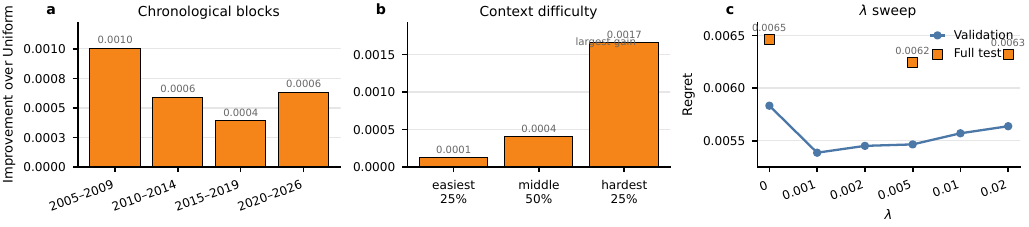}
    \caption{{Semi-real stability, localization, and robustness.} 
   {(a)} Chronological slices. 
    {(b)} Context difficulty. 
    {(c)} Lambda sweep.}
    \label{fig:robust_local}
\end{figure}

\paragraph{Semi-real portfolio-CVaR: stability, difficulty, and tuning.}
Figure~\ref{fig:robust_local} gives three diagnostics for the semi-real
portfolio-CVaR benchmark. Panel~(a) shows that DW-FM improves over Uniform FM in
all chronological test slices, suggesting that the full-test gain is not driven
by a single market period. Panel~(b) stratifies contexts by difficulty and shows
that the improvement is largest on the hardest contexts, where portfolio
decisions are most sensitive to distributional errors. Panel~(c) reports the
validation sweep over $\lambda$: $\lambda=0$ recovers Uniform FM, while positive
values improve regret up to a moderate range, consistent with the bias--fit
tradeoff induced by decision weighting.

\paragraph{Semi-real portfolio-CVaR: downside-tail mechanism.}
Table~\ref{tab:semireal_summary} shows that DW-FM improves downside-tail
diagnostics on the hardest contexts. Figure~\ref{fig:tail_bridge} further links
these diagnostics to decision quality: DW-FM reduces hardest-context regret,
improves multiple tail-fit measures, and shows a positive context-level
association between tail-fit improvement and regret improvement. These results
support the interpretation that the regret gain comes from better modeling of
the downside regions that drive the portfolio-CVaR optimizer.


\begin{figure*}[t]
    \centering
    \includegraphics[width=\textwidth]{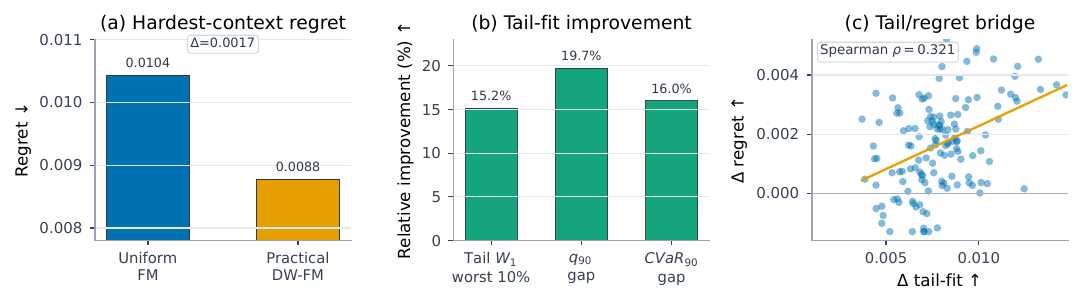}
   \caption{{Downside-risk mechanism on semi-real hardest contexts.}
{(a)} Hardest-context regret. 
{(b)} Relative improvement in downside-tail fit metrics.
{(c)} Context-level association between tail-fit improvement and regret improvement.
Positive $\Delta$ values indicate improvement over Uniform FM.}
    \label{fig:tail_bridge}
\end{figure*}
\paragraph{PEMS-BAY traffic-CVaR benchmark.}
We also evaluate DW-FM on the PEMS-BAY traffic-CVaR benchmark, which follows the
same predict-generate-optimize protocol but uses congestion outcomes rather than
asset returns. Table~\ref{tab:pemsbay} reports full-test regret under the frozen
traffic-CVaR evaluator. DW-FM obtains the lowest regret, reducing regret from
$758.71$ for Uniform FM to $729.73$, and also outperforming 2Stage PTO, SPO+,
and Task-based E2E. Because the outcome units and regret scale differ from the
portfolio benchmarks, we interpret this comparison within the PEMS-BAY task
rather than comparing absolute regret values across tasks.
\begin{table}[ht]
\centering
\small
\caption{{PEMS-BAY traffic-CVaR benchmark.}
Full-test regret under the frozen traffic-CVaR evaluator. Lower is better.}
\label{tab:pemsbay}
\begin{tabular}{lccccc}
\toprule
 & Uniform FM & 2Stage PTO & SPO+ & Task-based E2E & DW-FM \\
\midrule
Regret & 758.71 & 798.51 & 837.09 & 1754.45 & \textbf{729.73} \\
\bottomrule
\end{tabular}
\end{table}

\paragraph{Summary of experimental findings.}
Across the three benchmarks, DW-FM improves downstream regret relative
to the baselines. The synthetic sweep shows
robustness across nonlinear degrees, the semi-real portfolio task links regret
reduction to improved downside-tail fit, and PEMS-BAY provides an additional
real-data check under a different outcome geometry. Overall, the experiments
support the view that scenario generators for CSO should prioritize
decision-sensitive regions rather than uniform distributional fit.   

\section{Conclusion}
We identify and address the mismatch between uniform conditional generative
training and regret-driven contextual stochastic optimization. We propose
Decision-Weighted Flow Matching (DW-FM), which preserves standard flow matching
while reweighting training toward decision-sensitive regions. Our theory links
regret to decision discrepancy and pathwise velocity mismatch, motivating an
adjoint-weighted ideal objective and practical endpoint-weighted surrogates with
controlled tilting bias and regret bounds. Experiments in
synthetic and real data show improved downstream regret and downside-tail
behavior over standard baselines. DW-FM thus
offers a principled route to training conditional generators for decision quality
rather than uniform distributional fit.

\bibliographystyle{unsrtnat}
\bibliography{reference}

@book{shapiro2009lectures,
  title     = {Lectures on Stochastic Programming: Modeling and Theory},
  author    = {Shapiro, Alexander and Dentcheva, Darinka and Ruszczy{\'n}ski, Andrzej},
  series    = {MOS-SIAM Series on Optimization},
  volume    = {9},
  publisher = {Society for Industrial and Applied Mathematics and Mathematical Optimization Society},
  address   = {Philadelphia, PA},
  year      = {2009}
}

@article{rockafellar2000optimization,
  title   = {Optimization of Conditional Value-at-Risk},
  author  = {Rockafellar, R. Tyrrell and Uryasev, Stanislav},
  journal = {Journal of Risk},
  volume  = {2},
  number  = {3},
  pages   = {21--41},
  year    = {2000}
}

@inproceedings{donti2017task,
  title     = {Task-based End-to-end Model Learning in Stochastic Optimization},
  author    = {Donti, Priya L. and Amos, Brandon and Kolter, J. Zico},
  booktitle = {Advances in Neural Information Processing Systems},
  volume    = {30},
  year      = {2017}
}

@article{elmachtoub2022smart,
  title   = {Smart ``Predict, then Optimize''},
  author  = {Elmachtoub, Adam N. and Grigas, Paul},
  journal = {Management Science},
  volume  = {68},
  number  = {1},
  pages   = {9--26},
  year    = {2022},
  doi     = {10.1287/mnsc.2020.3922}
}

@inproceedings{wilder2019melding,
  title     = {Melding the Data-Decisions Pipeline: Decision-Focused Learning for Combinatorial Optimization},
  author    = {Wilder, Bryan and Dilkina, Bistra and Tambe, Milind},
  booktitle = {Proceedings of the AAAI Conference on Artificial Intelligence},
  volume    = {33},
  number    = {01},
  pages     = {1658--1665},
  year      = {2019},
  doi       = {10.1609/aaai.v33i01.33011658}
}

@inproceedings{ferber2020mipaal,
  title     = {MIPaaL: Mixed Integer Program as a Layer},
  author    = {Ferber, Aaron and Wilder, Bryan and Dilkina, Bistra and Tambe, Milind},
  booktitle = {Proceedings of the AAAI Conference on Artificial Intelligence},
  volume    = {34},
  number    = {02},
  pages     = {1504--1511},
  year      = {2020},
  doi       = {10.1609/aaai.v34i02.5509}
}

@inproceedings{lipman2023flow,
  title     = {Flow Matching for Generative Modeling},
  author    = {Lipman, Yaron and Chen, Ricky T. Q. and Ben-Hamu, Heli and Nickel, Maximilian and Le, Matthew},
  booktitle = {International Conference on Learning Representations},
  year      = {2023},
  url       = {https://openreview.net/forum?id=PqvMRDCJT9t}
}

@article{wang2025gendfl,
  title   = {Gen-DFL: Decision-Focused Generative Learning for Robust Decision Making},
  author  = {Wang, Prince Zizhuang and Chen, Shuyi and Liang, Jinhao and Fioretto, Ferdinando and Zhu, Shixiang},
  journal = {arXiv preprint arXiv:2502.05468},
  year    = {2025}
}

@article{zhao2025diffusiondfl,
  title   = {Diffusion-DFL: Decision-focused Diffusion Models for Stochastic Optimization},
  author  = {Zhao, Zihao and Yeh, Christopher and Kong, Lingkai and Wang, Kai},
  journal = {arXiv preprint arXiv:2510.11590},
  year    = {2025}
}

@inproceedings{agrawal2019differentiable,
  title     = {Differentiable Convex Optimization Layers},
  author    = {Agrawal, Akshay and Amos, Brandon and Barratt, Shane and Boyd, Stephen and Diamond, Steven and Kolter, J. Zico},
  booktitle = {Advances in Neural Information Processing Systems},
  volume    = {32},
  year      = {2019}
}

@inproceedings{tong2024improving,
  title     = {Improving and Generalizing Flow-Based Generative Models with Minibatch Optimal Transport},
  author    = {Tong, Alexander and Fatras, Kilian and Malkin, Nikolay and Huguet, Guillaume and Zhang, Yanlei and Rector-Brooks, Jarrid and Wolf, Guy and Bengio, Yoshua},
  booktitle = {Transactions on Machine Learning Research},
  year      = {2024},
  url       = {https://openreview.net/forum?id=CD9Snc73AW}
}

@inproceedings{elkan2001foundations,
  title     = {The Foundations of Cost-Sensitive Learning},
  author    = {Elkan, Charles},
  booktitle = {Proceedings of the Seventeenth International Joint Conference on Artificial Intelligence},
  pages     = {973--978},
  year      = {2001}
}

@inproceedings{zadrozny2003cost,
  title     = {Cost-Sensitive Learning by Cost-Proportionate Example Weighting},
  author    = {Zadrozny, Bianca and Langford, John and Abe, Naoki},
  booktitle = {Proceedings of the Third IEEE International Conference on Data Mining},
  pages     = {435--442},
  year      = {2003},
  doi       = {10.1109/ICDM.2003.1250950}
}

@book{santambrogio2015optimal,
  title={Optimal Transport for Applied Mathematicians: Calculus of Variations, PDEs, and Modeling},
  author={Santambrogio, Filippo},
  volume={87},
  year={2015},
  publisher={Birkh{\"a}user}
}

@article{cances2020variational,
  title={A variational finite volume scheme for Wasserstein gradient flows},
  author={Cances, Cl{\'e}ment and Gallou{\"e}t, Thomas O and Todeschi, Gabriele},
  journal={Numerische Mathematik},
  volume={146},
  number={3},
  pages={437--480},
  year={2020},
  publisher={Springer}
}

@article{fama1993common,
  title={Common risk factors in the returns on stocks and bonds},
  author={Fama, Eugene F and French, Kenneth R},
  journal={Journal of Financial Economics},
  volume={33},
  number={1},
  pages={3--56},
  year={1993},
  publisher={Elsevier}
}

@misc{kenfrench,
  author = {Kenneth R. French},
  title = {Ken French Data Library},
  howpublished = {\url{http://mba.tuck.dartmouth.edu/pages/faculty/ken.french/data_library.html}},
  year = {2025}
}

@misc{pemsbay,
  title = {PEMS-BAY Traffic Data},
  howpublished = {\url{https://pems.dot.ca.gov/}}, 
  year = {2025}
}

@article{bertsimas2020predictive,
  title={From predictive to prescriptive analytics},
  author={Bertsimas, Dimitris and Kallus, Nathan},
  journal={Management Science},
  volume={66},
  number={3},
  pages={1025--1044},
  year={2020},
  publisher={INFORMS}
}

@article{kallus2023stochastic,
  title={Stochastic optimization forests},
  author={Kallus, Nathan and Mao, Xiaojie},
  journal={Management Science},
  volume={69},
  number={4},
  pages={1975--1994},
  year={2023},
  publisher={INFORMS}
}

@inproceedings{amos2017optnet,
  title={Optnet: Differentiable optimization as a layer in neural networks},
  author={Amos, Brandon and Kolter, J Zico},
  booktitle={International conference on machine learning},
  pages={136--145},
  year={2017},
  organization={PMLR}
}

@article{mandi2024decision,
  title={Decision-focused learning: Foundations, state of the art, benchmark and future opportunities},
  author={Mandi, Jayanta and Kotary, James and Berden, Senne and Mulamba, Maxime and Bucarey, Victor and Guns, Tias and Fioretto, Ferdinando},
  journal={Journal of Artificial Intelligence Research},
  volume={80},
  pages={1623--1701},
  year={2024}
}

@article{berthet2020learning,
  title={Learning with differentiable pertubed optimizers},
  author={Berthet, Quentin and Blondel, Mathieu and Teboul, Olivier and Cuturi, Marco and Vert, Jean-Philippe and Bach, Francis},
  journal={Advances in neural information processing systems},
  volume={33},
  pages={9508--9519},
  year={2020}
}

@inproceedings{rasul2021autoregressive,
  title={Autoregressive denoising diffusion models for multivariate probabilistic time series forecasting},
  author={Rasul, Kashif and Seward, Calvin and Schuster, Ingmar and Vollgraf, Roland},
  booktitle={International conference on machine learning},
  pages={8857--8868},
  year={2021},
  organization={PMLR}
}

@article{yan2021scoregrad,
  title={Scoregrad: Multivariate probabilistic time series forecasting with continuous energy-based generative models},
  author={Yan, Tijin and Zhang, Hongwei and Zhou, Tong and Zhan, Yufeng and Xia, Yuanqing},
  journal={arXiv preprint arXiv:2106.10121},
  year={2021}
}

@article{liu2022flow,
  title={Flow straight and fast: Learning to generate and transfer data with rectified flow},
  author={Liu, Xingchao and Gong, Chengyue and Liu, Qiang},
  journal={arXiv preprint arXiv:2209.03003},
  year={2022}
}

@article{albergo2022building,
  title={Building normalizing flows with stochastic interpolants},
  author={Albergo, Michael S and Vanden-Eijnden, Eric},
  journal={arXiv preprint arXiv:2209.15571},
  year={2022}
}

@article{shimodaira2000improving,
  title={Improving predictive inference under covariate shift by weighting the log-likelihood function},
  author={Shimodaira, Hidetoshi},
  journal={Journal of statistical planning and inference},
  volume={90},
  number={2},
  pages={227--244},
  year={2000},
  publisher={Elsevier}
}

@article{sugiyama2007covariate,
  title={Covariate shift adaptation by importance weighted cross validation.},
  author={Sugiyama, Masashi and Krauledat, Matthias and M{\"u}ller, Klaus-Robert},
  journal={Journal of Machine Learning Research},
  volume={8},
  number={5},
  year={2007}
}

@inproceedings{byrd2019effect,
  title={What is the effect of importance weighting in deep learning?},
  author={Byrd, Jonathon and Lipton, Zachary},
  booktitle={International conference on machine learning},
  pages={872--881},
  year={2019},
  organization={PMLR}
}

@article{albergo2025stochastic,
  title={Stochastic interpolants: A unifying framework for flows and diffusions},
  author={Albergo, Michael and Boffi, Nicholas M and Vanden-Eijnden, Eric},
  journal={Journal of Machine Learning Research},
  volume={26},
  number={209},
  pages={1--80},
  year={2025}
}

\newpage
\appendix

\section{Technical Proofs}
\label{app:A}
\begin{algorithm}[ht]
\caption{Decision-Weighted Flow Matching (DW-FM)}
\label{alg:DW-FM}
\begin{algorithmic}[1]
\Require Training data $\mathcal{D}=\{(\mathbf{x}_i,\mathbf{s}_i)\}_{i=1}^n$; base sampler $q_{0,\mathbf{x}}$; reference routine $\mathsf{Ref}$; weight parameter $\lambda\ge0$; vector field $\mathbf{v}_\theta$.
\Ensure Learned vector field $\mathbf{v}_\theta$.
\While{not converged}
    \State Sample a minibatch $\mathcal{B}\subseteq\mathcal{D}$ of pairs $(\mathbf{x},\mathbf{s}_1)$.
    \State For each $(\mathbf{x},\mathbf{s}_1)\in\mathcal{B}$, set
    $\widehat{\mathbf{z}}_{\mathbf{x}}\gets\mathsf{Ref}(\mathbf{x})$ and
    $\widehat w_{\mathbf{x}}(\mathbf{s}_1)
    \gets
    1+\lambda\|\nabla_{\mathbf{s}}\ell_{\mathbf{x}}(\widehat{\mathbf{z}}_{\mathbf{x}},\mathbf{s}_1)\|^2$.
    \State Sample $\mathbf{s}_0\sim q_{0,\mathbf{x}}$ and $t\sim\mathrm{Unif}[0,1]$ for each pair; set
    $\mathbf{s}_t=(1-t)\mathbf{s}_0+t\mathbf{s}_1$ and
    $\boldsymbol{\Delta}=\mathbf{s}_1-\mathbf{s}_0$.
    \State Update $\theta$ by a stochastic gradient step on
    \[
    \frac{1}{|\mathcal{B}|}
    \sum_{(\mathbf{x},\mathbf{s}_1)\in\mathcal{B}}
    \widehat w_{\mathbf{x}}(\mathbf{s}_1)
    \left\|
    \mathbf{v}_\theta(t,\mathbf{s}_t,\mathbf{x})-\boldsymbol{\Delta}
    \right\|^2 .
    \]
\EndWhile
\State \Return $\mathbf{v}_\theta$.
\end{algorithmic}
\end{algorithm}

\subsection{Proof of Proposition~\ref{prop:regret-discrepancy}}

Fix a context $\mathbf{x}$. Recall that
\[
\mathbf{z}_{\theta}(\mathbf{x})
\in
\argmin_{\mathbf{z}\in\mathcal{Z}}
R_{\mathbf{x}}(\mathbf{z};q_{\theta,\mathbf{x}})
\]
is the plug-in decision under the learned law, while
\[
\mathbf{z}_{\mathbf{x}}^\star
\in
\argmin_{\mathbf{z}\in\mathcal{Z}}
R_{\mathbf{x}}(\mathbf{z};q_{\mathbf{x}}^\star)
\]
is the population-optimal decision under the true law. Since
$\mathbf{z}_{\theta}(\mathbf{x})$ minimizes
$R_{\mathbf{x}}(\cdot;q_{\theta,\mathbf{x}})$, we have
\[
R_{\mathbf{x}}\!\left(\mathbf{z}_{\theta}(\mathbf{x});q_{\theta,\mathbf{x}}\right)
\le
R_{\mathbf{x}}\!\left(\mathbf{z}_{\mathbf{x}}^\star;q_{\theta,\mathbf{x}}\right).
\]
Therefore,
\begin{align*}
\mathrm{Reg}_{\mathbf{x}}(\theta)
&=
R_{\mathbf{x}}\!\left(\mathbf{z}_{\theta}(\mathbf{x});q_{\mathbf{x}}^\star\right)
-
R_{\mathbf{x}}\!\left(\mathbf{z}_{\mathbf{x}}^\star;q_{\mathbf{x}}^\star\right)\\
&=
\Big[
R_{\mathbf{x}}\!\left(\mathbf{z}_{\theta}(\mathbf{x});q_{\mathbf{x}}^\star\right)
-
R_{\mathbf{x}}\!\left(\mathbf{z}_{\theta}(\mathbf{x});q_{\theta,\mathbf{x}}\right)
\Big]+
\Big[
R_{\mathbf{x}}\!\left(\mathbf{z}_{\theta}(\mathbf{x});q_{\theta,\mathbf{x}}\right)
-
R_{\mathbf{x}}\!\left(\mathbf{z}_{\mathbf{x}}^\star;q_{\theta,\mathbf{x}}\right)
\Big]\\
&\quad+
\Big[
R_{\mathbf{x}}\!\left(\mathbf{z}_{\mathbf{x}}^\star;q_{\theta,\mathbf{x}}\right)
-
R_{\mathbf{x}}\!\left(\mathbf{z}_{\mathbf{x}}^\star;q_{\mathbf{x}}^\star\right)
\Big].
\end{align*}
The middle term is nonpositive by the optimality of
$\mathbf{z}_{\theta}(\mathbf{x})$ under $q_{\theta,\mathbf{x}}$. Hence
\begin{align*}
\mathrm{Reg}_{\mathbf{x}}(\theta)
&\le
\left|
R_{\mathbf{x}}\!\left(\mathbf{z}_{\theta}(\mathbf{x});q_{\mathbf{x}}^\star\right)
-
R_{\mathbf{x}}\!\left(\mathbf{z}_{\theta}(\mathbf{x});q_{\theta,\mathbf{x}}\right)
\right|+
\left|
R_{\mathbf{x}}\!\left(\mathbf{z}_{\mathbf{x}}^\star;q_{\theta,\mathbf{x}}\right)
-
R_{\mathbf{x}}\!\left(\mathbf{z}_{\mathbf{x}}^\star;q_{\mathbf{x}}^\star\right)
\right|.
\end{align*}
By the definition of $d_{\mathrm{dec},\mathbf{x}}$,it follows that
\[
\mathrm{Reg}_{\mathbf{x}}(\theta)
\le
2\,d_{\mathrm{dec},\mathbf{x}}(q_{\mathbf{x}}^\star,q_{\theta,\mathbf{x}}).
\]
Since this bound holds for every context $\mathbf{x}$, evaluating it at the random context
$\mathbf{X}$ and taking expectations gives
\[
\mathbb E\!\left[\mathrm{Reg}_{\mathbf{X}}(\theta)\right]
\le
2\,\mathbb E\!\left[
d_{\mathrm{dec},\mathbf{X}}(q_{\mathbf{X}}^\star,q_{\theta,\mathbf{X}})
\right].
\]
\hfill\(\square\)


\subsection{Proof of Proposition~\ref{prop:adjoint-identity}}

\begin{proposition}
\label{prop:adjoint-identity}
Under Assumptions~\ref{ass:path-adjoint-regularity}
and~\ref{ass:no-boundary-flux}, for every fixed decision
\(\mathbf z\in\mathcal Z\),
\[
\mathbb{E}_{q_{\theta,\mathbf{x}}}
\!\left[
\ell_{\mathbf{x}}(\mathbf z,\mathbf S)
\right]
-
\mathbb{E}_{q_{\mathbf{x}}^\star}
\!\left[
\ell_{\mathbf{x}}(\mathbf z,\mathbf S)
\right]
=
\int_0^1
\!\int_{\mathcal S}
\left\langle
\nabla_{\mathbf s}\phi_{\mathbf{x},\mathbf z}(t,\mathbf s),
\mathbf v_\theta(t,\mathbf s,\mathbf x)
-
\mathbf u_{\mathbf x}(t,\mathbf s)
\right\rangle
\,dq_{\mathbf{x},t}^{\theta}(\mathbf s)\,dt .
\]
\end{proposition}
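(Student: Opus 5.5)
The plan is to study the scalar function \(F(t):=\int_{\mathcal S}\phi_{\mathbf{x},\mathbf z}(t,\mathbf s)\,\rho^{\theta}_{\mathbf{x},t}(\mathbf s)\,d\mathbf s\), the transported loss integrated against the \emph{learned} marginal, and to write the risk difference as \(F(1)-F(0)=\int_0^1 F'(t)\,dt\).

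The first step is to identify the two endpoints. At \(t=1\) the terminal condition in \eqref{eq:adjoint-def-method} gives \(\phi_{\mathbf{x},\mathbf z}(1,\cdot)=\ell_{\mathbf{x}}(\mathbf z,\cdot)\), and \(\rho^\theta_{\mathbf{x},1}\) is the density of \(q_{\theta,\mathbf{x}}\), so \(F(1)=\mathbb E_{q_{\theta,\mathbf{x}}}[\ell_{\mathbf{x}}(\mathbf z,\mathbf S)]\). At \(t=0\) the learned path starts at \(q_{0,\mathbf{x}}\), which is also the starting law of the target path. So \(F(0)=\int\phi_{\mathbf{x},\mathbf z}(0,\cdot)\,\rho_{\mathbf{x},0}\). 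I will show this equals \(\mathbb E_{q^\star_{\mathbf{x}}}[\ell_{\mathbf{x}}(\mathbf z,\mathbf S)]\) by running the same argument on \(G(t):=\int\phi_{\mathbf{x},\mathbf z}(t,\cdot)\,\rho_{\mathbf{x},t}\). Differentiating \(G\) and using the continuity equation for \(\rho_{\mathbf{x},t}\) with velocity \(\mathbf u_{\mathbf{x}}\) gives
\[
G'(t)=\int\big(\partial_t\phi+\mathbf u_{\mathbf{x}}^\top\nabla_{\mathbf s}\phi\big)\rho_{\mathbf{x},t}\,d\mathbf s .
\]
This step needs an integration by parts, and the boundary term it produces is exactly the second flux integral in Assumption~\ref{ass:no-boundary-flux}, which vanishes. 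The integrand is zero by the transport equation \eqref{eq:adjoint-def-method}. Hence \(G\) is constant, so \(F(0)=G(0)=G(1)=\mathbb E_{q^\star_{\mathbf{x}}}[\ell_{\mathbf{x}}(\mathbf z,\mathbf S)]\).

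Next I differentiate \(F\). I substitute \(\partial_t\rho^\theta_{\mathbf{x},t}=-\nabla\cdot(\rho^\theta_{\mathbf{x},t}\mathbf v_\theta)\) and integrate by parts; the boundary term now vanishes by the first flux condition in Assumption~\ref{ass:no-boundary-flux}. This gives
\[
F'(t)=\int\big(\partial_t\phi_{\mathbf{x},\mathbf z}+\mathbf v_\theta^\top\nabla_{\mathbf s}\phi_{\mathbf{x},\mathbf z}\big)\rho^\theta_{\mathbf{x},t}\,d\mathbf s .
\]
Replacing \(\partial_t\phi\) by \(-\mathbf u_{\mathbf{x}}^\top\nabla_{\mathbf s}\phi\) from \eqref{eq:adjoint-def-method} yields \(F'(t)=\int\langle\nabla_{\mathbf s}\phi_{\mathbf{x},\mathbf z},\mathbf v_\theta-\mathbf u_{\mathbf{x}}\rangle\,dq^\theta_{\mathbf{x},t}\). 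Integrating over \([0,1]\) then gives the claimed identity.

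The main obstacle is justifying the analytic steps rather than the algebra. These are differentiating under the integral sign, the absolute continuity of \(F\) and \(G\) in \(t\), and the validity of the divergence theorem on \(\mathcal S\). I would not prove these from scratch; I would take them from the ``sufficiently regular'' and finite-integrability clauses of Assumption~\ref{ass:path-adjoint-regularity}, together with the integrable-gradient condition on \(\phi_{\mathbf{x},\mathbf z}\). If pointwise differentiation is delicate, the fallback is to use the weak formulation of the continuity equation tested against \(\phi_{\mathbf{x},\mathbf z}\), which gives the integrated identity directly.
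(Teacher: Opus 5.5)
Your proposal is correct and follows essentially the same route as the paper. The paper also differentiates $I_{\mathbf z}(t)=\int\phi_{\mathbf x,\mathbf z}(t,\cdot)\,dq^{\theta}_{\mathbf x,t}$ via the continuity equation, integration by parts with the first no-boundary-flux condition, and the transport equation. It then identifies $I_{\mathbf z}(0)$ with $\mathbb E_{q^\star_{\mathbf x}}[\ell_{\mathbf x}(\mathbf z,\mathbf S)]$ by showing that the analogous target-path quantity $J_{\mathbf z}(t)$ is constant and that both paths start from $q_{0,\mathbf x}$, and it handles the analytic justifications at the same level you do, by appeal to Assumption~\ref{ass:path-adjoint-regularity}.
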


\emph{Proof.}
Fix a context \(\mathbf x\) and a decision \(\mathbf z\in\mathcal Z\). Let
\(\{\mu_{\mathbf{x},t}\}_{t\in[0,1]}\) denote the target path in
Assumption~\ref{ass:path-adjoint-regularity}, so that
\(\mu_{\mathbf{x},0}=q_{0,\mathbf{x}}\) and
\(\mu_{\mathbf{x},1}=q_{\mathbf{x}}^\star\). Since the learned path
\(q_{\mathbf{x},t}^{\theta}\) is absolutely continuous, write
\(q_{\mathbf{x},t}^{\theta}(d\mathbf s)
=
\rho_{\mathbf{x},t}^{\theta}(\mathbf s)d\mathbf s\). Define
\[
I_{\mathbf z}(t)
:=
\int_{\mathcal S}
\phi_{\mathbf{x},\mathbf z}(t,\mathbf s)
\,dq_{\mathbf{x},t}^{\theta}(\mathbf s)
=
\int_{\mathcal S}
\phi_{\mathbf{x},\mathbf z}(t,\mathbf s)
\rho_{\mathbf{x},t}^{\theta}(\mathbf s)\,d\mathbf s .
\]
Differentiating in time and using the continuity equation
\(\partial_t\rho_{\mathbf{x},t}^{\theta}
+
\nabla_{\mathbf s}\cdot
(\rho_{\mathbf{x},t}^{\theta}\mathbf v_\theta)=0\), we obtain
\[
\frac{d}{dt}I_{\mathbf z}(t)
=
\int_{\mathcal S}
\partial_t\phi_{\mathbf{x},\mathbf z}(t,\mathbf s)
\rho_{\mathbf{x},t}^{\theta}(\mathbf s)\,d\mathbf s
-
\int_{\mathcal S}
\phi_{\mathbf{x},\mathbf z}(t,\mathbf s)
\nabla_{\mathbf s}\!\cdot
\left(
\rho_{\mathbf{x},t}^{\theta}(\mathbf s)
\mathbf v_\theta(t,\mathbf s,\mathbf x)
\right)
d\mathbf s .
\]
By integration by parts and the no-boundary-flux condition for the learned
path,
\[
-
\int_{\mathcal S}
\phi_{\mathbf{x},\mathbf z}(t,\mathbf s)
\nabla_{\mathbf s}\!\cdot
\left(
\rho_{\mathbf{x},t}^{\theta}(\mathbf s)
\mathbf v_\theta(t,\mathbf s,\mathbf x)
\right)
d\mathbf s
=
\int_{\mathcal S}
\left\langle
\nabla_{\mathbf s}\phi_{\mathbf{x},\mathbf z}(t,\mathbf s),
\mathbf v_\theta(t,\mathbf s,\mathbf x)
\right\rangle
\rho_{\mathbf{x},t}^{\theta}(\mathbf s)\,d\mathbf s .
\]
Moreover, the backward transport equation~\eqref{eq:adjoint-def-method} gives
\[
\partial_t\phi_{\mathbf{x},\mathbf z}(t,\mathbf s)
=
-
\left\langle
\mathbf u_{\mathbf x}(t,\mathbf s),
\nabla_{\mathbf s}\phi_{\mathbf{x},\mathbf z}(t,\mathbf s)
\right\rangle .
\]
Combining the last two displays,
\[
\frac{d}{dt}I_{\mathbf z}(t)
=
\int_{\mathcal S}
\left\langle
\nabla_{\mathbf s}\phi_{\mathbf{x},\mathbf z}(t,\mathbf s),
\mathbf v_\theta(t,\mathbf s,\mathbf x)
-
\mathbf u_{\mathbf x}(t,\mathbf s)
\right\rangle
\,dq_{\mathbf{x},t}^{\theta}(\mathbf s).
\]
Integrating over \(t\in[0,1]\) yields
\[
I_{\mathbf z}(1)-I_{\mathbf z}(0)
=
\int_0^1
\!\int_{\mathcal S}
\left\langle
\nabla_{\mathbf s}\phi_{\mathbf{x},\mathbf z}(t,\mathbf s),
\mathbf v_\theta(t,\mathbf s,\mathbf x)
-
\mathbf u_{\mathbf x}(t,\mathbf s)
\right\rangle
\,dq_{\mathbf{x},t}^{\theta}(\mathbf s)\,dt .
\]

At terminal time, \(\phi_{\mathbf{x},\mathbf z}(1,\mathbf s)
=
\ell_{\mathbf{x}}(\mathbf z,\mathbf s)\) and
\(q_{\mathbf{x},1}^{\theta}=q_{\theta,\mathbf{x}}\). Hence
\[
I_{\mathbf z}(1)
=
\mathbb E_{q_{\theta,\mathbf{x}}}
\!\left[
\ell_{\mathbf{x}}(\mathbf z,\mathbf S)
\right].
\]

It remains to identify \(I_{\mathbf z}(0)\). Define the analogous transported
quantity along the target path:
\[
J_{\mathbf z}(t)
:=
\int_{\mathcal S}
\phi_{\mathbf{x},\mathbf z}(t,\mathbf s)
\,d\mu_{\mathbf{x},t}(\mathbf s).
\]
Using the continuity equation for \(\mu_{\mathbf{x},t}\), the same integration
by parts argument, and the no-boundary-flux condition for the target path gives
\[
\frac{d}{dt}J_{\mathbf z}(t)=0.
\]
Therefore \(J_{\mathbf z}(0)=J_{\mathbf z}(1)\). Since
\(\mu_{\mathbf{x},1}=q_{\mathbf{x}}^\star\),
\[
J_{\mathbf z}(1)
=
\mathbb E_{q_{\mathbf{x}}^\star}
\!\left[
\ell_{\mathbf{x}}(\mathbf z,\mathbf S)
\right].
\]
Since the learned and target paths share the same initial distribution,
\(q_{\mathbf{x},0}^{\theta}=\mu_{\mathbf{x},0}=q_{0,\mathbf{x}}\), we also have
\[
I_{\mathbf z}(0)
=
\int_{\mathcal S}
\phi_{\mathbf{x},\mathbf z}(0,\mathbf s)
\,dq_{\mathbf{x},0}^{\theta}(\mathbf s)
=
\int_{\mathcal S}
\phi_{\mathbf{x},\mathbf z}(0,\mathbf s)
\,d\mu_{\mathbf{x},0}(\mathbf s)
=
J_{\mathbf z}(0).
\]
Combining the last two displays gives
\[
I_{\mathbf z}(0)
=
\mathbb E_{q_{\mathbf{x}}^\star}
\!\left[
\ell_{\mathbf{x}}(\mathbf z,\mathbf S)
\right].
\]
Substituting the endpoint identities for \(I_{\mathbf z}(1)\) and
\(I_{\mathbf z}(0)\) into the integrated identity proves the proposition.
\hfill\(\square\)
\subsection{Proof of Theorem~\ref{thm:ideal-controls-regret}}

Fix a context $\mathbf{x}$ and a decision $\mathbf{z} \in \mathcal Z$. By Proposition~\ref{prop:adjoint-identity},
\[
\mathbb E_{q_{\theta,\mathbf{x}}}\!\left[\ell_{\mathbf{x}}(\mathbf{z},\mathbf{S})\right]
-
\mathbb E_{q_{\mathbf{x}}^\star}\!\left[\ell_{\mathbf{x}}(\mathbf{z},\mathbf{S})\right]
=
\int_0^1
\int_{\mathcal S}
\left\langle
\nabla_{\mathbf{s}} \phi_{\mathbf{x},\mathbf{z}}(t,\mathbf{s}),
\, \mathbf{v}_\theta(t,\mathbf{s},\mathbf{x})-\mathbf{u}_{\mathbf{x}}(t,\mathbf{s})
\right\rangle
\, dq_{\mathbf{x},t}^\theta(\mathbf{s})\, dt.
\]
Taking absolute values and applying the Cauchy--Schwarz inequality with respect to the probability measure
$dq_{\mathbf{x},t}^\theta(\mathbf{s})\,dt$, we obtain
\begin{align*}
&\left|
\mathbb E_{q_{\theta,\mathbf{x}}}\!\left[\ell_{\mathbf{x}}(\mathbf{z},\mathbf{S})\right]
-
\mathbb E_{q_{\mathbf{x}}^\star}\!\left[\ell_{\mathbf{x}}(\mathbf{z},\mathbf{S})\right]
\right| \\
&\le
\int_0^1
\int_{\mathcal S}
\left\|
\nabla_{\mathbf{s}} \phi_{\mathbf{x},\mathbf{z}}(t,\mathbf{s})
\right\|
\left\|
\mathbf{v}_\theta(t,\mathbf{s},\mathbf{x})-\mathbf{u}_{\mathbf{x}}(t,\mathbf{s})
\right\|
\, dq_{\mathbf{x},t}^\theta(\mathbf{s})\, dt \\
&\le
\left(
\int_0^1
\int_{\mathcal S}
\left\|
\nabla_{\mathbf{s}} \phi_{\mathbf{x},\mathbf{z}}(t,\mathbf{s})
\right\|^2
\left\|
\mathbf{v}_\theta(t,\mathbf{s},\mathbf{x})-\mathbf{u}_{\mathbf{x}}(t,\mathbf{s})
\right\|^2
\, dq_{\mathbf{x},t}^\theta(\mathbf{s})\, dt
\right)^{1/2}
\left(
\int_0^1
\int_{\mathcal S}
dq_{\mathbf{x},t}^\theta(\mathbf{s})\, dt
\right)^{1/2}.
\end{align*}
Since $q_{\mathbf{x},t}^\theta$ is a probability law on $\mathcal S$ for every $t \in [0,1]$,
\[
\int_0^1
\int_{\mathcal S}
dq_{\mathbf{x},t}^\theta(\mathbf{s})\, dt
=
\int_0^1 1\,dt
=
1.
\]
Moreover, by the definition of the envelope
\[
M_{\mathbf{x}}(t,\mathbf{s})
:=
\sup_{\mathbf{z}\in\mathcal Z}
\left\|
\nabla_{\mathbf{s}} \phi_{\mathbf{x},\mathbf{z}}(t,\mathbf{s})
\right\|^2,
\]
we have
\[
\left\|
\nabla_{\mathbf{s}} \phi_{\mathbf{x},\mathbf{z}}(t,\mathbf{s})
\right\|^2
\le
M_{\mathbf{x}}(t,\mathbf{s}).
\]
Therefore,
\begin{align*}
\left|
\mathbb E_{q_{\theta,\mathbf{x}}}\!\left[\ell_{\mathbf{x}}(\mathbf{z},\mathbf{S})\right]
-
\mathbb E_{q_{\mathbf{x}}^\star}\!\left[\ell_{\mathbf{x}}(\mathbf{z},\mathbf{S})\right]
\right|
&\le
\left(
\int_0^1
\int_{\mathcal S}
M_{\mathbf{x}}(t,\mathbf{s})
\left\|
\mathbf{v}_\theta(t,\mathbf{s},\mathbf{x})-\mathbf{u}_{\mathbf{x}}(t,\mathbf{s})
\right\|^2
\, dq_{\mathbf{x},t}^\theta(\mathbf{s})\, dt
\right)^{1/2} \\
&=
L_{\mathrm{ideal},\mathbf{x}}(\theta)^{1/2}.
\end{align*}
Since this bound holds for every $\mathbf{z}\in\mathcal Z$, taking the supremum over $\mathbf{z}$ yields
\[
d_{\mathrm{dec},\mathbf{x}}\!\left(q_{\mathbf{x}}^\star,q_{\theta,\mathbf{x}}\right)
\le
L_{\mathrm{ideal},\mathbf{x}}(\theta)^{1/2}.
\]
The regret bound then follows from Proposition~\ref{prop:regret-discrepancy}:
\[
\mathrm{Reg}_{\mathbf{x}}(\theta)
\le
2\,d_{\mathrm{dec},\mathbf{x}}\!\left(q_{\mathbf{x}}^\star,q_{\theta,\mathbf{x}}\right)
\le
2\,L_{\mathrm{ideal},\mathbf{x}}(\theta)^{1/2}.
\]
Finally, evaluating at the random context $\mathbf{X}$ and taking expectations gives
\[
\mathbb E\!\left[\mathrm{Reg}_{\mathbf{X}}(\theta)\right]
\le
2\,\mathbb E\!\left[L_{\mathrm{ideal},\mathbf{X}}(\theta)^{1/2}\right].
\]
This completes the proof. \hfill\(\square\)

\subsection{Proof of Theorem~\ref{thm:endpoint-population-target}}

First fix a regression input value $\mathbf{Y}=\mathbf{y}$ such that
$m_{w,\mathbf{x}}(\mathbf{y})>0$. To simplify notation inside the proof, write
\[
m(\mathbf{y})
:=
m_{w,\mathbf{x}}(\mathbf{y})
=
\mathbb{E}_{\mathbf{x}}
\!\left[
w_{\mathbf{x}}(\mathbf{S}_1)\mid \mathbf{Y}=\mathbf{y}
\right],
\]
and
\[
\mathbf{u}_w(\mathbf{y})
:=
\mathbf{u}_{w,\mathbf{x}}(\mathbf{y})
=
\frac{
\mathbb{E}_{\mathbf{x}}
\!\left[
w_{\mathbf{x}}(\mathbf{S}_1)\boldsymbol{\Delta}
\mid
\mathbf{Y}=\mathbf{y}
\right]
}{
m(\mathbf{y})
}.
\]
For any measurable vector field $\mathbf{v}$, define
\[
\mathbf{a}
:=
\mathbf{v}(\mathbf{y})-\mathbf{u}_w(\mathbf{y}),
\qquad
\mathbf{b}
:=
\mathbf{u}_w(\mathbf{y})-\boldsymbol{\Delta}.
\]
Then
\[
\left\|
\mathbf{v}(\mathbf{y})-\boldsymbol{\Delta}
\right\|^2
=
\left\|
\mathbf{a}+\mathbf{b}
\right\|^2
=
\|\mathbf{a}\|^2
+
2\langle \mathbf{a},\mathbf{b}\rangle
+
\|\mathbf{b}\|^2.
\]
Multiplying by $w_{\mathbf{x}}(\mathbf{S}_1)$ and conditioning on
$\mathbf{Y}=\mathbf{y}$ gives
\begin{align*}
&
\mathbb{E}_{\mathbf{x}}
\!\left[
w_{\mathbf{x}}(\mathbf{S}_1)
\left\|
\mathbf{v}(\mathbf{y})-\boldsymbol{\Delta}
\right\|^2
\mid
\mathbf{Y}=\mathbf{y}
\right] \\
&=
m(\mathbf{y})
\left\|
\mathbf{v}(\mathbf{y})-\mathbf{u}_w(\mathbf{y})
\right\|^2
+
2
\left\langle
\mathbf{v}(\mathbf{y})-\mathbf{u}_w(\mathbf{y}),
\mathbb{E}_{\mathbf{x}}
\!\left[
w_{\mathbf{x}}(\mathbf{S}_1)\mathbf{b}
\mid
\mathbf{Y}=\mathbf{y}
\right]
\right\rangle \\
&\quad+
\mathbb{E}_{\mathbf{x}}
\!\left[
w_{\mathbf{x}}(\mathbf{S}_1)
\left\|
\mathbf{u}_w(\mathbf{y})-\boldsymbol{\Delta}
\right\|^2
\mid
\mathbf{Y}=\mathbf{y}
\right].
\end{align*}
Note that
\begin{align*}
\mathbb{E}_{\mathbf{x}}
\!\left[
w_{\mathbf{x}}(\mathbf{S}_1)\mathbf{b}
\mid
\mathbf{Y}=\mathbf{y}
\right]
&=
\mathbf{u}_w(\mathbf{y})
\mathbb{E}_{\mathbf{x}}
\!\left[
w_{\mathbf{x}}(\mathbf{S}_1)
\mid
\mathbf{Y}=\mathbf{y}
\right]
-
\mathbb{E}_{\mathbf{x}}
\!\left[
w_{\mathbf{x}}(\mathbf{S}_1)\boldsymbol{\Delta}
\mid
\mathbf{Y}=\mathbf{y}
\right] \\
&=
m(\mathbf{y})\mathbf{u}_w(\mathbf{y})
-
m(\mathbf{y})\mathbf{u}_w(\mathbf{y})
=
\mathbf{0}.
\end{align*}
Therefore,
\[
\mathbb{E}_{\mathbf{x}}
\!\left[
w_{\mathbf{x}}(\mathbf{S}_1)
\left\|
\mathbf{v}(\mathbf{y})-\boldsymbol{\Delta}
\right\|^2
\mid
\mathbf{Y}=\mathbf{y}
\right]
=
m(\mathbf{y})
\left\|
\mathbf{v}(\mathbf{y})-\mathbf{u}_w(\mathbf{y})
\right\|^2
+
C(\mathbf{y}),
\]
where
\[
C(\mathbf{y})
:=
\mathbb{E}_{\mathbf{x}}
\!\left[
w_{\mathbf{x}}(\mathbf{S}_1)
\left\|
\mathbf{u}_w(\mathbf{y})-\boldsymbol{\Delta}
\right\|^2
\mid
\mathbf{Y}=\mathbf{y}
\right]
\]
does not depend on $\mathbf{v}(\mathbf{y})$. Since
$m(\mathbf{y})>0$, the conditional risk is minimized uniquely at
$\mathbf{v}(\mathbf{y})=\mathbf{u}_w(\mathbf{y})$.

On the set where $m_{w,\mathbf{x}}(\mathbf{Y})=0$, the conditional weighted loss is zero for every
choice of $\mathbf{v}(\mathbf{Y})$, because
$w_{\mathbf{x}}(\mathbf{S}_1)\ge 0$ and
$\mathbb{E}_{\mathbf{x}}[w_{\mathbf{x}}(\mathbf{S}_1)\mid \mathbf{Y}]=0$ imply
$w_{\mathbf{x}}(\mathbf{S}_1)=0$ conditionally almost surely. Hence the value of
$\mathbf{u}_{w,\mathbf{x}}$ on this set is irrelevant.

Averaging the conditional decomposition over $\mathbf{Y}$ also gives the stronger identity
\begin{equation}
L_{w,\mathbf{x}}(\mathbf{v})
-
L_{w,\mathbf{x}}(\mathbf{u}_{w,\mathbf{x}})
=
\mathbb{E}_{\mathbf{x}}
\!\left[
m_{w,\mathbf{x}}(\mathbf{Y})
\left\|
\mathbf{v}(\mathbf{Y})-\mathbf{u}_{w,\mathbf{x}}(\mathbf{Y})
\right\|^2
\right].\label{eq: weighted loss identity}
\end{equation}
Therefore any population minimizer of \(L_{w,\mathbf{x}}\) agrees with
\(\mathbf{u}_{w,\mathbf{x}}\) almost surely under the conditional distribution
of \(\mathbf{Y}\) given \(\mathbf{X}=\mathbf{x}\) on
\(\{m_{w,\mathbf{x}}(\mathbf{Y})>0\}\), and any vector field with this property
is a population minimizer.

It remains to prove the covariance identity. By definition,
\[
\mathbf{u}_{\mathbf{x}}(\mathbf{Y})
=
\mathbb{E}_{\mathbf{x}}
\!\left[
\boldsymbol{\Delta}
\mid
\mathbf{Y}
\right].
\]
On \(\{m_{w,\mathbf{x}}(\mathbf{Y})>0\}\),
\begin{align*}
\mathbf{u}_{w,\mathbf{x}}(\mathbf{Y})
-
\mathbf{u}_{\mathbf{x}}(\mathbf{Y})
&=
\frac{
\mathbb{E}_{\mathbf{x}}
\!\left[
w_{\mathbf{x}}(\mathbf{S}_1)\boldsymbol{\Delta}
\mid
\mathbf{Y}
\right]
}{
\mathbb{E}_{\mathbf{x}}
\!\left[
w_{\mathbf{x}}(\mathbf{S}_1)
\mid
\mathbf{Y}
\right]
}
-
\mathbb{E}_{\mathbf{x}}
\!\left[
\boldsymbol{\Delta}
\mid
\mathbf{Y}
\right] \\
&=
\frac{
\mathbb{E}_{\mathbf{x}}
\!\left[
w_{\mathbf{x}}(\mathbf{S}_1)\boldsymbol{\Delta}
\mid
\mathbf{Y}
\right]
-
\mathbb{E}_{\mathbf{x}}
\!\left[
w_{\mathbf{x}}(\mathbf{S}_1)
\mid
\mathbf{Y}
\right]
\mathbb{E}_{\mathbf{x}}
\!\left[
\boldsymbol{\Delta}
\mid
\mathbf{Y}
\right]
}{
m_{w,\mathbf{x}}(\mathbf{Y})
} \\
&=
\frac{
\operatorname{Cov}_{\mathbf{x}}
\!\left(
w_{\mathbf{x}}(\mathbf{S}_1),
\boldsymbol{\Delta}
\mid
\mathbf{Y}
\right)
}{
m_{w,\mathbf{x}}(\mathbf{Y})
}.
\end{align*}
This proves the theorem. \hfill\(\square\)

\subsection{Proof of Theorem~\ref{thm:DW-FM-regret}}

Fix a context \(\mathbf{x}\). For notational simplicity, write
\(C_{\widehat w,\mathbf{x}}:=A_{\mathbf{x}}B_{\mathbf{x}}\). By
Assumption~\ref{ass:DW-FM-coverage},
\[
M_{\mathbf{x}}(t,\mathbf{s})
\rho_{\mathbf{x},t}^{\theta}(\mathbf{s})
\le
C_{\widehat w,\mathbf{x}}\,
m_{\widehat w,\mathbf{x}}(t,\mathbf{s})
p_{\mathbf{x}}(t,\mathbf{s})
\]
for almost every \((t,\mathbf{s})\). Therefore,
\begin{align*}
L_{\mathrm{ideal},\mathbf{x}}(\theta)
&=
\int_0^1
\int_{\mathcal S}
M_{\mathbf{x}}(t,\mathbf{s})
\left\|
\mathbf{v}_\theta(t,\mathbf{s},\mathbf{x})
-
\mathbf{u}_{\mathbf{x}}(t,\mathbf{s})
\right\|^2
\,dq_{\mathbf{x},t}^{\theta}(\mathbf{s})\,dt \\
&\le
C_{\widehat w,\mathbf{x}}
\int_0^1
\int_{\mathcal S}
m_{\widehat w,\mathbf{x}}(t,\mathbf{s})
\left\|
\mathbf{v}_\theta(t,\mathbf{s},\mathbf{x})
-
\mathbf{u}_{\mathbf{x}}(t,\mathbf{s})
\right\|^2
p_{\mathbf{x}}(t,\mathbf{s})\,d\mathbf{s}\,dt .
\end{align*}
Here \(m_{\widehat w,\mathbf{x}}(t,\mathbf{s})\) and
\(\mathbf u_{\widehat w,\mathbf{x}}(t,\mathbf{s})\) abbreviate the corresponding
quantities evaluated at \(\mathbf Y=(t,\mathbf{s},\mathbf{x})\).

Write
\[
\mathbf{a}(t,\mathbf{s})
:=
\mathbf{v}_\theta(t,\mathbf{s},\mathbf{x})
-
\mathbf{u}_{\widehat w,\mathbf{x}}(t,\mathbf{s}),
\qquad
\mathbf{b}(t,\mathbf{s})
:=
\mathbf{u}_{\widehat w,\mathbf{x}}(t,\mathbf{s})
-
\mathbf{u}_{\mathbf{x}}(t,\mathbf{s}).
\]
Then
\[
\left\|
\mathbf{v}_\theta(t,\mathbf{s},\mathbf{x})
-
\mathbf{u}_{\mathbf{x}}(t,\mathbf{s})
\right\|^2
=
\|\mathbf{a}(t,\mathbf{s})+\mathbf{b}(t,\mathbf{s})\|^2
\le
2\|\mathbf{a}(t,\mathbf{s})\|^2
+
2\|\mathbf{b}(t,\mathbf{s})\|^2.
\]
Therefore,
\begin{align*}
L_{\mathrm{ideal},\mathbf{x}}(\theta)
&\le
2C_{\widehat w,\mathbf{x}}
\int_0^1
\int_{\mathcal S}
m_{\widehat w,\mathbf{x}}(t,\mathbf{s})
\left\|
\mathbf{v}_\theta(t,\mathbf{s},\mathbf{x})
-
\mathbf{u}_{\widehat w,\mathbf{x}}(t,\mathbf{s})
\right\|^2
p_{\mathbf{x}}(t,\mathbf{s})\,d\mathbf{s}\,dt \\
&\quad+
2C_{\widehat w,\mathbf{x}}
\int_0^1
\int_{\mathcal S}
m_{\widehat w,\mathbf{x}}(t,\mathbf{s})
\left\|
\mathbf{u}_{\widehat w,\mathbf{x}}(t,\mathbf{s})
-
\mathbf{u}_{\mathbf{x}}(t,\mathbf{s})
\right\|^2
p_{\mathbf{x}}(t,\mathbf{s})\,d\mathbf{s}\,dt .
\end{align*}
By the weighted least-squares identity~\eqref{eq: weighted loss identity}
established in the proof of Theorem~\ref{thm:endpoint-population-target},
\[
\int_0^1
\int_{\mathcal S}
m_{\widehat w,\mathbf{x}}(t,\mathbf{s})
\left\|
\mathbf{v}_\theta(t,\mathbf{s},\mathbf{x})
-
\mathbf{u}_{\widehat w,\mathbf{x}}(t,\mathbf{s})
\right\|^2
p_{\mathbf{x}}(t,\mathbf{s})\,d\mathbf{s}\,dt
=
E_{\widehat w,\mathbf{x}}(\mathbf{v}_\theta),
\]
while the second integral is exactly
\(B_{\mathrm{tilt},\mathbf{x}}(\widehat w)\). Hence
\[
L_{\mathrm{ideal},\mathbf{x}}(\theta)
\le
2C_{\widehat w,\mathbf{x}}
\left(
E_{\widehat w,\mathbf{x}}(\mathbf{v}_\theta)
+
B_{\mathrm{tilt},\mathbf{x}}(\widehat w)
\right).
\]
The pointwise regret bound from Theorem~\ref{thm:ideal-controls-regret} gives
\begin{equation}
\mathrm{Reg}_{\mathbf{x}}(\theta)
\le
2L_{\mathrm{ideal},\mathbf{x}}(\theta)^{1/2}
\le
2\sqrt{
2C_{\widehat w,\mathbf{x}}
\left(
E_{\widehat w,\mathbf{x}}(\mathbf{v}_\theta)
+
B_{\mathrm{tilt},\mathbf{x}}(\widehat w)
\right)
}.
\label{eq:pointwise-regret}
\end{equation}
Averaging over \(\mathbf X\) yields the expected-regret bound.
\(\square\)
\subsection{Stability of the Plug-in Endpoint Score}
\label{app:plugin-weight-stability}

This appendix quantifies the error incurred when the oracle decision
\(\mathbf z_{\mathbf x}^\star\) in the endpoint score is replaced by the
reference decision \(\widehat{\mathbf z}_{\mathbf x}\). Recall that
\[
w_{\mathbf x}^\star(\mathbf s)
=
1+\lambda
\left\|
\nabla_{\mathbf s}
\ell_{\mathbf x}(\mathbf z_{\mathbf x}^\star,\mathbf s)
\right\|^2,
\qquad
\widehat w_{\mathbf x}(\mathbf s)
=
1+\lambda
\left\|
\nabla_{\mathbf s}
\ell_{\mathbf x}(\widehat{\mathbf z}_{\mathbf x},\mathbf s)
\right\|^2 .
\]

\begin{lemma}
\label{lem:plugin-weight-stability}
Fix a context \(\mathbf x\). Suppose there exist constants
\(L_{\mathbf x}<\infty\) and \(G_{\mathbf x}<\infty\) such that, for all
relevant \(\mathbf s\in\mathcal S\) and all relevant decisions
\(\mathbf z,\mathbf z'\in\mathcal Z\),
\[
\left\|
\nabla_{\mathbf s}\ell_{\mathbf x}(\mathbf z,\mathbf s)
-
\nabla_{\mathbf s}\ell_{\mathbf x}(\mathbf z',\mathbf s)
\right\|
\le
L_{\mathbf x}
\|\mathbf z-\mathbf z'\|,
\]
and
\[
\left\|
\nabla_{\mathbf s}
\ell_{\mathbf x}(\mathbf z_{\mathbf x}^\star,\mathbf s)
\right\|
\le
G_{\mathbf x}.
\]
If
\[
\|\widehat{\mathbf z}_{\mathbf x}-\mathbf z_{\mathbf x}^\star\|
\le
\varepsilon_{\mathbf x},
\]
then
\[
\sup_{\mathbf s\in\mathcal S}
\left|
\widehat w_{\mathbf x}(\mathbf s)
-
w_{\mathbf x}^\star(\mathbf s)
\right|
\le
\lambda
\left(
2G_{\mathbf x}L_{\mathbf x}\varepsilon_{\mathbf x}
+
L_{\mathbf x}^2\varepsilon_{\mathbf x}^2
\right).
\]
\end{lemma}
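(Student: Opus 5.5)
The plan is a direct pointwise computation. Fix $\mathbf s\in\mathcal S$ and abbreviate
$\mathbf g^\star:=\nabla_{\mathbf s}\ell_{\mathbf x}(\mathbf z_{\mathbf x}^\star,\mathbf s)$ and
$\widehat{\mathbf g}:=\nabla_{\mathbf s}\ell_{\mathbf x}(\widehat{\mathbf z}_{\mathbf x},\mathbf s)$, with $\boldsymbol\delta:=\widehat{\mathbf g}-\mathbf g^\star$. The constant terms $1$ cancel, so
\[
\widehat w_{\mathbf x}(\mathbf s)-w_{\mathbf x}^\star(\mathbf s)
=\lambda\left(\|\mathbf g^\star+\boldsymbol\delta\|^2-\|\mathbf g^\star\|^2\right)
=\lambda\left(2\langle \mathbf g^\star,\boldsymbol\delta\rangle+\|\boldsymbol\delta\|^2\right).
\]
This exact expansion is the key step. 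It is preferable to the factorization $(\|\widehat{\mathbf g}\|-\|\mathbf g^\star\|)(\|\widehat{\mathbf g}\|+\|\mathbf g^\star\|)$, since it yields the stated constants with no extra slack.

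Next, take absolute values, using $\lambda\ge 0$, and apply Cauchy--Schwarz to the cross term:
\[
\left|\widehat w_{\mathbf x}(\mathbf s)-w_{\mathbf x}^\star(\mathbf s)\right|\le\lambda\left(2\|\mathbf g^\star\|\,\|\boldsymbol\delta\|+\|\boldsymbol\delta\|^2\right).
\]
Then insert the two hypotheses. The bounded-gradient assumption gives $\|\mathbf g^\star\|\le G_{\mathbf x}$. The Lipschitz-in-decision assumption, applied with $\mathbf z=\widehat{\mathbf z}_{\mathbf x}$ and $\mathbf z'=\mathbf z_{\mathbf x}^\star$, together with $\|\widehat{\mathbf z}_{\mathbf x}-\mathbf z_{\mathbf x}^\star\|\le\varepsilon_{\mathbf x}$, gives $\|\boldsymbol\delta\|\le L_{\mathbf x}\varepsilon_{\mathbf x}$. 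Because the right-hand side is nondecreasing in $\|\boldsymbol\delta\|$ and $\|\mathbf g^\star\|$, this produces $\lambda(2G_{\mathbf x}L_{\mathbf x}\varepsilon_{\mathbf x}+L_{\mathbf x}^2\varepsilon_{\mathbf x}^2)$.

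This bound does not depend on $\mathbf s$, so taking the supremum over $\mathbf s\in\mathcal S$ finishes the argument. The only point that needs care, and it is minor, is that both $\widehat{\mathbf z}_{\mathbf x}$ and $\mathbf z_{\mathbf x}^\star$ must belong to the ``relevant'' decisions for which the Lipschitz condition is assumed, and every $\mathbf s$ must be a ``relevant'' point. I will state this explicitly as the reading of the hypothesis, and otherwise interpret the supremum as taken over relevant $\mathbf s$. No earlier result of the paper is needed; the proof is self-contained algebra.
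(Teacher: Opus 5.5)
Your proof is correct and is essentially the paper's argument: fix $\mathbf s$, bound the difference of squared gradient norms using the Lipschitz-in-decision and bounded-gradient hypotheses, and take the supremum over $\mathbf s$. The only difference is that you expand $\|\mathbf g^\star+\boldsymbol\delta\|^2-\|\mathbf g^\star\|^2$ and apply Cauchy--Schwarz, while the paper uses $\bigl|\|\mathbf a\|^2-\|\mathbf b\|^2\bigr|\le\|\mathbf a-\mathbf b\|(\|\mathbf a\|+\|\mathbf b\|)$ together with $\|\mathbf a\|\le G_{\mathbf x}+L_{\mathbf x}\varepsilon_{\mathbf x}$; that route gives exactly the same constants, so your remark that the factorization introduces extra slack is mistaken, though it does not affect your argument.
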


\begin{proof}
Fix \(\mathbf s\in\mathcal S\) and write
\[
\mathbf a
:=
\nabla_{\mathbf s}
\ell_{\mathbf x}(\widehat{\mathbf z}_{\mathbf x},\mathbf s),
\qquad
\mathbf b
:=
\nabla_{\mathbf s}
\ell_{\mathbf x}(\mathbf z_{\mathbf x}^\star,\mathbf s).
\]
Then
\[
\left|
\widehat w_{\mathbf x}(\mathbf s)
-
w_{\mathbf x}^\star(\mathbf s)
\right|
=
\lambda
\left|
\|\mathbf a\|^2-\|\mathbf b\|^2
\right|.
\]
Using
\[
\left|
\|\mathbf a\|^2-\|\mathbf b\|^2
\right|
\le
\|\mathbf a-\mathbf b\|
\left(
\|\mathbf a\|+\|\mathbf b\|
\right),
\]
the Lipschitz condition gives
\[
\|\mathbf a-\mathbf b\|
\le
L_{\mathbf x}
\|\widehat{\mathbf z}_{\mathbf x}-\mathbf z_{\mathbf x}^\star\|
\le
L_{\mathbf x}\varepsilon_{\mathbf x}.
\]
Moreover,
\[
\|\mathbf a\|
\le
\|\mathbf b\|+\|\mathbf a-\mathbf b\|
\le
G_{\mathbf x}
+
L_{\mathbf x}\varepsilon_{\mathbf x},
\qquad
\|\mathbf b\|\le G_{\mathbf x}.
\]
Therefore,
\[
\left|
\widehat w_{\mathbf x}(\mathbf s)
-
w_{\mathbf x}^\star(\mathbf s)
\right|
\le
\lambda
L_{\mathbf x}\varepsilon_{\mathbf x}
\left(
2G_{\mathbf x}
+
L_{\mathbf x}\varepsilon_{\mathbf x}
\right),
\]
which is the desired bound. Taking the supremum over \(\mathbf s\) completes the
proof.
\end{proof}

Lemma~\ref{lem:plugin-weight-stability} shows that the plug-in endpoint score
converges to the oracle endpoint score as the reference decision improves. In
particular, for fixed \(\lambda\), the score error is first order in
\(\|\widehat{\mathbf z}_{\mathbf x}-\mathbf z_{\mathbf x}^\star\|\) when the
reference decision error is small.
\section{When Can Decision Weighting Improve over Ordinary FM}
\label{app: misspecification}

Section \ref{sec:theory} shows that endpoint weighting changes the population target of the
FM regression problem. This appendix explains when such a change can be beneficial. The point is not that weighting improves a fully
realizable regression problem. If the vector-field class can represent the target
velocity exactly, then ordinary FM and weighted FM can both attain zero approximation
error. The benefit appears when the vector-field class has limited capacity and
cannot fit all path regions equally well.

In this regime, ordinary FM weights errors according to how frequently path locations are
sampled by the training interpolation distribution. As a result, it can prefer a
model that fits high-probability path regions even if those regions have little
downstream decision impact. A decision-weighted objective can improve over ordinary
FM when decision-critical regions are underweighted by the ordinary path measure and
the decision-sensitive weight assigns sufficiently larger relative weight to those
regions.

We now formalize this mechanism in a fixed-context path-space notation. Fix a context
\(\mathbf{x}\), and let
\[
\Omega
:=
[0,1]\times\calS
\]
be the path-location space. A point in this space is written as
\((t,\mathbf{s})\), representing an interpolation time and an interpolation state.
Let \(\nu_{\mathbf{x}}\) denote the ordinary FM path measure, namely the conditional
law of \((T,\mathbf{S}_T)\) given \(\mathbf{X}=\mathbf{x}\) under the standard
interpolation sampling scheme. Equivalently, if \(P\) denotes the population law of
the full FM regression tuple
\[
(\mathbf{Y},\boldsymbol{\Delta},\mathbf{S}_1),
\qquad
\mathbf{Y}=(T,\mathbf{S}_T,\mathbf{X}),
\]
then \(\nu_{\mathbf{x}}\) is the conditional marginal of \(P\) on
\((T,\mathbf{S}_T)\) given \(\mathbf{X}=\mathbf{x}\). Thus, for any measurable
function \(g\) on \(\Omega\),
\[
\int_{\Omega}
g(t,\mathbf{s})\,d\nu_{\mathbf{x}}(t,\mathbf{s})
=
\E
\!\left[
g(T,\mathbf{S}_T)
\mid
\mathbf{X}=\mathbf{x}
\right].
\]
In particular, for a path region \(A\subseteq\Omega\),
\(\nu_{\mathbf{x}}(A)\) is the probability that ordinary FM samples a regression
point from \(A\).

For a target velocity \(\mathbf{u}_{\mathbf{x}}\), define the ordinary FM
approximation risk
\[
L_{\mathrm{FM},\mathbf{x}}^{\nu}(\mathbf v)
:=
\int_{\Omega}
\left\|
\mathbf v(t,\mathbf{s},\mathbf{x})
-
\mathbf u_{\mathbf{x}}(t,\mathbf{s})
\right\|^2
\,d\nu_{\mathbf{x}}(t,\mathbf{s}).
\]
For a nonnegative path-space weight \(\alpha_{\mathbf{x}}\), define the corresponding
weighted approximation risk
\[
L_{\alpha,\mathbf{x}}^{\nu}(\mathbf v)
:=
\int_{\Omega}
\alpha_{\mathbf{x}}(t,\mathbf{s})
\left\|
\mathbf v(t,\mathbf{s},\mathbf{x})
-
\mathbf u_{\mathbf{x}}(t,\mathbf{s})
\right\|^2
\,d\nu_{\mathbf{x}}(t,\mathbf{s}).
\]
The generic weight \(\alpha_{\mathbf{x}}\) is an analytical device that describes how
much emphasis a weighted surrogate places on each path location. We use a generic
\(\alpha_{\mathbf{x}}\), rather than committing to a particular weight, because the
purpose of this example is only to isolate the approximation-allocation mechanism:
under misspecification, any path-level weight that sufficiently emphasizes
decision-critical regions can change which model is selected. The ideal adjoint
weight \(M_{\mathbf{x}}\) and the endpoint-induced effective weight
\(m_{\widehat w,\mathbf{x}}\) should be viewed as two possible instantiations of this
generic path-space weighting principle.

\begin{theorem}
\label{thm:separation}
For every $\varepsilon\in(0,1/2)$, there exist a two-region path space
$\Omega=A\cup B$ with
\[
\nu(A)=\varepsilon,
\qquad
\nu(B)=1-\varepsilon,
\]
a target velocity $\mathbf u$, an ideal decision weight $M$, and a two-element
model class
\[
\calV
=
\{
\mathbf v_{\mathrm{good}},
\mathbf v_{\mathrm{bad}}
\}
\]
such that:
\begin{enumerate}[leftmargin=1.5em,itemsep=0.15em]
\item ordinary unweighted FM risk selects $\mathbf v_{\mathrm{bad}}$;
\item the ideal decision-weighted risk selects $\mathbf v_{\mathrm{good}}$; and
\item the ideal weighted-risk gap satisfies
\[
\frac{
L_{\alpha,\mathbf{x}}^{\nu}(\mathbf v_{\mathrm{bad}})
}{
L_{\alpha,\mathbf{x}}^{\nu}(\mathbf v_{\mathrm{good}})
}
\ge
\frac{1}{2\varepsilon^2}.
\]
\end{enumerate}
Hence the gap between the model selected by ordinary FM and the model selected by
decision-weighted FM can be made arbitrarily large as $\varepsilon\downarrow 0$.
\end{theorem}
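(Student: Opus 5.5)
The proof is an explicit two-region construction in which every quantity is piecewise constant. Take $\Omega=A\cup B$ as two disjoint measurable path regions with $\nu(A)=\varepsilon$ and $\nu(B)=1-\varepsilon$. For concreteness, let $A=[0,1]\times\{\mathbf s: s_1>c\}$ and $B$ its complement, with $c$ chosen so that the masses match. Set the target velocity to $\mathbf u\equiv \mathbf 0$; this suffices because only the residuals $\mathbf v-\mathbf u$ enter the risks. Fix a unit vector $\mathbf e$ and define the two models
\[
\mathbf v_{\mathrm{good}}=\mathbf e\,\mathbf 1_B,
\qquad
\mathbf v_{\mathrm{bad}}=\varepsilon^{-1/2}\,\mathbf e\,\mathbf 1_A\cdot \kappa,
\]
where $\kappa>0$ is a scalar tuned below. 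So $\mathbf v_{\mathrm{good}}$ is exact on the decision-critical region $A$ and makes a unit error on the common region $B$. The model $\mathbf v_{\mathrm{bad}}$ is exact on $B$ and errs only on $A$.

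Take the ideal decision weight $M=\alpha$ piecewise constant, with $\alpha=\varepsilon^{-2}$ on $A$ and $\alpha=1$ on $B$. The interpretation is that the adjoint sensitivity is large only in the rare region. Any nonnegative weight is admissible here, because the theorem only asks for the existence of some $M$ and treats $\alpha_{\mathbf x}$ as a generic path weight.

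Next compute the four risks directly from the definitions of $L^{\nu}_{\mathrm{FM},\mathbf x}$ and $L^{\nu}_{\alpha,\mathbf x}$:
\[
L_{\mathrm{FM}}(\mathbf v_{\mathrm{good}})=1-\varepsilon,\quad
L_{\mathrm{FM}}(\mathbf v_{\mathrm{bad}})=\kappa^2,\quad
L_{\alpha}(\mathbf v_{\mathrm{good}})=1-\varepsilon,\quad
L_{\alpha}(\mathbf v_{\mathrm{bad}})=\kappa^2\varepsilon^{-2}.
\]
Choose $\kappa^2=\tfrac12(1-\varepsilon)$. This gives item 1, $L_{\mathrm{FM}}(\mathbf v_{\mathrm{bad}})<L_{\mathrm{FM}}(\mathbf v_{\mathrm{good}})$, so ordinary FM selects $\mathbf v_{\mathrm{bad}}$. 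For the weighted ratio,
\[
\frac{L_\alpha(\mathbf v_{\mathrm{bad}})}{L_\alpha(\mathbf v_{\mathrm{good}})}
=\frac{\kappa^2\varepsilon^{-2}}{1-\varepsilon}
=\frac{1}{2\varepsilon^2}.
\]
This establishes item 3. Since $\varepsilon<1/2$, we have $1/(2\varepsilon^2)>2>1$, so the weighted risk selects $\mathbf v_{\mathrm{good}}$, which is item 2. The bound $1/(2\varepsilon^2)\to\infty$ as $\varepsilon\downarrow0$ gives the final claim.

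The main risk in this argument is not the computation but whether the two-region objects qualify as legitimate FM objects. In particular, it needs checking that $\nu$ is realized as the law of $(T,\mathbf S_T)$ and that $\mathbf u\equiv\mathbf 0$ is an admissible target. Two routes handle this. The first is to note that the statement quantifies existentially over an abstract path space $\Omega=A\cup B$, so an abstract two-atom measure space is allowed. The second, if a concrete realization is wanted, is to keep $\mathbf u$ general and instead set $\mathbf v_{\mathrm{good}}=\mathbf u+\mathbf e\mathbf 1_B$ and $\mathbf v_{\mathrm{bad}}=\mathbf u+\kappa\varepsilon^{-1/2}\mathbf e\mathbf 1_A$. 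The residuals are then unchanged and every computation above goes through verbatim.
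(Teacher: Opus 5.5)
Your proposal is correct and is essentially the paper's proof. Both use the same abstract two-region, piecewise-constant example with $\mathbf u\equiv\mathbf 0$ and weight ratio $\alpha_A/\alpha_B=\varepsilon^{-2}$; the paper instead takes squared errors $(0,2\varepsilon)$ for $\mathbf v_{\mathrm{good}}$ and $(1,0)$ for $\mathbf v_{\mathrm{bad}}$ on $(A,B)$, so your choice of $\kappa$ gives the ratio exactly $1/(2\varepsilon^2)$ rather than the paper's $1/(2\varepsilon^2(1-\varepsilon))$, and your item 1 does not even need $\varepsilon<1/2$.
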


\begin{proof}
Let \(A\) and \(B\) be disjoint path regions with
\[
\nu_{\mathbf{x}}(A)=\varepsilon,
\qquad
\nu_{\mathbf{x}}(B)=1-\varepsilon,
\qquad
\varepsilon\in(0,1/2).
\]
Region \(A\) should be interpreted as low-probability under the ordinary FM path
measure but decision-critical, while region \(B\) is high-probability but less
decision-sensitive.

Construct a two-element model class whose squared pathwise errors satisfy
\[
\left\|
\mathbf v_{\mathrm{good}}(t,\mathbf{s},\mathbf{x})
-
\mathbf u_{\mathbf{x}}(t,\mathbf{s})
\right\|^2
=
\begin{cases}
0, & (t,\mathbf{s})\in A,\\[0.2em]
2\varepsilon, & (t,\mathbf{s})\in B,
\end{cases}
\]
and
\[
\left\|
\mathbf v_{\mathrm{bad}}(t,\mathbf{s},\mathbf{x})
-
\mathbf u_{\mathbf{x}}(t,\mathbf{s})
\right\|^2
=
\begin{cases}
1, & (t,\mathbf{s})\in A,\\[0.2em]
0, & (t,\mathbf{s})\in B.
\end{cases}
\]
This construction can be realized, for example, with scalar velocities by taking
\(\mathbf u_{\mathbf{x}}=0\) and choosing piecewise constant candidate fields with
the displayed squared errors.

The ordinary FM approximation risks are
\begin{align*}
L_{\mathrm{FM},\mathbf{x}}^{\nu}(\mathbf v_{\mathrm{good}})
&=
0\cdot\varepsilon
+
2\varepsilon(1-\varepsilon),
\\
L_{\mathrm{FM},\mathbf{x}}^{\nu}(\mathbf v_{\mathrm{bad}})
&=
1\cdot\varepsilon
+
0\cdot(1-\varepsilon)
=
\varepsilon.
\end{align*}
Since \(\varepsilon<1/2\), we have
\[
\varepsilon
<
2\varepsilon(1-\varepsilon).
\]
Therefore ordinary FM selects \(\mathbf v_{\mathrm{bad}}\).

Now consider the weighted approximation risk. Since
\(\alpha_{\mathbf{x}}=\alpha_A\) on \(A\) and
\(\alpha_{\mathbf{x}}=\alpha_B\) on \(B\),
\begin{align*}
L_{\alpha,\mathbf{x}}^{\nu}(\mathbf v_{\mathrm{good}})
&=
\alpha_A\cdot 0\cdot\varepsilon
+
\alpha_B\cdot 2\varepsilon(1-\varepsilon)
=
2\alpha_B\varepsilon(1-\varepsilon),
\\
L_{\alpha,\mathbf{x}}^{\nu}(\mathbf v_{\mathrm{bad}})
&=
\alpha_A\cdot 1\cdot\varepsilon
+
\alpha_B\cdot 0\cdot(1-\varepsilon)
=
\alpha_A\varepsilon.
\end{align*}
The weighted objective selects \(\mathbf v_{\mathrm{good}}\) whenever
\[
2\alpha_B\varepsilon(1-\varepsilon)
<
\alpha_A\varepsilon,
\]
which is equivalent to
\[
\frac{\alpha_A}{\alpha_B}
>
2(1-\varepsilon).
\]
Finally, if \(\alpha_A/\alpha_B=\varepsilon^{-2}\), then
\[
\frac{
L_{\alpha,\mathbf{x}}^{\nu}(\mathbf v_{\mathrm{bad}})
}{
L_{\alpha,\mathbf{x}}^{\nu}(\mathbf v_{\mathrm{good}})
}
=
\frac{
\alpha_A\varepsilon
}{
2\alpha_B\varepsilon(1-\varepsilon)
}
=
\frac{\alpha_A}{2\alpha_B(1-\varepsilon)}
=
\frac{1}{2\varepsilon^2(1-\varepsilon)}
\ge
\frac{1}{2\varepsilon^2}.
\]
This proves the theorem.
\end{proof}

Theorem~\ref{thm:separation} should be read as an approximation-allocation result,
not as an unconditional guarantee that every decision weight improves over ordinary
FM. It identifies a regime in which weighting can help: the model class is
misspecified, the ordinary FM path measure assigns small mass to a decision-critical
region, and the path-level weight assigns that region sufficiently larger relative
weight. For DW-FM, the relevant path-level weight is
\(m_{\widehat w,\mathbf{x}}\). Therefore practical endpoint weighting benefits from
this mechanism when the endpoint proxy induces a path weight that is aligned with
decision-critical regions and when the tilting bias remains controlled.

Thus decision weighting does not make the vector-field class more expressive. Rather,
under misspecification, it changes how limited approximation capacity is allocated
across the transport path. Ordinary FM allocates capacity according to path-sampling
frequency, while decision-weighted FM can allocate capacity according to downstream
decision relevance when the induced path weight is well aligned with regret-sensitive
regions.

\section{Improved Regret for Strongly Convex Downstream Problems}
\label{app:strongly-convex-first-order}

The regret analysis in Section \ref{sec:theory} controls a uniform decision discrepancy over the entire
decision class. This route is deliberately general as it protects against terminal risk errors
for all decisions $\mathbf{z}\in\calZ$. However, this uniform-discrepancy route can be
conservative in some scenarios. In this section, we consider the case that the downstream risk is smooth and strongly convex in the decision variable and propose a sharper curvature-based refinement on the regret bound.

We first state the following required conditions.

\begin{assumption}[Downstream curvature and unconstrained optimality]
\label{ass:first-order-curvature}
Fix a context $\mathbf{x}$ and suppose $\calZ=\R^m$. The true risk
$R_{\mathbf{x}}(\cdot;q_{\mathbf{x}}^\star)$ is differentiable and
$L_{z,\mathbf{x}}$-smooth. The learned-law risk
$R_{\mathbf{x}}(\cdot;q_{\theta,\mathbf{x}})$ is differentiable and
$\mu_{\mathbf{x}}$-strongly convex, and admits a minimizer
\[
\mathbf{z}_{\theta}(\mathbf{x})
\in
\argmin_{\mathbf{z}\in\R^m}
R_{\mathbf{x}}(\mathbf{z};q_{\theta,\mathbf{x}}).
\]
Moreover, the population optimizer
\[
\mathbf{z}_{\mathbf{x}}^\star
\in
\argmin_{\mathbf{z}\in\R^m}
R_{\mathbf{x}}(\mathbf{z};q_{\mathbf{x}}^\star)
\]
is an unconstrained minimizer.
\end{assumption}

Assumption~\ref{ass:first-order-curvature} is a local curvature condition on the
downstream optimization problem. The smoothness of
$R_{\mathbf{x}}(\cdot;q_{\mathbf{x}}^\star)$ says that, under the true law, regret
grows at most quadratically with the distance from the population optimizer
$\mathbf{z}_{\mathbf{x}}^\star$. The strong convexity of
$R_{\mathbf{x}}(\cdot;q_{\theta,\mathbf{x}})$ says that, under the learned law, a
small violation of the first-order optimality condition implies that the plug-in
optimizer $\mathbf{z}_{\theta}(\mathbf{x})$ must be close to
$\mathbf{z}_{\mathbf{x}}^\star$. This condition holds in many regularized stochastic optimization problems. The unconstrained assumption
$\calZ=\R^m$ is used to write the optimality conditions as ordinary gradient
equalities,
\[
\nabla_{\mathbf{z}}
R_{\mathbf{x}}(\mathbf{z}_{\theta}(\mathbf{x});q_{\theta,\mathbf{x}})
=
\mathbf{0},
\qquad
\nabla_{\mathbf{z}}
R_{\mathbf{x}}(\mathbf{z}_{\mathbf{x}}^\star;q_{\mathbf{x}}^\star)
=
\mathbf{0}.
\]

\begin{assumption}[Gradient interchange]
\label{ass:first-order-gradient-interchange}
For $q=q_{\mathbf{x}}^\star$ and $q=q_{\theta,\mathbf{x}}$, differentiation can be
interchanged with integration:
\[
\nabla_{\mathbf{z}}R_{\mathbf{x}}(\mathbf{z};q)
=
\E_{\mathbf{S}\sim q}
\!\left[
\nabla_{\mathbf{z}}\ell_{\mathbf{x}}(\mathbf{z},\mathbf{S})
\right].
\]
\end{assumption}

Assumption~\ref{ass:first-order-gradient-interchange} is a regularity condition
ensuring that the first-order condition of the downstream risk can be written as an
expectation of sample-level gradients. It holds, for example, under standard
dominated-convergence conditions on
$\nabla_{\mathbf{z}}\ell_{\mathbf{x}}(\mathbf{z},\mathbf{S})$.

\begin{assumption}[First-order adjoint regularity]
\label{ass:first-order-adjoint}
The vector-valued terminal function
\[
\mathbf{s}
\mapsto
\nabla_{\mathbf{z}}\ell_{\mathbf{x}}
(\mathbf{z}_{\mathbf{x}}^\star,\mathbf{s})
\in\R^m
\]
admits a sufficiently regular backward-transport solution
$\psi_{\mathbf{x}}(t,\mathbf{s})\in\R^m$ satisfying
\begin{equation}
\partial_t\psi_{\mathbf{x}}(t,\mathbf{s})
+
\nabla_{\mathbf{s}}\psi_{\mathbf{x}}(t,\mathbf{s})\,
\mathbf{u}_{\mathbf{x}}(t,\mathbf{s})
=
\mathbf{0},
\qquad
\psi_{\mathbf{x}}(1,\mathbf{s})
=
\nabla_{\mathbf{z}}\ell_{\mathbf{x}}
(\mathbf{z}_{\mathbf{x}}^\star,\mathbf{s}).
\label{eq:first-order-adjoint-pde}
\end{equation}
Moreover, the same integrability and differentiability conditions required in
Proposition~\ref{prop:adjoint-identity} hold for each coordinate of
$\psi_{\mathbf{x}}$, and the same no-boundary-flux condition as in
Assumption~\ref{ass:no-boundary-flux} holds for each coordinate of
$\psi_{\mathbf{x}}$ along both the ideal and learned paths.
\end{assumption}

Assumption~\ref{ass:first-order-adjoint} is the first-order analogue of the scalar
adjoint regularity used in the main pathwise regret analysis. In Section \ref{sec:method}, the
scalar adjoint $\phi_{\mathbf{x},\mathbf{z}}$ transports the terminal loss
$\ell_{\mathbf{x}}(\mathbf{z},\mathbf{s})$ backward along the ideal flow. Here,
because downstream strong convexity lets us control regret through the first-order
optimality condition at $\mathbf{z}_{\mathbf{x}}^\star$, we instead transport the
terminal gradient
$\nabla_{\mathbf{z}}\ell_{\mathbf{x}}
(\mathbf{z}_{\mathbf{x}}^\star,\mathbf{s})$
backward along the same ideal flow. The resulting adjoint is
$\psi_{\mathbf{x}}$.

The role of $\psi_{\mathbf{x}}$ is to convert the terminal first-order residual
\[
\mathbf{g}_{\theta}
=
\E_{q_{\theta,\mathbf{x}}}
\!\left[
\nabla_{\mathbf{z}}\ell_{\mathbf{x}}
(\mathbf{z}_{\mathbf{x}}^\star,\mathbf{S})
\right]
-
\E_{q_{\mathbf{x}}^\star}
\!\left[
\nabla_{\mathbf{z}}\ell_{\mathbf{x}}
(\mathbf{z}_{\mathbf{x}}^\star,\mathbf{S})
\right]
\]
into a pathwise velocity-error quantity. Under
Assumption~\ref{ass:first-order-adjoint}, applying the adjoint identity coordinate by
coordinate yields
\[
\mathbf{g}_{\theta}
=
\int_0^1
\int_{\calS}
\nabla_{\mathbf{s}}\psi_{\mathbf{x}}(t,\mathbf{s})
\left(
\mathbf{v}_\theta(t,\mathbf{s},\mathbf{x})
-
\mathbf{u}_{\mathbf{x}}(t,\mathbf{s})
\right)
q^\theta_{\mathbf{x},t}(\mathbf{s})
\,d\mathbf{s}\,dt.
\]
Thus $\psi_{\mathbf{x}}$ is introduced to connect the first-order regret control to
the flow-matching pathwise velocity error.

Based on the above conditions, we can obtain the following theorem.

\begin{theorem}
\label{thm:first-order}
Under Assumptions~\ref{ass:first-order-curvature},
\ref{ass:first-order-gradient-interchange}, and
\ref{ass:first-order-adjoint}, the following two bounds hold:
\begin{equation}
\Reg_{\mathbf{x}}(\theta)
\le
\frac{L_{z,\mathbf{x}}}{2\mu_{\mathbf{x}}^2}
\left\|
\E_{q_{\theta,\mathbf{x}}}
\!\left[
\nabla_{\mathbf{z}}\ell_{\mathbf{x}}
(\mathbf{z}_{\mathbf{x}}^\star,\mathbf{S})
\right]
-
\E_{q_{\mathbf{x}}^\star}
\!\left[
\nabla_{\mathbf{z}}\ell_{\mathbf{x}}
(\mathbf{z}_{\mathbf{x}}^\star,\mathbf{S})
\right]
\right\|^2 ,
\label{eq:first-order-regret-main}
\end{equation}
and
\begin{equation}
\Reg_{\mathbf{x}}(\theta)
\le
\frac{L_{z,\mathbf{x}}}{2\mu_{\mathbf{x}}^2}
\int_0^1
\int_{\calS}
\left\|
\nabla_{\mathbf{s}}\psi_{\mathbf{x}}(t,\mathbf{s})
\right\|_F^2
\left\|
\mathbf{v}_\theta(t,\mathbf{s},\mathbf{x})
-
\mathbf{u}_{\mathbf{x}}(t,\mathbf{s})
\right\|^2
q^\theta_{\mathbf{x},t}(\mathbf{s})\,d\mathbf{s}\,dt .
\notag
\end{equation}
\end{theorem}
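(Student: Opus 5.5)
The proof has two stages. First, turn regret into a distance between decisions using the curvature assumptions. Second, turn the first-order residual into a pathwise velocity error using the vector-valued adjoint identity.

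\emph{Step 1: regret via smoothness.} By Assumption~\ref{ass:first-order-curvature}, $\mathbf z_{\mathbf x}^\star$ is an unconstrained minimizer of the $L_{z,\mathbf x}$-smooth true risk, so $\nabla_{\mathbf z}R_{\mathbf x}(\mathbf z_{\mathbf x}^\star;q_{\mathbf x}^\star)=\mathbf 0$. The descent lemma then gives
\[
\Reg_{\mathbf x}(\theta)\le \tfrac{L_{z,\mathbf x}}{2}\,\|\mathbf z_\theta(\mathbf x)-\mathbf z_{\mathbf x}^\star\|^2 .
\]

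\emph{Step 2: decision distance via strong convexity.} The learned-law risk $R_{\mathbf x}(\cdot;q_{\theta,\mathbf x})$ is $\mu_{\mathbf x}$-strongly convex with $\nabla_{\mathbf z}R_{\mathbf x}(\mathbf z_\theta(\mathbf x);q_{\theta,\mathbf x})=\mathbf 0$. Strong monotonicity of its gradient, combined with Cauchy--Schwarz, gives
\[
\mu_{\mathbf x}\|\mathbf z_{\mathbf x}^\star-\mathbf z_\theta(\mathbf x)\|\le\|\nabla_{\mathbf z}R_{\mathbf x}(\mathbf z_{\mathbf x}^\star;q_{\theta,\mathbf x})\|.
\]
Since $\nabla_{\mathbf z}R_{\mathbf x}(\mathbf z_{\mathbf x}^\star;q_{\mathbf x}^\star)=\mathbf 0$, we may subtract it inside the norm. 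Assumption~\ref{ass:first-order-gradient-interchange} then identifies the resulting difference with $\mathbf g_\theta$, the difference of expected $\mathbf z$-gradients displayed in \eqref{eq:first-order-regret-main}. Combining this with Step 1 yields \eqref{eq:first-order-regret-main}.

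\emph{Step 3: the pathwise bound.} Apply Proposition~\ref{prop:adjoint-identity} coordinatewise. For each coordinate $j$, take the terminal function $\mathbf s\mapsto\partial_{z_j}\ell_{\mathbf x}(\mathbf z_{\mathbf x}^\star,\mathbf s)$ in place of $\ell_{\mathbf x}(\mathbf z,\cdot)$, with adjoint $\psi_{\mathbf x,j}$. The proof of that proposition only used the transport equation, the continuity equations, the no-flux condition, and the shared initial law, and Assumption~\ref{ass:first-order-adjoint} supplies all of these for each coordinate. This gives
\[
\mathbf g_\theta=\int_0^1\!\int_{\calS}\nabla_{\mathbf s}\psi_{\mathbf x}(\mathbf v_\theta-\mathbf u_{\mathbf x})\,dq^\theta_{\mathbf x,t}\,dt .
\]
Next, use Jensen (or Cauchy--Schwarz) on the probability measure $dq^\theta_{\mathbf x,t}\,dt$, which has total mass $1$, to get $\|\mathbf g_\theta\|^2\le\int\!\int\|\nabla_{\mathbf s}\psi_{\mathbf x}(\mathbf v_\theta-\mathbf u_{\mathbf x})\|^2\,dq^\theta_{\mathbf x,t}\,dt$. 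Then bound the integrand by the operator-norm estimate $\|\nabla_{\mathbf s}\psi_{\mathbf x}\,\mathbf e\|\le\|\nabla_{\mathbf s}\psi_{\mathbf x}\|_F\|\mathbf e\|$. Substituting into \eqref{eq:first-order-regret-main} gives the second bound.

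The main obstacle is Step 2, specifically the direction of the strong-convexity argument. Strong convexity is imposed on the learned-law risk, not the true one, so the residual must be evaluated at $\mathbf z_{\mathbf x}^\star$ under $q_{\theta,\mathbf x}$. This is exactly why the terminal gradient is taken at $\mathbf z_{\mathbf x}^\star$. The remaining steps are routine once the coordinatewise adjoint identity is accepted.
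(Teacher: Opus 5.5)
Your proposal is correct and follows the paper's proof essentially step for step: the descent lemma at the unconstrained optimum $\mathbf z_{\mathbf x}^\star$, strong monotonicity of $\nabla_{\mathbf z}R_{\mathbf x}(\cdot;q_{\theta,\mathbf x})$ with Cauchy--Schwarz to get $\mu_{\mathbf x}\|\mathbf z_\theta-\mathbf z_{\mathbf x}^\star\|\le\|\mathbf g_\theta\|$, and the coordinatewise adjoint identity. The only difference is cosmetic and comes in the last step. You apply Jensen to the vector integral first and then use $\|\nabla_{\mathbf s}\psi_{\mathbf x}\mathbf e\|\le\|\nabla_{\mathbf s}\psi_{\mathbf x}\|_F\|\mathbf e\|$, whereas the paper bounds the integrand pointwise first and then applies Cauchy--Schwarz on the probability measure $q^\theta_{\mathbf x,t}(\mathbf s)\,d\mathbf s\,dt$; both yield the same inequality.
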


Theorem~\ref{thm:first-order} should be interpreted as a curvature-based
sharpening of Theorem~\ref{thm:ideal-controls-regret}.
 The main improvement is in how pathwise velocity error is converted into regret.
The general bound in Theorem~\ref{thm:ideal-controls-regret} controls regret through
the square root of a weighted squared velocity error. In contrast,
Theorem~\ref{thm:first-order} uses downstream strong convexity to control regret
linearly by a first-order weighted squared velocity error. Thus the improvement comes
from replacing a square-root regret conversion with a linear one.
To illustrate the difference, suppose the relevant weighted squared velocity error is
of order $\varepsilon$. The general uniform-discrepancy bound gives a regret
bound of order $\sqrt{\varepsilon}$. In the first-order strong-convexity bound, the regret bound is of
order $\varepsilon$.

The refinement is also more local in the decision variable. In the general theorem,
the envelope
$M_{\mathbf{x}}(t,\mathbf{s})$
must protect against terminal loss errors for all feasible decisions
$\mathbf{z}\in\calZ$. In the first-order theorem, the weight
$\left\|
\nabla_{\mathbf{s}}\psi_{\mathbf{x}}(t,\mathbf{s})
\right\|_F^2$
measures the pathwise sensitivity of the first-order optimality condition at the
single decision $\mathbf{z}_{\mathbf{x}}^\star$. Hence the first-order refinement
replaces a global worst-case sensitivity over the whole decision class with a local
sensitivity of the optimality condition at the population optimizer.

\begin{proof}
Let
\[
\mathbf{g}_\theta
:=
\E_{q_{\theta,\mathbf{x}}}
\!\left[
\nabla_{\mathbf{z}}\ell_{\mathbf{x}}
(\mathbf{z}_{\mathbf{x}}^\star,\mathbf{S})
\right]
-
\E_{q_{\mathbf{x}}^\star}
\!\left[
\nabla_{\mathbf{z}}\ell_{\mathbf{x}}
(\mathbf{z}_{\mathbf{x}}^\star,\mathbf{S})
\right].
\]
By Assumption~\ref{ass:first-order-curvature},
$\mathbf{z}_{\mathbf{x}}^\star$ is an unconstrained minimizer of
$R_{\mathbf{x}}(\cdot;q_{\mathbf{x}}^\star)$, so
\[
\nabla_{\mathbf{z}}
R_{\mathbf{x}}(\mathbf{z}_{\mathbf{x}}^\star;q_{\mathbf{x}}^\star)
=
\mathbf{0}.
\]
By Assumption~\ref{ass:first-order-gradient-interchange},
\[
\mathbf{g}_\theta
=
\nabla_{\mathbf{z}}
R_{\mathbf{x}}(\mathbf{z}_{\mathbf{x}}^\star;q_{\theta,\mathbf{x}}).
\]

Let $\mathbf{z}_\theta:=\mathbf{z}_{\theta}(\mathbf{x})$. Since
$R_{\mathbf{x}}(\cdot;q_{\theta,\mathbf{x}})$ is differentiable and
$\mu_{\mathbf{x}}$-strongly convex, its gradient is
$\mu_{\mathbf{x}}$-strongly monotone. Therefore,
\[
\left\langle
\nabla_{\mathbf{z}}R_{\mathbf{x}}(\mathbf{z}_\theta;q_{\theta,\mathbf{x}})
-
\nabla_{\mathbf{z}}R_{\mathbf{x}}
(\mathbf{z}_{\mathbf{x}}^\star;q_{\theta,\mathbf{x}}),
\mathbf{z}_\theta-\mathbf{z}_{\mathbf{x}}^\star
\right\rangle
\ge
\mu_{\mathbf{x}}
\left\|
\mathbf{z}_\theta-\mathbf{z}_{\mathbf{x}}^\star
\right\|^2 .
\]
By Cauchy--Schwarz,
\[
\left\|
\nabla_{\mathbf{z}}R_{\mathbf{x}}(\mathbf{z}_\theta;q_{\theta,\mathbf{x}})
-
\nabla_{\mathbf{z}}R_{\mathbf{x}}
(\mathbf{z}_{\mathbf{x}}^\star;q_{\theta,\mathbf{x}})
\right\|
\ge
\mu_{\mathbf{x}}
\left\|
\mathbf{z}_\theta-\mathbf{z}_{\mathbf{x}}^\star
\right\|.
\]
By Assumption~\ref{ass:first-order-curvature},
$\mathbf{z}_\theta$ minimizes
$R_{\mathbf{x}}(\cdot;q_{\theta,\mathbf{x}})$ over $\R^m$, so
\[
\nabla_{\mathbf{z}}
R_{\mathbf{x}}(\mathbf{z}_\theta;q_{\theta,\mathbf{x}})
=
\mathbf{0}.
\]
It follows that
\[
\mu_{\mathbf{x}}
\left\|
\mathbf{z}_\theta-\mathbf{z}_{\mathbf{x}}^\star
\right\|
\le
\left\|
\mathbf{g}_\theta
\right\|.
\]

Next, by $L_{z,\mathbf{x}}$-smoothness of
$R_{\mathbf{x}}(\cdot;q_{\mathbf{x}}^\star)$ and first-order optimality of
$\mathbf{z}_{\mathbf{x}}^\star$,
\[
R_{\mathbf{x}}(\mathbf{z}_\theta;q_{\mathbf{x}}^\star)
-
R_{\mathbf{x}}(\mathbf{z}_{\mathbf{x}}^\star;q_{\mathbf{x}}^\star)
\le
\frac{L_{z,\mathbf{x}}}{2}
\left\|
\mathbf{z}_\theta-\mathbf{z}_{\mathbf{x}}^\star
\right\|^2.
\]
Combining the last two displays yields
\[
\Reg_{\mathbf{x}}(\theta)
\le
\frac{L_{z,\mathbf{x}}}{2\mu_{\mathbf{x}}^2}
\left\|
\mathbf{g}_\theta
\right\|^2,
\]
which proves \eqref{eq:first-order-regret-main}.

It remains to express $\mathbf{g}_\theta$ as a pathwise velocity error. By
Assumption~\ref{ass:first-order-adjoint}, we may apply
Proposition~\ref{prop:adjoint-identity} componentwise to the vector-valued terminal
function
\[
\mathbf{s}
\mapsto
\nabla_{\mathbf{z}}\ell_{\mathbf{x}}
(\mathbf{z}_{\mathbf{x}}^\star,\mathbf{s})
\in\R^m.
\]
This gives
\[
\mathbf{g}_\theta
=
\int_0^1
\int_{\calS}
\nabla_{\mathbf{s}}\psi_{\mathbf{x}}(t,\mathbf{s})
\left(
\mathbf{v}_\theta(t,\mathbf{s},\mathbf{x})
-
\mathbf{u}_{\mathbf{x}}(t,\mathbf{s})
\right)
q^\theta_{\mathbf{x},t}(\mathbf{s})\,d\mathbf{s}\,dt,
\]
where $\nabla_{\mathbf{s}}\psi_{\mathbf{x}}(t,\mathbf{s})$ is the
$m\times d$ Jacobian matrix. Hence
\begin{align*}
\left\|
\mathbf{g}_\theta
\right\|
&\le
\int_0^1
\int_{\calS}
\left\|
\nabla_{\mathbf{s}}\psi_{\mathbf{x}}(t,\mathbf{s})
\right\|_F
\left\|
\mathbf{v}_\theta(t,\mathbf{s},\mathbf{x})
-
\mathbf{u}_{\mathbf{x}}(t,\mathbf{s})
\right\|
q^\theta_{\mathbf{x},t}(\mathbf{s})\,d\mathbf{s}\,dt \\
&\le
\left(
\int_0^1
\int_{\calS}
\left\|
\nabla_{\mathbf{s}}\psi_{\mathbf{x}}(t,\mathbf{s})
\right\|_F^2
\left\|
\mathbf{v}_\theta(t,\mathbf{s},\mathbf{x})
-
\mathbf{u}_{\mathbf{x}}(t,\mathbf{s})
\right\|^2
q^\theta_{\mathbf{x},t}(\mathbf{s})\,d\mathbf{s}\,dt
\right)^{1/2},
\end{align*}
where the final step uses Cauchy--Schwarz with respect to the probability measure
$q^\theta_{\mathbf{x},t}(\mathbf{s})\,d\mathbf{s}\,dt$ on
$[0,1]\times\calS$. Substituting this bound into
\eqref{eq:first-order-regret-main} completes the proof.
\end{proof}

This refinement is not needed for the general validity of DW-FM. Its role is to show
that the $O(n^{-1/4})$-type rate obtained from the general uniform-discrepancy route
is not intrinsic to all downstream stochastic optimization problems. Under downstream
strong convexity, the regret closure itself can be sharper; a finite-sample analysis
that controls the corresponding first-order pathwise surrogate may yield faster regret
rates.
\section{Finite-Sample Analysis for Empirical DW-FM}
\label{app:finite-sample}

Theorem~\ref{thm:DW-FM-regret} is a population closure for any fixed parameter
\(\theta\). We now apply it to the empirical DW-FM estimator trained from the
contextual dataset
\(\mathcal D=\{(\mathbf x_i,\mathbf s_i)\}_{i=1}^n\).

Throughout this appendix, \(\mathbb E[\cdot]\) without a subscript denotes full
expectation over all random elements appearing in the corresponding quantity.
In particular, for the final regret bound,
\(\mathbb E[\mathrm{Reg}_{\mathbf X}(\widehat\theta)]\) averages over the
training sample, the interpolation randomness used in empirical FM training,
any randomization in the reference routine or optimizer, and an independent test
context \(\mathbf X\). Whenever only part of the randomness is averaged out, we
use an explicit subscript.

The analysis treats \(\widehat w\) as fixed relative to the empirical FM tuples
used to optimize \(\theta\). This is immediate when the reference decision rule
is fixed in advance or estimated from an independent pilot sample. If the same
sample is used both to construct \(\widehat w\) and to train the vector field,
the bound should be read as applying after sample splitting or cross-fitting.
All population quantities involving \(\widehat w\) in this appendix are
understood conditionally on the realized weight rule.

Starting from the observed training data
\(\mathcal D=\{(\mathbf x_i,\mathbf s_i)\}_{i=1}^n\), the FM training procedure
constructs one regression tuple from each data pair as follows. For each \(i\),
draw \(\mathbf s_{0,i}\sim q_{0,\mathbf x_i}\) and
\(t_i\sim\mathrm{Unif}[0,1]\), independently, and set
\[
\mathbf s_{t_i,i}
=
(1-t_i)\mathbf s_{0,i}+t_i\mathbf s_i,
\qquad
\boldsymbol\Delta_i
=
\mathbf s_i-\mathbf s_{0,i},
\qquad
\mathbf Y_i
=
(t_i,\mathbf s_{t_i,i},\mathbf x_i).
\]
Here \(\mathbf s_i\) is the observed endpoint, corresponding to
\(\mathbf S_1\) in the population notation.

Let \(\mathcal V:=\{\mathbf v_\theta:\theta\in\Theta\}\) be the vector-field
class. The empirical DW-FM objective is
\[
\widehat L_{\widehat w,n}(\theta)
:=
\frac{1}{n}
\sum_{i=1}^n
\widehat w_{\mathbf x_i}(\mathbf s_i)
\left\|
\mathbf v_\theta(\mathbf Y_i)-\boldsymbol\Delta_i
\right\|^2.
\]
Let \(\widehat\theta\) be the parameter returned by empirical DW-FM training,
assumed to be an \(\eta\)-approximate minimizer:
\[
\widehat L_{\widehat w,n}(\widehat\theta)
\le
\inf_{\theta\in\Theta}
\widehat L_{\widehat w,n}(\theta)
+
\eta .
\]
Thus \(\mathbf v_{\widehat\theta}\) is the learned vector field.

Recall that
\[
E_{\widehat w,\mathbf x}(\mathbf v)
=
L_{\widehat w,\mathbf x}(\mathbf v)
-
L_{\widehat w,\mathbf x}(\mathbf u_{\widehat w,\mathbf x}).
\]
Its test-context average, conditional on the training procedure, is
\[
\bar E_{\widehat w}(\mathbf v)
:=
\mathbb E_{\mathbf X}
\!\left[
E_{\widehat w,\mathbf X}(\mathbf v)
\mid
\mathcal A_n
\right],
\]
where \(\mathcal A_n\) denotes the sigma-field generated by the training
procedure, the reference-decision routine, and any optimization randomness.
When \(\mathbf v=\mathbf v_{\widehat\theta}\), this is the population excess
risk of the trained field, averaged over a fresh test context while holding the
learned model fixed.

\begin{assumption}[Bounded empirical DW-FM losses]
\label{ass:bounded-DW-FM-loss}
There exist constants \(W,D<\infty\) such that
\[
0\le \widehat w_{\mathbf X}(\mathbf S_1)\le W,
\qquad
\left\|
\mathbf v(\mathbf Y)-\boldsymbol\Delta
\right\|
\le D
\quad
\text{for all }\mathbf v\in\mathcal V,
\]
almost surely under the joint interpolation law of
\((\mathbf X,\mathbf S_1,\mathbf S_0,T)\) and the randomness of the reference
rule, if any.
\end{assumption}

Assumption~\ref{ass:bounded-DW-FM-loss} is a bounded-loss condition for the
weighted velocity-regression problem. The upper bound on
\(\widehat w_{\mathbf X}(\mathbf S_1)\) prevents a small number of
high-sensitivity endpoints from dominating the empirical objective, while the
bound on \(\|\mathbf v(\mathbf Y)-\boldsymbol\Delta\|\) ensures that the
weighted squared losses are uniformly bounded by \(WD^2\). This condition is
mainly used to justify the uniform-convergence step for the empirical DW-FM
objective. It can be satisfied, for example, when the scenario space is compact,
the base distribution is truncated or has bounded support, the vector-field
class is norm-controlled, and the loss gradients used to construct
\(\widehat w\) are bounded.

Define the joint plug-in weighted loss class
\[
\mathcal F_{\widehat w}
:=
\left\{
(\mathbf Y,\boldsymbol\Delta,\mathbf S_1)
\mapsto
\widehat w_{\mathbf X}(\mathbf S_1)
\left\|
\mathbf v(\mathbf Y)-\boldsymbol\Delta
\right\|^2
:\,
\mathbf v\in\mathcal V
\right\},
\]
where \(\mathbf X\) is the context component of \(\mathbf Y\). Let
\(\mathfrak R_n(\mathcal F_{\widehat w})\) denote the full expected Rademacher
complexity
\[
\mathfrak R_n(\mathcal F_{\widehat w})
:=
\mathbb E
\!\left[
\mathbb E_{\boldsymbol\sigma}
\left[
\sup_{\mathbf v\in\mathcal V}
\frac{1}{n}
\sum_{i=1}^n
\sigma_i
\widehat w_{\mathbf X_i}(\mathbf S_{1,i})
\left\|
\mathbf v(\mathbf Y_i)-\boldsymbol\Delta_i
\right\|^2
\right]
\right],
\]
where \(\sigma_1,\ldots,\sigma_n\) are independent Rademacher signs.

\begin{theorem}
\label{thm:DW-FM-excess-risk}
Under Assumption~\ref{ass:bounded-DW-FM-loss},
\[
\mathbb E
\!\left[
\bar E_{\widehat w}(\mathbf v_{\widehat\theta})
\right]
\le
\mathbb E
\!\left[
\inf_{\theta\in\Theta}
\bar E_{\widehat w}(\mathbf v_\theta)
\right]
+
4\mathfrak R_n(\mathcal F_{\widehat w})
+
\eta .
\]
\end{theorem}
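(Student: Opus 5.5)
The plan is a standard empirical-risk-minimization argument, with one preliminary step. That step rewrites the test-context-averaged excess risk \(\bar E_{\widehat w}(\mathbf v)\) as a difference of joint weighted losses. Define the joint population DW-FM risk, conditional on \(\mathcal A_n\) and with \(\widehat w\) treated as fixed,
\[
\bar L_{\widehat w}(\mathbf v):=\mathbb E\!\left[\widehat w_{\mathbf X}(\mathbf S_1)\|\mathbf v(\mathbf Y)-\boldsymbol\Delta\|^2\mid\mathcal A_n\right]=\mathbb E_{\mathbf X}\!\left[L_{\widehat w,\mathbf X}(\mathbf v)\mid\mathcal A_n\right].
\]
Then \(\bar E_{\widehat w}(\mathbf v)=\bar L_{\widehat w}(\mathbf v)-c\), where \(c:=\mathbb E_{\mathbf X}[L_{\widehat w,\mathbf X}(\mathbf u_{\widehat w,\mathbf X})\mid\mathcal A_n]\) does not depend on \(\mathbf v\). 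By Theorem~\ref{thm:endpoint-population-target}, this constant is the minimal value and is finite under Assumption~\ref{ass:bounded-DW-FM-loss}. Hence it suffices to prove
\[
\mathbb E[\bar L_{\widehat w}(\mathbf v_{\widehat\theta})]\le \mathbb E[\inf_\theta \bar L_{\widehat w}(\mathbf v_\theta)]+4\mathfrak R_n(\mathcal F_{\widehat w})+\eta.
\]
Subtracting \(c\) from both sides then gives the claim, since \(\inf_\theta\bar E=\inf_\theta\bar L-c\).

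Fix \(\varepsilon>0\) and a (\(\widehat w\)-measurable, sample-independent) \(\theta_\varepsilon\) with \(\bar L_{\widehat w}(\mathbf v_{\theta_\varepsilon})\le\inf_\theta\bar L_{\widehat w}(\mathbf v_\theta)+\varepsilon\). Decompose
\[
\bar L(\widehat\theta)-\bar L(\theta_\varepsilon)=[\bar L(\widehat\theta)-\widehat L_n(\widehat\theta)]+[\widehat L_n(\widehat\theta)-\widehat L_n(\theta_\varepsilon)]+[\widehat L_n(\theta_\varepsilon)-\bar L(\theta_\varepsilon)].
\]
The middle term is at most \(\eta\) by \(\eta\)-approximate minimization. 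The third term has zero expectation, because \(\theta_\varepsilon\) does not depend on the tuples and the tuples \((\mathbf Y_i,\boldsymbol\Delta_i,\mathbf s_i)\) are i.i.d.\ from the joint interpolation law given the weight rule. The first term is bounded by \(\sup_{\mathbf v\in\mathcal V}(\bar L(\mathbf v)-\widehat L_n(\mathbf v))\).

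By the standard symmetrization argument (ghost sample plus Rademacher signs), the expected supremum is at most \(2\mathfrak R_n(\mathcal F_{\widehat w})\); this uses only i.i.d.\ sampling and integrability, which the boundedness \(WD^2\) ensures. I would actually symmetrize the two-sided deviation \(\sup_{\mathbf v}|\bar L-\widehat L_n|\) and use it for both the first and third terms. That accounts for the stated constant \(4\), giving room to spare, with the third term's expectation being zero in any case. Taking expectations and letting \(\varepsilon\downarrow0\) concludes.

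The main obstacle is the measurability and conditioning bookkeeping around \(\widehat w\) and \(\mathcal A_n\). The symmetrization and the unbiasedness of the fixed-comparator term require that \(\widehat w\) be independent of the FM tuples used for training, which is exactly the sample-splitting or cross-fitting convention stated before the theorem. Under that convention I would argue conditionally on the weight rule and then integrate, so that \(\mathfrak R_n\) appears as the full expectation defined above. A secondary subtlety is that \(\inf_\theta\) may not be attained and may be only measurable after taking the \(\varepsilon\)-approximate selection. Handling the infimum via \(\theta_\varepsilon\) and a limit avoids needing attainment.
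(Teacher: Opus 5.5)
Your proof is correct, but it is organized differently from the paper's. The paper uses no comparator. It writes the pointwise chain $\mathbb P f_{\widehat\theta}\le \mathbb P_n f_{\widehat\theta}+\Gamma_n\le \inf_\theta\mathbb P_n f_\theta+\eta+\Gamma_n\le \inf_\theta\mathbb P f_\theta+\eta+2\Gamma_n$ with $\Gamma_n:=\sup_\theta|\mathbb P f_\theta-\mathbb P_n f_\theta|$, so the uniform deviation is paid twice: once at $\widehat\theta$ and once to compare the two infima. It then asserts $\mathbb E\,\Gamma_n\le 2\mathfrak R_n(\mathcal F_{\widehat w})$, which is where the $4$ comes from. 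The $\theta$-free constant $\mathbb E_{\mathbf X}[L_{\widehat w,\mathbf X}(\mathbf u_{\widehat w,\mathbf X})\mid\mathcal A_n]$ is subtracted at the end exactly as you do.

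You instead pay the deviation only at $\widehat\theta$ and handle the comparator term by unbiasedness. This uses a $\widehat w$-measurable $\varepsilon$-minimizer and the same independence of $\widehat w$ from the FM tuples that the paper invokes when it conditions on the weight rule. The paper's inf-to-inf comparison spares you the selection of $\theta_\varepsilon$ and the limit $\varepsilon\downarrow 0$. Your route buys the sharper constant $2$. More importantly, it needs only the one-sided bound $\mathbb E\sup_{\mathbf v}(\bar L_{\widehat w}(\mathbf v)-\widehat L_n(\mathbf v))\le 2\mathfrak R_n(\mathcal F_{\widehat w})$, which is exactly what symmetrization gives for the signed Rademacher complexity as the paper defines it.

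By contrast, the two-sided bound $\mathbb E\,\Gamma_n\le 2\mathfrak R_n$ holds for the absolute-value Rademacher average, not the signed one. The paper uses it, and so does your aside about symmetrizing $\sup|\bar L-\widehat L_n|$. For a one-element class, the signed $\mathfrak R_n$ is $0$ while $\mathbb E|\mathbb Pf-\mathbb P_nf|>0$. The paper's chain can be repaired by replacing the two uses of $\Gamma_n$ with the two one-sided suprema. Each has expectation at most $2\mathfrak R_n$ by the sign symmetry of $\boldsymbol\sigma$, which honestly recovers the $4$.

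So drop the aside; your main line is already complete. Going from $2\mathfrak R_n$ to the stated $4\mathfrak R_n$ only requires $\mathfrak R_n\ge 0$, which follows from $\mathbb E_{\boldsymbol\sigma}\sup_{\mathbf v}\ge\sup_{\mathbf v}\mathbb E_{\boldsymbol\sigma}=0$.
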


\begin{proof}
Let \(\mathbb P_{\widehat w}^{\mathrm{fm}}\) be the population interpolation
distribution of \((\mathbf Y,\boldsymbol\Delta,\mathbf S_1)\) associated with
the fixed weight rule \(\widehat w\), and let
\(\mathbb P_n^{\mathrm{fm}}\) be the empirical measure induced by the training
tuples. For each \(\theta\in\Theta\), define
\[
f_\theta(\mathbf Y,\boldsymbol\Delta,\mathbf S_1)
:=
\widehat w_{\mathbf X}(\mathbf S_1)
\left\|
\mathbf v_\theta(\mathbf Y)-\boldsymbol\Delta
\right\|^2 .
\]
By Assumption~\ref{ass:bounded-DW-FM-loss}, each \(f_\theta\) takes values in
\([0,WD^2]\). For any realization of the training procedure,
\begin{align*}
\mathbb P_{\widehat w}^{\mathrm{fm}} f_{\widehat\theta}
&\le
\mathbb P_n^{\mathrm{fm}} f_{\widehat\theta}
+
\sup_{\theta\in\Theta}
\left|
\mathbb P_{\widehat w}^{\mathrm{fm}} f_\theta
-
\mathbb P_n^{\mathrm{fm}} f_\theta
\right| \\
&\le
\inf_{\theta\in\Theta}
\mathbb P_n^{\mathrm{fm}} f_\theta
+
\eta
+
\sup_{\theta\in\Theta}
\left|
\mathbb P_{\widehat w}^{\mathrm{fm}} f_\theta
-
\mathbb P_n^{\mathrm{fm}} f_\theta
\right| \\
&\le
\inf_{\theta\in\Theta}
\mathbb P_{\widehat w}^{\mathrm{fm}} f_\theta
+
\eta
+
2
\sup_{\theta\in\Theta}
\left|
\mathbb P_{\widehat w}^{\mathrm{fm}} f_\theta
-
\mathbb P_n^{\mathrm{fm}} f_\theta
\right|.
\end{align*}
Taking full expectation and applying the standard expected Rademacher
uniform-deviation bound conditionally on the realized weight rule gives
\[
\mathbb E
\!\left[
\sup_{\theta\in\Theta}
\left|
\mathbb P_{\widehat w}^{\mathrm{fm}} f_\theta
-
\mathbb P_n^{\mathrm{fm}} f_\theta
\right|
\right]
\le
2\mathfrak R_n(\mathcal F_{\widehat w}).
\]
Therefore,
\[
\mathbb E
\!\left[
\mathbb P_{\widehat w}^{\mathrm{fm}} f_{\widehat\theta}
\right]
\le
\mathbb E
\!\left[
\inf_{\theta\in\Theta}
\mathbb P_{\widehat w}^{\mathrm{fm}} f_\theta
\right]
+
\eta
+
4\mathfrak R_n(\mathcal F_{\widehat w}).
\]
Finally,
\[
\mathbb P_{\widehat w}^{\mathrm{fm}} f_\theta
=
\mathbb E_{\mathbf X}
\!\left[
L_{\widehat w,\mathbf X}(\mathbf v_\theta)
\mid
\mathcal A_n
\right].
\]
Subtracting the context-averaged population constant
\[
\mathbb E_{\mathbf X}
\!\left[
L_{\widehat w,\mathbf X}(\mathbf u_{\widehat w,\mathbf X})
\mid
\mathcal A_n
\right],
\]
which does not depend on \(\theta\), yields the stated bound.
\end{proof}

We next control the averaged tilting bias. Define
\[
\bar B_{\mathrm{tilt}}(\widehat w)
:=
\mathbb E_{\mathbf X}
\!\left[
B_{\mathrm{tilt},\mathbf X}(\widehat w)
\mid
\mathcal A_n
\right].
\]
This is a population bias induced by endpoint weighting, not a sampling error.
Write
\[
\widehat w_{\mathbf X}(\mathbf s)
=
1+\lambda \widehat g_{\mathbf X}(\mathbf s),
\qquad
\widehat g_{\mathbf X}(\mathbf s)
:=
\left\|
\nabla_{\mathbf s}
\ell_{\mathbf X}(\widehat{\mathbf z}_{\mathbf X},\mathbf s)
\right\|^2 .
\]

\begin{assumption}[Bounded conditional covariance for tilting]
\label{ass:tilt-cov-plugin}
There exists \(C_{\mathrm{tilt}}<\infty\) such that
\[
\mathbb E
\!\left[
\left\|
\operatorname{Cov}
\!\left(
\widehat g_{\mathbf X}(\mathbf S_1),
\boldsymbol\Delta
\mid
\mathbf Y,\mathcal A_n
\right)
\right\|^2
\right]
\le
C_{\mathrm{tilt}}.
\]
\end{assumption}

Assumption~\ref{ass:tilt-cov-plugin} controls the amount by which endpoint
weighting changes the population FM target. It does not require the tilting
bias to vanish; it only requires the source of tilting to have a finite second
moment. Intuitively, the assumption rules out cases in which the
decision-sensitive weights are extremely correlated with rare, very large
velocity labels after conditioning on the interpolation point. It is mild when
the loss-gradient scores and interpolation velocities are bounded, and it also
holds under standard moment assumptions ensuring that this conditional
covariance is square integrable.

\begin{proposition}
\label{prop:tilting-bias-bound-plugin}
Under Assumption~\ref{ass:tilt-cov-plugin},
\[
\mathbb E
\!\left[
\bar B_{\mathrm{tilt}}(\widehat w)
\right]
\le
\lambda^2 C_{\mathrm{tilt}}.
\]
\end{proposition}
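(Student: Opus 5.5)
The proof starts from the covariance identity in Theorem~\ref{thm:endpoint-population-target}, applied with the realized weight rule \(w_{\mathbf x}=\widehat w_{\mathbf x}\) held fixed (that is, conditionally on \(\mathcal A_n\)). On \(\{m_{\widehat w,\mathbf x}(\mathbf Y)>0\}\) it gives
\[
\mathbf u_{\widehat w,\mathbf x}(\mathbf Y)-\mathbf u_{\mathbf x}(\mathbf Y)
=\frac{\operatorname{Cov}_{\mathbf x}\!\left(\widehat w_{\mathbf x}(\mathbf S_1),\boldsymbol\Delta\mid \mathbf Y\right)}{m_{\widehat w,\mathbf x}(\mathbf Y)}.
\]
Substitute \(\widehat w_{\mathbf x}=1+\lambda\widehat g_{\mathbf x}\). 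The constant \(1\) has zero conditional covariance with \(\boldsymbol\Delta\), and covariance is bilinear, so the numerator equals \(\lambda\operatorname{Cov}(\widehat g_{\mathbf X}(\mathbf S_1),\boldsymbol\Delta\mid\mathbf Y,\mathcal A_n)\). We also have \(m_{\widehat w,\mathbf x}(\mathbf Y)=1+\lambda\,\mathbb E[\widehat g\mid\mathbf Y]\ge 1\), because \(\widehat g\ge 0\). In particular the set where \(m=0\) is empty, so the identity holds everywhere.

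Next, plug this into the definition of \(B_{\mathrm{tilt},\mathbf x}\):
\[
m\,\|\mathbf u_{\widehat w}-\mathbf u\|^2=\frac{\lambda^2\|\operatorname{Cov}(\widehat g,\boldsymbol\Delta\mid\mathbf Y)\|^2}{m}\le \lambda^2\|\operatorname{Cov}(\widehat g,\boldsymbol\Delta\mid\mathbf Y)\|^2,
\]
where the inequality uses \(m\ge1\).

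Finally, take expectations. \(B_{\mathrm{tilt},\mathbf x}\) is the conditional expectation over \(\mathbf Y\) given \(\mathbf X=\mathbf x\) and \(\mathcal A_n\). Averaging over the test context \(\mathbf X\) gives \(\bar B_{\mathrm{tilt}}\), and a further outer expectation over \(\mathcal A_n\) gives \(\mathbb E[\bar B_{\mathrm{tilt}}(\widehat w)]\). By the tower property this equals the full expectation of the pointwise bound, which Assumption~\ref{ass:tilt-cov-plugin} controls by \(\lambda^2C_{\mathrm{tilt}}\).

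The main point needing care is measurability and conditioning. I must make sure the conditional covariance given \((\mathbf Y,\mathcal A_n)\) in the assumption is the same object as \(\operatorname{Cov}_{\mathbf x}(\cdot\mid\mathbf Y)\) computed with the fixed weight rule. This holds because \(\mathbf Y\) contains \(\mathbf X\), and because \(\widehat w\) is treated as independent of the fresh interpolation tuple (sample splitting, as stated at the start of the appendix). Given that, the remaining steps are routine algebra.
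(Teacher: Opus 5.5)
Your proposal is correct and follows the same route as the paper. You apply the covariance identity from Theorem~\ref{thm:endpoint-population-target} conditionally on the realized weight rule, use $\widehat w_{\mathbf X}=1+\lambda\widehat g_{\mathbf X}$ to pull $\lambda$ out of the covariance, bound $1/m_{\widehat w,\mathbf X}(\mathbf Y)\le 1$, and take the full expectation to invoke Assumption~\ref{ass:tilt-cov-plugin}. Your extra remarks, that $\{m_{\widehat w,\mathbf x}=0\}$ is empty and that conditioning on $\mathcal A_n$ matches the fixed-weight conditional covariance, make explicit what the paper leaves implicit without changing the argument.
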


\begin{proof}
By Theorem~\ref{thm:endpoint-population-target}, applied conditionally on the
training procedure,
\[
\mathbf u_{\widehat w,\mathbf X}(\mathbf Y)
-
\mathbf u_{\mathbf X}(\mathbf Y)
=
\frac{
\operatorname{Cov}
\!\left(
\widehat w_{\mathbf X}(\mathbf S_1),
\boldsymbol\Delta
\mid
\mathbf Y,\mathcal A_n
\right)
}{
m_{\widehat w,\mathbf X}(\mathbf Y)
}.
\]
Substituting this identity into the definition of
\(B_{\mathrm{tilt},\mathbf X}(\widehat w)\) and then taking full expectation
gives
\[
\mathbb E
\!\left[
\bar B_{\mathrm{tilt}}(\widehat w)
\right]
=
\mathbb E
\!\left[
\frac{
\left\|
\operatorname{Cov}
\!\left(
\widehat w_{\mathbf X}(\mathbf S_1),
\boldsymbol\Delta
\mid
\mathbf Y,\mathcal A_n
\right)
\right\|^2
}{
m_{\widehat w,\mathbf X}(\mathbf Y)
}
\right].
\]
Since \(\widehat w_{\mathbf X}=1+\lambda \widehat g_{\mathbf X}\),
\[
\operatorname{Cov}
\!\left(
\widehat w_{\mathbf X}(\mathbf S_1),
\boldsymbol\Delta
\mid
\mathbf Y,\mathcal A_n
\right)
=
\lambda
\operatorname{Cov}
\!\left(
\widehat g_{\mathbf X}(\mathbf S_1),
\boldsymbol\Delta
\mid
\mathbf Y,\mathcal A_n
\right).
\]
Moreover, \(m_{\widehat w,\mathbf X}(\mathbf Y)\ge 1\). Hence
\[
\mathbb E
\!\left[
\bar B_{\mathrm{tilt}}(\widehat w)
\right]
\le
\lambda^2
\mathbb E
\!\left[
\left\|
\operatorname{Cov}
\!\left(
\widehat g_{\mathbf X}(\mathbf S_1),
\boldsymbol\Delta
\mid
\mathbf Y,\mathcal A_n
\right)
\right\|^2
\right]
\le
\lambda^2 C_{\mathrm{tilt}}.
\]
\end{proof}

We now combine the excess-risk and tilting-bias controls. To pass from the
pointwise closure in the main text to a full expected-regret bound, we use the
following uniform version of the coverage condition.

\begin{assumption}[Uniform coverage constant]
\label{ass:joint-DW-FM-coverage}
There exists a deterministic constant \(\bar C_{\widehat w}<\infty\) such that,
almost surely over the training procedure and for almost every context
\(\mathbf x\), Assumption~\ref{ass:DW-FM-coverage} holds at
\(\theta=\widehat\theta\) with constants \(A_{\mathbf x}\) and \(B_{\mathbf x}\)
satisfying
\[
A_{\mathbf x}B_{\mathbf x}
\le
\bar C_{\widehat w}.
\]
\end{assumption}

Assumption~\ref{ass:joint-DW-FM-coverage} is the empirical counterpart of
Assumption~\ref{ass:DW-FM-coverage}: it requires the product of the path-overlap
and sensitivity-coverage constants to be uniformly bounded for the learned
field \(\mathbf v_{\widehat\theta}\).

\begin{corollary}
\label{cor:DW-FM-finite-sample}
Under Assumptions~\ref{ass:bounded-DW-FM-loss},
\ref{ass:tilt-cov-plugin}, and~\ref{ass:joint-DW-FM-coverage},
\[
\mathbb E
\!\left[
\mathrm{Reg}_{\mathbf X}(\widehat\theta)
\right]
\le
2
\sqrt{
2\bar C_{\widehat w}
\left[
\mathbb E
\!\left[
\inf_{\theta\in\Theta}
\bar E_{\widehat w}(\mathbf v_\theta)
\right]
+
4\mathfrak R_n(\mathcal F_{\widehat w})
+
\eta
+
\lambda^2 C_{\mathrm{tilt}}
\right]
}.
\]
\end{corollary}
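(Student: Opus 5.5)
The plan is to chain the pointwise regret closure~\eqref{eq:pointwise-regret} from the proof of Theorem~\ref{thm:DW-FM-regret} with the excess-risk bound of Theorem~\ref{thm:DW-FM-excess-risk} and the tilting bound of Proposition~\ref{prop:tilting-bias-bound-plugin}. Two applications of Jensen's inequality move the expectation inside the square root.

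First, fix a realization of the training procedure, i.e.\ condition on \(\mathcal A_n\), so that \(\widehat w\) and \(\widehat\theta\) are fixed. By Assumption~\ref{ass:joint-DW-FM-coverage}, Assumption~\ref{ass:DW-FM-coverage} holds at \(\theta=\widehat\theta\) for almost every context \(\mathbf x\), with \(C_{\widehat w,\mathbf x}=A_{\mathbf x}B_{\mathbf x}\le\bar C_{\widehat w}\). The closure~\eqref{eq:pointwise-regret} then gives
\[
\mathrm{Reg}_{\mathbf x}(\widehat\theta)\le 2\sqrt{2\bar C_{\widehat w}\bigl(E_{\widehat w,\mathbf x}(\mathbf v_{\widehat\theta})+B_{\mathrm{tilt},\mathbf x}(\widehat w)\bigr)}.
\]
Here I implicitly carry along the regularity Assumptions~\ref{ass:path-adjoint-regularity}--\ref{ass:no-boundary-flux} for the learned field, as Theorem~\ref{thm:DW-FM-regret} requires. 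Next I take the expectation over a fresh test context \(\mathbf X\), still conditional on \(\mathcal A_n\). Since \(\sqrt{\cdot}\) is concave and \(\bar C_{\widehat w}\) is deterministic, Jensen yields
\[
\mathbb E_{\mathbf X}[\mathrm{Reg}_{\mathbf X}(\widehat\theta)\mid\mathcal A_n]\le 2\sqrt{2\bar C_{\widehat w}\bigl(\bar E_{\widehat w}(\mathbf v_{\widehat\theta})+\bar B_{\mathrm{tilt}}(\widehat w)\bigr)}.
\]

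Second, I take the full expectation over \(\mathcal A_n\) and apply Jensen again, which brings the outer expectation inside the root. It then remains to bound \(\mathbb E[\bar E_{\widehat w}(\mathbf v_{\widehat\theta})]\) and \(\mathbb E[\bar B_{\mathrm{tilt}}(\widehat w)]\). Theorem~\ref{thm:DW-FM-excess-risk} bounds the first by \(\mathbb E[\inf_\theta\bar E_{\widehat w}(\mathbf v_\theta)]+4\mathfrak R_n(\mathcal F_{\widehat w})+\eta\). Proposition~\ref{prop:tilting-bias-bound-plugin} bounds the second by \(\lambda^2C_{\mathrm{tilt}}\). Monotonicity of the square root then gives the stated inequality.

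The main obstacle is measure-theoretic bookkeeping rather than a hard estimate. Three points need care. Regret is a nonnegative measurable function of \((\mathcal A_n,\mathbf X)\), so the tower property applies. The test context \(\mathbf X\) is independent of \(\mathcal A_n\), so that \(\mathbb E_{\mathbf X}[\cdot\mid\mathcal A_n]\) really is the average of the pointwise closure. And the coverage constant must be uniform and deterministic; this is exactly why Assumption~\ref{ass:joint-DW-FM-coverage} is imposed, since a random \(C_{\widehat w,\mathbf X}\) could not be pulled out before applying Jensen. I would verify these first, noting also that the weight rule is treated as fixed relative to the FM tuples, via sample splitting as stipulated in this appendix, so that Theorem~\ref{thm:DW-FM-excess-risk} applies conditionally.
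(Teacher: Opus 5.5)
Your proposal is correct and follows the paper's proof step for step. You condition on $\mathcal A_n$, apply the pointwise closure \eqref{eq:pointwise-regret} at $\theta=\widehat\theta$ with $C_{\widehat w,\mathbf x}\le\bar C_{\widehat w}$, use Jensen over the test context and then over the training randomness, and finish with Theorem~\ref{thm:DW-FM-excess-risk} and Proposition~\ref{prop:tilting-bias-bound-plugin}. Your remark that Assumptions~\ref{ass:path-adjoint-regularity}--\ref{ass:no-boundary-flux} are needed for the closure is also accurate: the corollary's stated hypotheses omit them, but the paper's proof uses them implicitly.
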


\begin{proof}
Condition on \(\mathcal A_n\). Applying the pointwise regret inequality
\eqref{eq:pointwise-regret} at \(\theta=\widehat\theta\) gives, for almost every
\(\mathbf x\),
\[
\mathrm{Reg}_{\mathbf x}(\widehat\theta)
\le
2
\sqrt{
2C_{\widehat w,\mathbf x}
\left(
E_{\widehat w,\mathbf x}(\mathbf v_{\widehat\theta})
+
B_{\mathrm{tilt},\mathbf x}(\widehat w)
\right)
},
\]
where \(C_{\widehat w,\mathbf x}:=A_{\mathbf x}B_{\mathbf x}\). Using
\(C_{\widehat w,\mathbf x}\le \bar C_{\widehat w}\) and Jensen's inequality over
the fresh test context,
\[
\mathbb E_{\mathbf X}
\!\left[
\mathrm{Reg}_{\mathbf X}(\widehat\theta)
\mid
\mathcal A_n
\right]
\le
2
\sqrt{
2\bar C_{\widehat w}
\left(
\bar E_{\widehat w}(\mathbf v_{\widehat\theta})
+
\bar B_{\mathrm{tilt}}(\widehat w)
\right)
}.
\]
Taking full expectation over the training procedure and applying Jensen's
inequality once more,
\[
\mathbb E
\!\left[
\mathrm{Reg}_{\mathbf X}(\widehat\theta)
\right]
\le
2
\sqrt{
2\bar C_{\widehat w}
\,
\mathbb E
\!\left[
\bar E_{\widehat w}(\mathbf v_{\widehat\theta})
+
\bar B_{\mathrm{tilt}}(\widehat w)
\right]
}.
\]
Theorem~\ref{thm:DW-FM-excess-risk} controls the first term, and
Proposition~\ref{prop:tilting-bias-bound-plugin} controls the second term.
Therefore,
\[
\mathbb E
\!\left[
\bar E_{\widehat w}(\mathbf v_{\widehat\theta})
+
\bar B_{\mathrm{tilt}}(\widehat w)
\right]
\le
\mathbb E
\!\left[
\inf_{\theta\in\Theta}
\bar E_{\widehat w}(\mathbf v_\theta)
\right]
+
4\mathfrak R_n(\mathcal F_{\widehat w})
+
\eta
+
\lambda^2 C_{\mathrm{tilt}}.
\]
Substituting this bound proves the corollary.
\end{proof}
Corollary~\ref{cor:DW-FM-finite-sample} is a full-expectation finite-sample bound. The first term is the expected approximation error; it vanishes in the realizable case, namely when
there exists $\theta^\star\in\Theta$ such that
$\mathbf{v}_{\theta^\star}(\mathbf{Y})=\mathbf{u}_{\widehat w,\mathbf{X}}(\mathbf{Y})$
almost surely. It can be
$O(n^{-1/2})$ under standard sieve or growing-class approximation conditions. The second term is
the full expected Rademacher complexity of the weighted loss class. For bounded-complexity classes,
such as fixed-dimensional parametric models, finite-pseudodimension classes, or norm-controlled
neural-network classes, one typically has
\[
\mathfrak{R}_n(\mathcal{F}_{\widehat w})=O(n^{-1/2}).
\]
The optimization term is $O(n^{-1/2})$ whenever the empirical DW-FM objective is solved to accuracy
$\eta=O(n^{-1/2})$. The tilting term is $O(n^{-1/2})$ if the reweighting strength is chosen as
$\lambda_n=O(n^{-1/4})$. Therefore, under these conditions,
Corollary~\ref{cor:DW-FM-finite-sample} implies
\[
\mathbb{E}
\!\left[
\mathrm{Reg}_{\mathbf{X}}(\widehat\theta)
\right]
=
O(n^{-1/4}).
\]

\section{Experimental Details}
\label{app:exp detail}



\subsection{Synthetic Polynomial Portfolio Task}

\begin{itemize}
    \item \textbf{Context generation:} Context vectors $\mathbf{x} \in \mathbb{R}^{d_x}$ are sampled i.i.d.\ from a standard multivariate Gaussian $\mathcal{N}(\mathbf{0}, I_{d_x})$. The dimensionality $d_x$ corresponds to the number of covariates for the portfolio mapping.

    \item \textbf{Polynomial return map:} For each asset $i \in \{1,\dots,d\}$, the next-period return is generated via a polynomial function of the context:
    \begin{equation}
        f_i(\mathbf{x}) = \sum_{j=1}^{d_x} w_{ij} x_j^{\text{deg}}, 
        \quad \text{deg} \in \{2,4,6,8\}, \quad w_{ij} \sim \text{Uniform}[-1,1]
    \end{equation}
    where $\text{deg}$ is the polynomial degree sweep parameter, and $w_{ij}$ are randomly sampled coefficients fixed per experiment.

    \item \textbf{Stress interpolation:} To control the probability of high-risk events, returns are generated as a mixture between the baseline polynomial map (plus Gaussian residual) and heavy-tail stress:
    \begin{equation}
        s_i = (1-\lambda_\text{stress}) \cdot \big(f_i(\mathbf{x}) + \mathcal{N}(0, \sigma^2)\big) 
              + \lambda_\text{stress} \cdot \epsilon_i,
        \quad \lambda_\text{stress} \in [0.05,0.35]
    \end{equation}
    where $\sigma^2$ is the variance of the baseline Gaussian residual, and $\lambda_\text{stress}$ controls the frequency of stress-tail scenarios. Additive noise $\epsilon_i$ is drawn from a Student-t distribution with degrees of freedom $\nu=3$ and scale $\beta=0.02$ to simulate extreme risk events:
       $ \epsilon_i \sim t_{\nu=3}(\text{scale} = \beta)$.

    \item \textbf{Target outputs:} Next-period portfolio returns $\mathbf{s} \in \mathbb{R}^{d}$, with $d=10$ for standard semi-real portfolio evaluation.

    \item \textbf{Downstream solver:} Long-only portfolio optimization with mean+CVaR objective:
    \[
        \min_\mathbf{z} \mathbb{E}[-\mathbf{s}^\top \mathbf{z}] + \gamma \text{CVaR}_\alpha[-\mathbf{s}^\top \mathbf{z}] + \eta \|\mathbf{z}\|_2^2,
    \]
    subject to simplex constraints $0 \le z_i \le 0.30$, $\sum_i z_i = 1$. Parameters $\alpha=0.95$, $\gamma=2.0$, $\eta=1e-3$.

    \item \textbf{Scenario sampling:} For each context $\mathbf{x}$, $M=512$ scenarios are sampled using the above mixture to generate a distribution of potential returns for downstream optimization.

    \item \textbf{Evaluation metrics:} Regret, hardest-25\% context regret.
\end{itemize}

\subsection{Semi-Real Portfolio-CVaR Task}
\begin{itemize}
    \item \textbf{Data:} Ken French 10 Industry Portfolios returns + Fama-French daily factors. 99-dimensional context features constructed as follows:
    \begin{itemize}
        \item Lagged 1–5 days of each factor
        \item Rolling volatility estimates (20-day)
        \item Industry-specific lagged returns
        \item Standardized using training set mean/std only
    \end{itemize}
    \item \textbf{Target outputs:} Next-day returns $\mathbf{s} \in \mathbb{R}^{10}$.
    \item \textbf{Train/Validation/Test split:} Chronological, 70\% / 10\% / 20\%.
    \item \textbf{Downstream solver:} Long-only portfolio-CVaR optimization with $\gamma=2.0$, $\eta=1e-3$, $\text{CVaR}_{0.90}$.
    \item \textbf{Scenario sampling:} 512 scenarios per context via FM model.
    \item \textbf{Evaluation:} Regret, realized CVaR loss.
\end{itemize}


\subsection{PEMS-BAY Traffic Congestion Task}
\begin{itemize}
    \item \textbf{Data:} PEMS-BAY traffic sensors, 325 sensors, 5-minute frequency, chronological order maintained. Each context $\mathbf{x}_t \in \mathbb{R}^{325}$ represents the current traffic state across all sensors at time $t$.

    \item \textbf{Target outputs:} Next-step congestion load per sensor, $\mathbf{s}_t \in \mathbb{R}^{325}$, defined as
    \[
        s_{t,i} = \max(0, q_{40,i}^\text{train} - \text{speed}_{t,i}),
    \]
    where $q_{40,i}^\text{train}$ is the 40th percentile speed for sensor $i$ in the training set.

     \item \textbf{Train/Validation/Test split:} Chronological split: $70\%/10\%/20\%.$

    \item \textbf{Downstream solver:} CVaR-constrained allocation to minimize aggregate congestion across all sensors:
    \[
        \min_{\mathbf{z}_t} \mathbb{E}\big[ \mathbf{c}_t^\top \mathbf{z}_t \big] + \gamma \, \text{CVaR}_\alpha\big[ \mathbf{c}_t^\top \mathbf{z}_t \big]
    \]
    subject to linear convex constraints $\sum_i z_{t,i} = 1, 0 \le z_{t,i} \le u_i$, and $\mathbf{A} \mathbf{z}_t \le \mathbf{b}$,
    where $\mathbf{z}_t$ represents the allocation of mitigation resources (e.g., signal control adjustments, lane priorities), $\mathbf{c}_t$ is the congestion load vector, $\gamma$ is CVaR weight, $\alpha=0.95$, $u_i$ is per-sensor allocation upper bound, and $\mathbf{A}, \mathbf{b}$ encode additional linear feasibility constraints.

    \item \textbf{Solver implementation:} Uses CVXPY with a linear approximation of CVaR via auxiliary variables. Tolerance set to $10^{-6}$, maximum iterations 1000.
     \item \textbf{Scenario generation:} For DW-FM evaluation, $M=512$ predicted congestion scenarios per context are generated using model sampling.

    \item \textbf{Evaluation metrics:} Regret.

\end{itemize}

\subsection{Training / Model Hyperparameters}
\begin{itemize}
    \item \textbf{Flow Matching model:} 2-layer MLP, width 64, ReLU activation.
    \item \textbf{Optimizer:} Adam, learning rate $1\mathrm{e}{-3}$, weight decay $1\mathrm{e}{-4}$, batch size 256, training steps 400 per smoke experiment, full training steps 200k for main experiments.
    \item \textbf{FM sampling:} base distribution standard Gaussian, linear interpolation, ODE steps = 1.
    \item \textbf{DW-FM:} $\lambda$ grid $\{0, 0.001, 0.002, 0.005, 0.01, 0.02\}$, selected by validation regret on the validation split.
    \item \textbf{Reference decision:} $\hat {\mathbf{z}}_{\mathbf{x}}$ computed via frozen SAA solver; endpoint weights reweighted plug-in via Algorithm 1.
\end{itemize}

\subsection{Compute Resources}
\begin{itemize}
    \item Experiments run on NVIDIA GeForce RTX 4090 GPUs.
    \item Peak GPU memory per card $\approx$ 24GB.
    \item Controlled synthetic degree sweep smoke: 1–2 hours per $\lambda$ per degree.
    \item Semi-real portfolio-CVaR smoke: 2–3 hours per $\lambda$ on 1024 test contexts.
    \item PEMS-BAY traffic smoke: ~0.12–0.14s per context for K=100.
    \item Total aggregate GPU hours across all tasks: $\sim$ 300–400 GPU-hours.
    
\end{itemize}

\section{Limitations}
\label{app:limitations}
Our work has several limitations. First, the theoretical guarantees rely on regularity and coverage assumptions, including smooth transport paths, no-boundary-flux conditions, and path-overlap/sensitivity-coverage between the learned flow path and the interpolation distribution. These assumptions make the regret analysis tractable, but they may be violated in finite-sample training, under heavy-tailed data, or when the learned ODE path moves through regions poorly covered by the training interpolation distribution. Second, DW-FM currently relies on differentiable loss information with respect to the uncertain outcome in order to construct endpoint weights. For non-smooth, discrete, black-box, or simulator-based objectives, additional smoothing, surrogate gradients, or alternative weighting rules may be required.

Moreover, our empirical evaluation focuses on CVaR-based decision problems, with two
portfolio-CVaR benchmarks and one traffic-CVaR benchmark. These tasks support
the effectiveness of decision-weighted training in risk-sensitive CSO, but they
do not establish universal gains across all stochastic optimization problems.

\end{document}